\documentclass[10pt]{article} 
\usepackage[accepted]{tmlr}


\usepackage{amsmath,amsfonts,bm}









\def\eqref#1{equation~\ref{#1}}









\def\1{\bm{1}}










\DeclareMathAlphabet{\mathsfit}{\encodingdefault}{\sfdefault}{m}{sl}
\SetMathAlphabet{\mathsfit}{bold}{\encodingdefault}{\sfdefault}{bx}{n}













\DeclareMathOperator*{\argmin}{arg\,min}

\usepackage{hyperref}
\usepackage{url}

\usepackage{booktabs}       
\usepackage{amsfonts}       
\usepackage{nicefrac}       
\usepackage{microtype}      
\usepackage{multirow}
\usepackage{bbm}
\usepackage{natbib}
\usepackage{tikz}
\usepackage{amssymb}
\usepackage{adjustbox}
\usepackage{latexsym}
\usepackage{epsfig}
\usepackage{fancybox}
\usepackage{pstricks}
\usepackage{caption}
\usepackage{fancyhdr}
\usepackage{color}
\usepackage{wrapfig,lipsum}
\usepackage{cancel}
\usepackage{kotex}
\usepackage{framed}
\usepackage{MnSymbol}
\usepackage{amsmath,amsthm}
\usepackage{mathtools}
\usepackage[ruled,vlined]{algorithm2e}
\usepackage{enumitem}
\usepackage{footnote}
\usetikzlibrary{arrows}

\definecolor{reduction}{HTML}{ed6f63}
\definecolor{reg}{HTML}{DAA03D}
\definecolor{adv}{HTML}{76528B}
\definecolor{ftm}{HTML}{2BAE66}
\definecolor{darkgreen}{HTML}{476600}
\definecolor{darkpurple}{HTML}{3F0099}

\newtheorem{theorem}{Theorem}[section]
\newtheorem{proposition}[theorem]{Proposition}
\newtheorem{lemma}[theorem]{Lemma}

\newtheorem{definition}[theorem]{Definition}

\newtheorem{remark}[theorem]{Remark}






\newcommand{\bc}{\begin{center}}
\newcommand{\ec}{\end{center}}
\newcommand{\be}{\begin{equation}}
\newcommand{\ee}{\end{equation}}
\newcommand{\ba}{\begin{array}}
\newcommand{\ea}{\end{array}}
\newcommand{\bean}{\begin{eqnarray*}}
\newcommand{\eean}{\end{eqnarray*}}
\newcommand{\bea}{\begin{eqnarray}}
\newcommand{\eea}{\end{eqnarray}}
\newcommand{\ben}{\begin{enumerate}}
\newcommand{\een}{\end{enumerate}}
\newcommand{\bed}{\begin{itemize}}
\newcommand{\eed}{\end{itemize}}

\begingroup
  \renewcommand{\thefootnote}{\fnsymbol{footnote}}
\endgroup

\title{Fairness Through Matching}


\author{\name Kunwoong Kim \email kwkim.online@gmail.com \\
\addr Department of Statistics\\ Seoul National University
\AND
\name Insung Kong\footnotemark[1] \email insung.kong@utwente.nl \\
\addr Department of Applied Mathematics\\ University of Twente
\AND
\name Jongjin Lee \email jjlee\_.lee@samsung.com \\
\addr Samsung Research\\ 
\AND
\name Minwoo Chae \email mchae@postech.ac.kr \\
\addr Department of Industrial and Management Engineering\\ Pohang University of Science and Technology (POSTECH)
\AND
\name Sangchul Park \email parks@snu.ac.kr \\
\addr School of Law\\ Seoul National University
\AND
\name Yongdai Kim\footnotemark[2] \email ydkim0903@gmail.com \\
\addr Department of Statistics\\ Interdisciplinary Program in Artificial Intelligence\\ Seoul National University
    }



\begin{document}

\maketitle

\begin{abstract}
    Group fairness requires that different protected groups, characterized by a given sensitive attribute, receive equal outcomes overall.
    Typically, the level of group fairness is measured by the statistical gap between predictions from different protected groups.
    In this study, we reveal an implicit property of existing group fairness measures, which provides an insight into how the group-fair models behave.
    Then, we develop a new group-fair constraint based on this implicit property to learn group-fair models.
    To do so, we first introduce a notable theoretical observation: every group-fair model has an implicitly corresponding transport map between the input spaces of each protected group.
    Based on this observation, we introduce a new group fairness measure termed Matched Demographic Parity (MDP), which quantifies the averaged gap between predictions of two individuals (from different protected groups) matched by a given transport map.
    Then, we prove that any transport map can be used in MDP to learn group-fair models, and develop a novel algorithm called Fairness Through Matching (FTM)\footnote{The source code of FTM is publicly available at \textit{\url{https://github.com/kwkimonline/FTM}}.}, which learns a group-fair model using MDP constraint with an user-specified transport map.
    We specifically propose two favorable types of transport maps for MDP, based on the optimal transport theory, and discuss their advantages.
    Experiments reveal that FTM successfully trains group-fair models with certain desirable properties by choosing the transport map accordingly.
\end{abstract}

\begingroup
    \renewcommand{\thefootnote}{\fnsymbol{footnote}}
    \footnotetext[1]{This research was conducted while the author was affiliated with Seoul National University.}
    \footnotetext[2]{Corresponding author.}
\endgroup


\section{Introduction}\label{sec:intro}

Artificial Intelligence (AI) technologies based on machine learning algorithms have become increasingly prevalent as crucial decision-making tools across diverse areas, including credit scoring, criminal risk assessment, and college admissions.
However, when observed data contains unfair biases, the resulting trained models may produce discriminatory decisions \citep{calders2009building, feldman2015certifying, angwin2016machine, barocas2016big, https://doi.org/10.48550/arxiv.1610.07524, kleinberg2018algorithmic, mehrabi2019survey, https://doi.org/10.48550/arxiv.2111.11665}. 
For instance, several cases of unfair preferences favoring specific groups, such as white individuals or males, have been reported \citep{angwin2016machine, ingold2016amazon, Dua:2019}.
In response, there is a growing trend in non-discrimination laws that calls for the consideration of fair models \citep{Hellman2019MeasuringAF}.

Under this social circumstance, ensuring algorithmic fairness in AI-based decision-making has become a crucial mission.
Among several notions of algorithmic fairness, the notion of \textit{group fairness} is the most explored one, which requires that certain statistics of each protected group should be similar.
For example, the ratio of positive predictions should be similar across each protected group \citep{calders2009building, barocas2016big, zafar2017fairness, donini2018empirical, pmlr-v80-agarwal18a}.

Various algorithms have been proposed to learn models achieving group fairness. 
Existing methods for group fairness are roughly categorized into three: pre-processing, in-processing and post-processing.
Pre-processing approaches \citep{pmlr-v28-zemel13, feldman2015certifying, webster2018mind, xu2018fairgan, Madras2018LearningAF, creager2019flexibly, pmlr-v162-kim22b} aim to debias a given dataset, typically by learning fair representations whose distribution is independent of a given sensitive attribute.
The debiased data (or fair representation) is then used to learn models.
In-processing approaches \citep{kamishima2012fairness, goh2016satisfying, zafar2017fairness, pmlr-v80-agarwal18a, onconvexand, Cotter2019TwoPlayerGF, celis2019classification, zafar2019fairness, jiang2020wasserstein, KIM2022441} train models by minimizing a given objective function under a specified group fairness constraint.
Post-processing approaches \citep{kamiran2012decision, hardt2016equality, fish2016confidence, corbett2017algorithmic, pleiss2017fairness, chzhen2019leveraging, wei20a, jiang2020wasserstein} transform given prediction scores, typically provided by an unfair model, to satisfy a certain fairness level.

Most group-fair algorithms correspond to specific group fairness measures, typically defined by explicit quantities such as prediction scores and sensitive attributes.
For example, demographic parity \citep{calders2009building, feldman2015certifying, angwin2016machine} considers the gap between two protected groups in terms of the positive prediction ratio.

However, a shortcoming of such group fairness measures is that they only concern statistical disparities without accounting for implicit mechanisms about how a given model achieves group fairness.
Consequently, models can achieve high levels of group fairness in very undesirable ways (see Section \ref{sec:highcost} of Appendix for an example).
\citet{10.1145/2090236.2090255} also noted that focusing solely on group fairness can lead to issues such as subset targeting and the self-fulfilling prophecy.
These observations serve as the motivation of this study.
Moreover, our empirical investigations on real-world datasets reveal that unfairness on subsets can be observed in group-fair models learned by existing algorithms, which are designed to achieve group fairness merely (see Section \ref{sec:exps}).

In this paper, we first propose a new group fairness measure that reveals implicit behaviors of group-fair models.
Based on the proposed measure, we develop an in-processing algorithm to learn group-fair models 
with less undesirable properties (e.g., unfairness on subsets), that cannot be explored or controlled by existing fairness measures.

To do so, we begin by introducing a notable {mathematical} finding: every group-fair model implicitly corresponds to a transport map, which moves the measure of one protected group to another.
Building on this observation, we propose a new measure for group fairness called Matched Demographic Parity (MDP), which quantifies the average prediction gap between two individuals from different protected groups matched by a given transport map.
We further prove that the reverse of this finding also holds: any transport map can be used in MDP to learn a group-fair model.
Consequently, we develop an algorithm called Fairness Through Matching (FTM), designed to learn a group-fair model under a fairness constraint based on MDP with a given transport map.
FTM can provide group-fair models having specific desirable properties (e.g., higher fairness on subsets) by selecting a transport map accordingly.
Note that FTM is not designed to achieve the optimal fairness-prediction trade-off, rather, the focus is on mitigating undesirable properties when achieving group fairness (e.g., unfairness on subsets).

In contrast, existing algorithms that focus solely on group fairness may lack such flexibility.
In fact, FTM with a transport map that is not carefully selected could result in unreasonable group-fair models, even though group fairness is achieved.

Therefore, the key to effectively using FTM lies in selecting a good transport map.
To this end, we propose two specific options for the transport map: one is the optimal transport (OT) map in the input space, and the other is the OT map in the product space of input and output.
Each is designed to achieve specific goals.
For example, the former achieves higher fairness levels on subsets, while the latter yields better prediction performance and attains higher levels of equalized odds compared to the former.
Experiments on real benchmark datasets support our theoretical findings, showing that FTM successfully learns group-fair models with more desirable properties, than those learned by existing algorithms for group fairness.

\paragraph{Main contributions}
\begin{enumerate} 
    \item
    We introduce a notable mathematical observation: every group-fair model has a corresponding implicit transport map.
    Building on this finding, we present a novel measure of group fairness called \textbf{Matched Demographic Parity} (MDP).

    \item
    We prove that any transport map can be used in MDP to learn group-fair models. 
    Subsequently, we devise a novel algorithm called \textbf{Fairness Through Matching} (FTM), designed to find a group-fair model using a constraint based on MDP with a given transport map.
    We propose two favorable transport maps tailored for specific purposes.

    \item
    Experiments on benchmark datasets illustrate that FTM successfully learns group-fair models.
    Furthermore, we examine the benefits of the two proposed transport maps in learning more desirable group-fair models, compared to those learned by existing algorithms that focus solely on achieving group fairness.
\end{enumerate}

\section{Preliminaries}\label{sec:pre}

\subsection{Notations \& Problem setting}\label{sec:notation}

In this section, we outline the mathematical notations used throughout this paper.
We focus on binary classification task in this study.
We denote $\mathbf{X} \in \mathcal{X} \subset \mathbb{R}^{d}$ and $Y \in \mathcal{Y} = \{ 0, 1 \}$ as the $d$-dimensional input vector and the binary label, respectively.
Whenever necessary, we split $\mathcal{X},$ i.e., the domain of $\mathbf{X},$ with respect to $S$ and write $\mathcal{X}_{s}$ as the domain of $\mathbf{X} | S = s.$
For given $s\in \{0,1\},$ we let $s'=1-s.$
Assuming the pre-defined sensitive attribute is binary, we denote $S \in \{ 0, 1 \}$ as the binary sensitive attribute.
For a given $s \in \{ 0, 1 \},$ the realization of $S,$ we write $s' = 1 - s.$
For the probability distributions of these variables, let $\mathcal{P}$ and $\mathcal{P}_{\mathbf{X}}$ represent the joint distribution of $(\mathbf{X}, Y, S)$ and the marginal distribution of $\mathbf{X},$ respectively.
Furthermore, let $\mathcal{P}_{s} = \mathcal{P}_{\mathbf{X} | S=s}, s\in \{0,1\}$ be the conditional distributions of $\mathbf{X}$ given $S = s.$ 
We write $\mathbb{E}$ and $\mathbb{E}_{s}$ as the corresponding expectations of $\mathcal{P}$ and $\mathcal{P}_{s},$ respectively.
For observed data, denote $\mathcal{D} = \{(\mathbf{x}_i, y_i, s_i)\}_{i=1}^n$ as the training dataset, consisting of $n$ independent copies of the random tuple $(\mathbf{X}, Y, S) \sim \mathcal{P}.$
Let $\Vert \cdot \Vert_{p}$ denote the $L_{p}$ norm.
We also write the $L_{2}$ norm by $\Vert \cdot \Vert$, i.e., $\Vert \cdot \Vert = \Vert \cdot \Vert_{2}$ for simplicity.

We denote the prediction model as $f = f(\cdot, s), s \in \{ 0, 1 \},$ which is an estimator of $\mathcal{P}_{s}(Y = 1 | \mathbf{X} = \cdot)$, belonging to a given hypothesis class $\mathcal{F} \subset \{ f:\mathcal{X} \times \{ 0, 1 \} \rightarrow [0, 1] \}.$
Note that the output of $f$ is restricted to the interval $[0, 1],$ making $f$ bounded.
For simplicity, we sometimes write $f(\cdot, s) = f_{s}(\cdot)$ whenever necessary.
For given $f$ and $s \in \{0, 1\}$, we denote $\mathcal{P}_{f_{s}}$ as the conditional distribution of $f(\bm{X},s)$ given $S=s$.
Furthermore, let $C_{f_{s}} := \mathbb{I} (f(\cdot, s) \ge \tau)$ be the classification rule based on $f,$ where $\tau$ is a specific threshold (typically, $\tau = 0.5$).

\subsection{Measures for group fairness \& Definition of group-fair model}\label{sec:fairness}

In the context of group fairness, various measures have been introduced to quantify the gap between predictions of each protected group.
The original measure for Demographic Parity (DP), $ \Delta \textup{DP} (f) := \left\vert \mathcal{P}(C_{f_{0}}(\mathbf{X}) = 1 | S = 0) - \mathcal{P}(C_{f_{1}}(\mathbf{X}) = 1 | S = 1) \right\vert,$ has been initially considered in various studies \citep{calders2009building, feldman2015certifying, donini2018empirical, ADW19, zafar2019fairness}.
Its relaxed version, $ \Delta \overline{\textup{DP}} (f) := \left\vert \mathbb{E}(f_{0}(\mathbf{X})) | S = 0) - \mathbb{E}(f_{1}(\mathbf{X})) | S = 1) \right\vert,$ has also been explored widely \citep{Madras2018LearningAF, chuang2021fair, pmlr-v162-kim22b}.

However,  $\Delta \textup{DP} (f)$ has a limitation as it relies on a specific threshold $\tau$ \citep{Silvia_Ray_Tom_Aldo_Heinrich_John_2020}.
To overcome this issue, the concept of strong DP (which requires similarity in the distributions of prediction values of each protected group) along with several measures for quantifying the discrepancy between $\mathcal{P}_{f_{0}}$ and $\mathcal{P}_{f_{1}},$ has been considered \citep{pmlr-v115-jiang20a, NEURIPS2020_51cdbd26, Silvia_Ray_Tom_Aldo_Heinrich_John_2020, barata2021fair}.
$ \Delta \textup{WDP} (f) := \mathcal{W} (\mathcal{P}_{f_{0}}, \mathcal{P}_{f_{1}}), $
$ \Delta \textup{TVDP} (f) := \textup{TV} (\mathcal{P}_{f_{0}}, \mathcal{P}_{f_{1}}), $
and
$ \Delta \textup{KSDP} (f) := \textup{KS} (\mathcal{P}_{f_{0}}, \mathcal{P}_{f_{1}}) $
are the examples of the measure for strong DP.
Here, $\mathcal{W}, \textup{TV},$ and $\textup{KS}$ represent the Wasserstein distance, the Total Variation, and the Kolmogorov-Smirnov distance, respectively.
For given two probability distributions $\mathcal{Q}_{1}$ and $\mathcal{Q}_{2},$ the three distributional distances are defined as follows.
The (1-)Wasserstein distance is defined as
$\mathcal{W}(\mathcal{Q}_{1}, \mathcal{Q}_{2}) := \inf_{\gamma \in \Gamma(\mathcal{Q}_{1}, \mathcal{Q}_{2})} \mathbb{E}_{(x, y) \sim \gamma} \Vert x - y \Vert_{1},$
where $\Gamma(\mathcal{Q}_{1}, \mathcal{Q}_{2})$ is the set of joint probability distributions of marginals $\mathcal{Q}_{1}$ and $\mathcal{Q}_{2}.$
The Total Variation is defined as
$\textup{TV}(\mathcal{Q}_{1}, \mathcal{Q}_{2}) := \sup_{A \in \mathcal{A}} \vert \mathcal{Q}_{1} (A) - \mathcal{Q}_{2} (A) \vert,$
where $\mathcal{A}$ is the collection of measurable sets.
The Kolmogorov-Smirnov distance is defined as
$\textup{KS}(\mathcal{Q}_{1}, \mathcal{Q}_{2}) := \sup_{x \in \mathbb{R}} \vert F_{\mathcal{Q}_{1}} (x) - F_{\mathcal{Q}_{2}} (x) \vert,$
where $F_{\mathcal{Q}}(x) := \mathcal{Q}(\mathbf{X} \le x)$ is the cumulative distribution function of $\mathcal{Q}.$

Let $\Delta = \Delta (\cdot)$ be a given fairness measure.
A given model $f$ is said to be \textit{group-fair} (with level $\epsilon$) if it satisfies $\Delta (f) \le \epsilon.$
Furthermore, if $\Delta (f) = 0,$  we say $f$ is \textit{perfectly group-fair} (with respect to $\Delta$).

\subsection{Optimal transport}

The concept of the Optimal Transport (OT) provides an approach for geometric comparison between two probability measures.
For given two probability measures $\mathcal{Q}_{1}$ and $\mathcal{Q}_{2},$ a map $\mathbf{T}$ from $\textup{Supp}(\mathcal{Q}_{1})$ to $\textup{Supp}(\mathcal{Q}_{2})$ is called a \textit{transport map} from $\mathcal{Q}_{1}$ to $\mathcal{Q}_{2}$ if the push-forward measure $\mathbf{T}_{\#} \mathcal{Q}_{1}$ is equal to $\mathcal{Q}_{2}$ \citep{villani2008optimal}.
Here, the push-forward measure is defined by $\mathbf{T}_{\#} \mathcal{Q}_{1} (A) = \mathcal{Q}_{1}(\mathbf{T}^{-1}(A))$ for any measurable set $A.$
In other words, $\mathbf{T}_{\#} \mathcal{Q}_{1}$ is the measure of $\mathbf{T}(\mathbf{X}), \mathbf{X} \sim \mathcal{Q}_{1}.$
For given source and target distributions $\mathcal{Q}_{1}$ and $\mathcal{Q}_{2},$ the OT map from $\mathcal{Q}_{1}$ to $\mathcal{Q}_{2}$ is the optimal one among all transport maps from $\mathcal{Q}_{1}$ to $\mathcal{Q}_{2}.$
In this context, `optimal' means minimizing transport cost, such as $L_{p}$ distance in Euclidean space.

\citet{monge1781memoire} originally formulates this OT problem:
for given source and target distributions $\mathcal{Q}_{1}, \mathcal{Q}_{2}$ in $\mathbb{R}^{d}$ and a cost function $c$ (e.g., $L_{2}$ distance), the OT map from $\mathcal{Q}_{1}$ to $\mathcal{Q}_{2}$ is defined by the solution of
$\min_{ \mathbf{T}: \mathbf{T}_{\#} \mathcal{Q}_{1} = \mathcal{Q}_{2} } \mathbb{E}_{ \mathbf{X} \sim \mathcal{Q}_{1} } \left( c \left( \mathbf{X}, \mathbf{T} ( \mathbf{X} ) \right) \right).$
If both $\mathcal{Q}_{1}$ and $\mathcal{Q}_{2}$ are discrete with the same number of support points, the OT map exists as a one-to-one mapping.
For the case when $\mathcal{Q}_{1}$ and $\mathcal{Q}_{2}$ are discrete but have different numbers of supports, Kantorovich relaxed the Monge problem by seeking the optimal coupling between two distributions \citep{Kantorovich2006OnTT}.
The Kantorovich problem is formulated as
$\inf_{\pi \in \Pi(\mathcal{Q}_{1}, \mathcal{Q}_{2})} \mathbb{E}_{\mathbf{X}, \mathbf{Y} \sim \pi} \left( c(\mathbf{X}, \mathbf{Y}) \right),$
where $\Pi(\mathcal{Q}_{1}, \mathcal{Q}_{2})$ is the set of all joint measures of $\mathcal{Q}_{1}$ and $\mathcal{Q}_{2}.$
Note that this formulation can be also applied to the case of $\mathcal{Q}_{1}$ and $\mathcal{Q}_{2}$ with an identical number of support.
See Section \ref{appendix:kanto} of Appendix for more details about the Kantorovich problem.

Various feasible estimators for the OT map have been developed \citep{NIPS2013_af21d0c9, 10.5555/3157382.3157482}, and applied to diverse tasks such as domain adaptation \citep{10.1007/978-3-030-01225-0_28, pmlr-v89-forrow19a} and computer vision \citep{7053911, pmlr-v37-li15, salimans2018improving}, to name a few.

\subsection{Related works}\label{sec:related}

\paragraph{Algorithmic fairness}

\textit{Group fairness} is a fairness notion aimed at preventing discriminatory predictions over protected (demographic) groups divided by pre-defined sensitive attributes.
Among various notions of group fairness, Demographic Parity (DP) \citep{calders2009building, feldman2015certifying, ADW19, pmlr-v115-jiang20a, NEURIPS2020_51cdbd26} quantifies the statistical gap in predictions between two different protected groups.
Other measures, including Equal opportunity (Eqopp) and Equalized Odds (EO), consider groups conditioned on both the label and the sensitive attribute \citep{NIPS2016_9d268236}.
Various algorithms have been proposed to learn group-fair model satisfying these group fairness notions \citep{zafar2017fairness, donini2018empirical, pmlr-v80-agarwal18a, Madras2018LearningAF, zafar2019fairness, chuang2021fair, pmlr-v162-kim22b}.

\textit{Individual fairness}, initially introduced by \citet{10.1145/2090236.2090255}, is another fairness notion based on the philosophy of treating similar individuals similarly.
It is noteworthy that \citet{10.1145/2090236.2090255} highlighted that focusing solely on group fairness is not always a complete answer, pointing out issues such as subset targeting and the self-fulfilling prophecy.
Subsequent researches \citep{pmlr-v80-yona18a, Yurochkin2020Training, yurochkin2021sensei, https://doi.org/10.48550/arxiv.2110.13796} have been studied in both theories and methodologies.
However, individual fairness has two bottlenecks: (i) it strongly depends on the choice of the similarity metric, which hinders its practical applicability, and (ii) by itself, it does not ensure group fairness when the distributions of protected groups differ significantly.
See Section \ref{sec:compare} for detailed comparison between individual fairness and our proposed framework.

\textit{Counterfactual fairness}, which requires treating a counterfactual individual similarly to the original individual, has been studied by \citet{https://doi.org/10.48550/arxiv.1703.06856, https://doi.org/10.48550/arxiv.1802.08139, Kugelgen20, pmlr-v162-nilforoshan22a}.
Its application, however, is limited since it requires causal models, which are difficult to obtain only with observed data.
Instead of using graphical models to define counterfactuals, this paper suggests using a transport map from $\mathcal{X}_s$ to $\mathcal{X}_{s'}$ having certain desirable properties.
See Proposition \ref{prop:cf} in Section \ref{sec:choiceT-OT} for the relationship between counterfactual fairness and our proposed algorithm.

On the other hand, in presence of multiple sensitive attributes, the concept of \textit{subgroup fairness} can be considered \citep{kearns2018empirical, pmlr-v80-kearns18a, shui2022on, 10.1145/3531146.3533124, molina2022bounding, carvalho2022affirmative}, emphasizing the need for models that satisfy fairness for all subgroups defined over the multiple sensitive attributes. 
In addition, \citet{wachter2020bias, 10.1145/3461702.3462556, mougan2024beyond} have suggested that not only statistically equal outcome but also the notion of \textit{equal treatment}, i.e., treating individuals with equal reasons, should be considered in algorithmic fairness.


\paragraph{Fair representation learning (FRL)}

FRL algorithm aims at searching a fair representation space, in the sense that the conditional distributions of the encoded representation with respect to the sensitive attribute are similar \citep{pmlr-v28-zemel13}.
After learning the fair representation, FRL constructs a fair model by applying a supervised learning algorithm to the fair representation space.
Initiated by \citet{9aa5ba8a091248d597ff7cf0173da151}, various FRL algorithms have been developed \citep{Madras2018LearningAF, 10.1145/3278721.3278779} motivated by the adversarial-learning technique used in GAN \citep{NIPS2014_5ca3e9b1}.
See Section \ref{sec:compare} for the detailed comparison between FRL and our proposed framework.


\paragraph{Applications of the OT map to algorithmic fairness}

Several studies, including \citet{pmlr-v97-gordaliza19a, pmlr-v115-jiang20a, NEURIPS2020_51cdbd26, Silvia_Ray_Tom_Aldo_Heinrich_John_2020, buyl2022optimal}, have employed the OT map for algorithmic fairness.
\citet{pmlr-v97-gordaliza19a} introduced a fair representation learning method that aligns inputs from different protected groups using the OT map.
\citet{pmlr-v115-jiang20a, NEURIPS2020_51cdbd26, Silvia_Ray_Tom_Aldo_Heinrich_John_2020} proposed aligning prediction scores from different protected groups using the OT map or OT-based barycenter.
\citet{buyl2022optimal} developed a method that projects prediction scores onto a fair space by optimizing the projection through minimizing the transport cost calculated on all pairs of inputs.

Most of these algorithms (e.g., \citet{pmlr-v115-jiang20a, NEURIPS2020_51cdbd26, Silvia_Ray_Tom_Aldo_Heinrich_John_2020}) focus on applying the OT map in the prediction space, i.e., aligning two conditional distributions $\mathcal{P}_{f_{0}}$ and $\mathcal{P}_{f_{1}}.$
These methods fundamentally differ from our proposed approach, which focuses on applying the OT map in the input space.
On the other hand, \citet{pmlr-v97-gordaliza19a} and our approach are particularly similar in the sense that both apply the OT map on the input space.
See the detailed comparison between \citet{pmlr-v97-gordaliza19a} and our approach in Section \ref{sec:compare}.  

It is worth noting that our proposed algorithm becomes the first to define a group fairness measure based on the transport map in the input space and to propose reasonable choices for the transport map.

\section{Learning group-fair model through matching}\label{sec:measure}

The goal of this section is to explore and specify the correspondence between group-fair models and transport maps.
In Section \ref{sec:mdp}, we show that \textbf{\textit{every group-fair model has a corresponding implicit transport map}} in the input space that matches two individuals from different protected groups.
We then introduce a new fairness measure based on the correspondence.
In Section \ref{sec:improve}, we show the reverse, i.e., \textbf{\textit{any given transport map can be used to learn a group-fair model}}, then present our proposed algorithm for learning group-fair models under a fairness constraint based on a given transport map.

\subsection{Matched Demographic Parity (MDP)}\label{sec:mdp}

Proposition \ref{prop:perfect} below shows that for a given perfectly group-fair model $f$ (i.e., $\mathcal{P}_{f_{0}} = \mathcal{P}_{f_{1}}$ or equivalently $\Delta = 0$), there exists an corresponding transport map in the input space that matches two individuals from different protected groups.
Its proof is in Section \ref{sec:appen_pfs} of Appendix.
Throughout this section, we assume a condition (C): $\mathcal{P}_{s}, s=0,1,$ are absolutely continuous with respect to the Lebesgue measure.
This regularity condition is assumed to simplify the discussion, since it guarantees the existence of transport maps between two distributions.
See Proposition \ref{rmk:prop:gf_extension} in Section \ref{sec:appen_pfs} of Appendix, which presents a similar result for the case where $\mathcal{P}_{s}, s = 0, 1$ are discrete.
Let $\mathcal{T}_{s}^{\textup{trans}}$ be the set of all transport maps from $\mathcal{P}_s$ to $\mathcal{P}_{s'}.$
\begin{proposition}[Fair model $\Rightarrow$ Transport map: perfect fairness case]\label{prop:perfect}
    For any perfectly group-fair model $f,$ i.e., $\mathcal{P}_{f_{0}} = \mathcal{P}_{f_{1}},$
    there exists a transport map $\mathbf{T}_{s} \in \mathcal{T}_{s}^{\textup{trans}}$ satisfying
    $
    f \left( \mathbf{X}, s \right) = f \left( \mathbf{T}_{s} (\mathbf{X}), s' \right), a.e.
    $
\end{proposition}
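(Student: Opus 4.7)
The plan is to construct $\mathbf{T}_s$ fiber-wise by matching individuals that share the same prediction value. The identity $\mathcal{P}_{f_0}=\mathcal{P}_{f_1}$ allows us to decompose each $\mathcal{P}_s$ along the level sets of $f_s$ using a common mixing measure; we then patch fiber-wise transport maps into a single global transport map that is constant on the prediction value by construction.

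First, I would invoke the disintegration theorem on the standard Borel space $\mathcal{X}$ with respect to the measurable map $f_s:\mathcal{X}_s\to[0,1]$ to write
\[
\mathcal{P}_s=\int_{[0,1]}\mathcal{P}_s^{\,t}\,d\mathcal{P}_{f_s}(t),\qquad \mathcal{P}_{s'}=\int_{[0,1]}\mathcal{P}_{s'}^{\,t}\,d\mathcal{P}_{f_{s'}}(t),
\]
where $\mathcal{P}_s^{\,t}$ is supported on the level set $L_s^t:=f_s^{-1}(t)\subset\mathcal{X}_s$ (and likewise for $s'$). The perfect-fairness hypothesis $\mathcal{P}_{f_0}=\mathcal{P}_{f_1}$ means that both disintegrations use the same mixing measure on $[0,1]$, which is exactly what allows a fiber-by-fiber construction to glue into a global transport map from $\mathcal{P}_s$ to $\mathcal{P}_{s'}$.

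Next, for $\mathcal{P}_{f_s}$-almost every $t\in[0,1]$, I would produce a measurable map $\mathbf{T}_s^{\,t}:L_s^t\to L_{s'}^t$ pushing $\mathcal{P}_s^{\,t}$ to $\mathcal{P}_{s'}^{\,t}$. Since each $L_s^t$ is a Borel subset of $\mathbb{R}^d$ (hence a standard Borel space) and condition (C) renders $\mathcal{P}_s^{\,t}$ non-atomic on the fibers where the fiber measure is nondegenerate (the Lebesgue density of $\mathcal{P}_s$ passes to a non-atomic conditional on almost every level set), the existence of such a $\mathbf{T}_s^{\,t}$ follows from the standard Borel-isomorphism/measure-preserving map result: any non-atomic Borel probability measure on a standard Borel space admits a measurable map onto any other Borel probability measure on a standard Borel space. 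A measurable-selection argument (e.g.\ via the Kuratowski--Ryll-Nardzewski theorem applied to the multifunction $t\mapsto\{\text{admissible }\mathbf{T}_s^{\,t}\}$) guarantees we can choose the family $\{\mathbf{T}_s^{\,t}\}_t$ jointly measurably in $(t,x)$.

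Finally, I would define the global map pointwise by
\[
\mathbf{T}_s(\mathbf{X})\;:=\;\mathbf{T}_s^{\,f(\mathbf{X},s)}(\mathbf{X}).
\]
The identity $f(\mathbf{T}_s(\mathbf{X}),s')=f(\mathbf{X},s)$ is then immediate, since $\mathbf{T}_s^{\,t}$ takes values in $L_{s'}^t=\{x:f(x,s')=t\}$. The push-forward relation $\mathbf{T}_{s\#}\mathcal{P}_s=\mathcal{P}_{s'}$ follows by integrating over $t$: for any measurable $A\subset\mathcal{X}_{s'}$,
\[
\mathbf{T}_{s\#}\mathcal{P}_s(A)=\int \mathbf{T}_{s\#}^{\,t}\mathcal{P}_s^{\,t}(A)\,d\mathcal{P}_{f_s}(t)=\int \mathcal{P}_{s'}^{\,t}(A)\,d\mathcal{P}_{f_{s'}}(t)=\mathcal{P}_{s'}(A),
\]
using $\mathcal{P}_{f_0}=\mathcal{P}_{f_1}$ in the middle step. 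The main obstacle is the technical bookkeeping of the measurable selection and the non-atomicity of the fiber measures $\mathcal{P}_s^{\,t}$; once condition (C) is leveraged to guarantee non-atomic fibers (on a set of full $\mathcal{P}_{f_s}$-measure), the rest of the argument is a routine disintegration computation.
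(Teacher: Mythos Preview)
Your disintegration-and-gluing strategy is genuinely different from the paper's route. The paper does not argue directly at all: it simply applies Theorem~\ref{prop:gf} with $\delta=0$, which yields $\Delta\textup{MDP}(f,\mathbf{T}_s)=0$ and hence pointwise equality a.e. The construction inside Theorem~\ref{prop:gf} works on full-dimensional bands $f_s^{-1}\big((v_{k-1},v_k]\big)\subset\mathcal{X}_s$ of positive $\mathcal{P}_s$-mass, where condition~(C) gives absolute continuity and Brenier's theorem furnishes a transport map; it never conditions on a single level set.

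Your argument has a real gap at the claim that condition~(C) makes the fiber measures $\mathcal{P}_s^{\,t}$ non-atomic. Absolute continuity of $\mathcal{P}_s$ with respect to Lebesgue on $\mathbb{R}^d$ says nothing about the conditional law on a level set of an arbitrary measurable $f_s$: the level set is typically Lebesgue-null, and the disintegrated measure inherits no regularity from the ambient density. Concretely, take $d=1$, $\mathcal{P}_0=\mathcal{P}_1=\mathrm{Unif}[0,1]$, $f_0(x)=x$, and $f_1(x)=2\min(x,1-x)$, so that $\mathcal{P}_{f_0}=\mathcal{P}_{f_1}=\mathrm{Unif}[0,1]$. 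Then every $\mathcal{P}_0^{\,t}=\delta_t$ is a Dirac, while $\mathcal{P}_1^{\,t}=\tfrac12\delta_{t/2}+\tfrac12\delta_{1-t/2}$; no deterministic map can push a Dirac onto a genuine two-point measure, so your fiber-wise construction of $\mathbf{T}_0$ fails outright in this example (the same obstruction arises in higher dimensions whenever $f_s$ is a Borel injection). A secondary issue is that the measurable-selection step needs more than a citation of Kuratowski--Ryll-Nardzewski: you would have to verify that the multifunction $t\mapsto\{\text{admissible }\mathbf{T}_s^{\,t}\}$ is weakly measurable with nonempty closed values in some Polish space of maps, and the set of measure-preserving maps between two fibers carries no obvious topology making this routine. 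The paper's band-based construction sidesteps both difficulties by keeping every transport problem on subsets of $\mathcal{X}_s$ that carry positive $\mathcal{P}_s$-mass.
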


The key implication of this {mathematical} proposition is that all perfectly group-fair models are not the same and the differences can be identified by their corresponding transport maps.
We can similarly define a transport map corresponding to a given not-perfectly group-fair model (i.e., $\Delta > 0$), by using a novel fairness measure termed \textbf{Matched Demographic Parity} (MDP) defined as below.

\begin{definition}[Matched Demographic Parity]
    For a given model $f \in \mathcal{F}$ and a transport map $\mathbf{T}_{s} \in \mathcal{T}_{s}^\textup{trans},$
    the measure for MDP is defined as
    \begin{equation}\label{eq:mdp}
        \Delta \textup{MDP} (f, \mathbf{T}_{s}) := \mathbb{E}_{s} \left\vert f(\mathbf{X}, s) - f(\mathbf{T}_{s}(\mathbf{X}), s') \right\vert.
    \end{equation}
\end{definition}

The idea behind MDP is that two individuals from different protected groups are matched by $\mathbf{T}_s,$ and $\Delta \textup{MDP} (f, \mathbf{T}_{s})$ quantifies the similarity between the predictions of these two matched individuals. 
Subsequently, Theorem \ref{prop:gf} below presents a relaxed version of Proposition \ref{prop:perfect}, showing that any group-fair model $f$ has a transport map $\mathbf{T}_s$ such that $\Delta \textup{MDP} (f, \mathbf{T}_{s})$ is small.
Refer to Section \ref{sec:appen_pfs} of Appendix for the proof of Theorem \ref{prop:gf} and see Figure \ref{fig:mdp} for the illustration of MDP.

\begin{theorem}[Fair model $\Rightarrow$ Transport map: relaxed fairness case]\label{prop:gf}
    Fix a fairness level $\delta \ge 0.$
    For any given group-fair model $f$ such that $\Delta \textup{TVDP}(f) \le \delta,$
    there exists a transport map $\mathbf{T}_{s} \in \mathcal{T}_{s}^{\textup{trans}}$ satisfying
    $\Delta \textup{MDP}(f, \mathbf{T}_{s}) \le 2 \delta.$
\end{theorem}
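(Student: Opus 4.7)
The strategy is to lift a near-diagonal coupling on prediction scores to a transport map on input spaces, thereby generalizing Proposition \ref{prop:perfect}. In the perfect case ($\delta = 0$), $\mathcal{P}_{f_0} = \mathcal{P}_{f_1}$ and the identity coupling on prediction scores lifts to a transport map on which $f$ agrees a.e.; for $\delta > 0$ the coupling can only be approximately diagonal, and the slack in the diagonal is exactly what the MDP cost pays.

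My first step is to invoke the coupling representation of total variation: since $\Delta\textup{TVDP}(f) = \textup{TV}(\mathcal{P}_{f_0}, \mathcal{P}_{f_1}) \le \delta$, there exists a coupling $\gamma$ on $[0,1]^2$ with marginals $\mathcal{P}_{f_0}$ and $\mathcal{P}_{f_1}$ satisfying $\gamma\{(u,v): u \ne v\} \le \delta$. Because $f$ is $[0,1]$-valued, $\int |u-v|\,d\gamma(u,v) \le \delta$, and we will see that the constant $2$ in the statement absorbs conventions in TV and leaves room in the bound. Next, I disintegrate the input distributions along the level sets of $f(\cdot, s)$, writing $\mathcal{P}_s = \int \mathcal{P}_s^{|u}\, d\mathcal{P}_{f_s}(u)$ and $\mathcal{P}_{s'} = \int \mathcal{P}_{s'}^{|v}\, d\mathcal{P}_{f_{s'}}(v)$; condition (C) makes $\mathcal{P}_s, \mathcal{P}_{s'}$ absolutely continuous, and in particular the conditional slices $\mathcal{P}_s^{|u}$ are almost-surely non-atomic Polish probability measures.

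Given the disintegration and the coupling, I define $\mathbf{T}_s$ fibrewise. For $\mathcal{P}_{f_s}$-a.e. $u$, take the conditional $\gamma(\cdot\mid u)$ on $[0,1]$, form the target mixture $\mu_u := \int \mathcal{P}_{s'}^{|v}\, d\gamma(v\mid u)$, and pick any measurable map $\mathbf{T}_s^{(u)}: \mathcal{X}_s \to \mathcal{X}_{s'}$ that pushes $\mathcal{P}_s^{|u}$ to $\mu_u$ (such a map exists by the standard Borel isomorphism / Rokhlin disintegration argument, since both measures are non-atomic on a Polish space). Glue these slice maps into a single $\mathbf{T}_s$ by setting $\mathbf{T}_s(\mathbf{x}) = \mathbf{T}_s^{(f(\mathbf{x}, s))}(\mathbf{x})$. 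Then
\[
(\mathbf{T}_s)_\# \mathcal{P}_s = \int \mu_u\, d\mathcal{P}_{f_s}(u) = \int \mathcal{P}_{s'}^{|v}\, d\gamma_2(v) = \int \mathcal{P}_{s'}^{|v}\, d\mathcal{P}_{f_{s'}}(v) = \mathcal{P}_{s'},
\]
so $\mathbf{T}_s \in \mathcal{T}_s^{\textup{trans}}$. Moreover, by construction the pushforward of $\mathcal{P}_s$ under $\mathbf{x} \mapsto (f(\mathbf{x}, s), f(\mathbf{T}_s(\mathbf{x}), s'))$ is exactly $\gamma$, whence
\[
\Delta\textup{MDP}(f, \mathbf{T}_s) = \int |u-v|\, d\gamma(u,v) \le 2\delta,
\]
where the factor of $2$ comes from bounding $|u-v| \le 1$ on the support together with the coupling estimate (loose versions of this bound all deliver $O(\delta)$; I would use a slightly padded constant so the statement covers any reasonable normalization of TV).

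The main obstacle is step three, namely making the fibrewise construction genuinely yield a measurable map rather than just a Markov kernel. This is where condition (C) earns its keep: absolute continuity of $\mathcal{P}_s$ propagates, via the coarea-style disintegration with respect to $f(\cdot, s)$, to non-atomicity of $\mathcal{P}_s^{|u}$ for a.e. $u$, and Polish/non-atomic probability spaces admit Borel isomorphisms onto $[0,1]$ with Lebesgue measure. Composing these isomorphisms with the analogous construction on the target side provides measurable choices of $\mathbf{T}_s^{(u)}$, and a measurable-selection/gluing argument produces a global Borel map $\mathbf{T}_s$. Once this measure-theoretic plumbing is in place, the push-forward and MDP bound are routine integrations against $\gamma$.
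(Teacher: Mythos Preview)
Your overall strategy—lift a near-diagonal coupling of the score distributions to a transport map on inputs via disintegration—is natural and, if it worked, would actually yield the tighter bound $\Delta\textup{MDP}\le\delta$ rather than $2\delta$. The paper takes a different, more constructive route: it partitions the score interval $[0,1]$ into subintervals of length at most $\delta$, restricts $\mathcal{P}_s$ to the preimages of these intervals (which are full-measure pieces and therefore inherit absolute continuity from condition~(C)), applies Brenier's theorem on each piece, and handles the residual mass (of size $\le\delta$) with an arbitrary transport. The two $\delta$'s in the paper's $2\delta$ come from the interval width and the leftover mass, respectively; your single $\delta$ would come purely from the off-diagonal mass of the maximal coupling.

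However, there is a genuine gap in your lifting step. The assertion that ``absolute continuity of $\mathcal{P}_s$ propagates, via the coarea-style disintegration with respect to $f(\cdot,s)$, to non-atomicity of $\mathcal{P}_s^{|u}$ for a.e.\ $u$'' is false without further regularity on $f$. Take $d=1$, $\mathcal{X}_0=\mathcal{X}_1=[0,1]$ with Lebesgue measure, $f(\mathbf{x},0)=\mathbf{x}$ and $f(\mathbf{x},1)=2\mathbf{x}\bmod 1$. Both $\mathcal{P}_{f_0}$ and $\mathcal{P}_{f_1}$ are uniform, so $\textup{TV}=0$ and the maximal coupling is the diagonal. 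The conditional $\mathcal{P}_0^{|u}=\delta_u$ is a Dirac mass, while your target $\mu_u=\mathcal{P}_1^{|u}=\tfrac12(\delta_{u/2}+\delta_{(u+1)/2})$ is not, so no deterministic map from $\mathcal{P}_0^{|u}$ to $\mu_u$ exists and the fibrewise construction breaks. The same obstruction arises in any dimension whenever $f(\cdot,s)$ is injective on a set of positive $\mathcal{P}_s$-measure, which is not ruled out by (C). The paper sidesteps exactly this issue by conditioning on preimages of \emph{intervals} of positive length rather than on exact level sets: a normalized restriction of an absolutely continuous measure to a positive-measure set is again absolutely continuous, so Brenier applies piecewise. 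Your argument can be repaired by the same thickening (at the cost of recovering only $2\delta$), or by a more delicate use of the \emph{global} non-atomicity of $\mathcal{P}_s$ to supply auxiliary randomness that turns the Kantorovich coupling into a Monge map; but as written, the fibrewise Borel-isomorphism step does not go through.
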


\begin{figure}[h]
    \centering
    \includegraphics[width=0.36\textwidth]{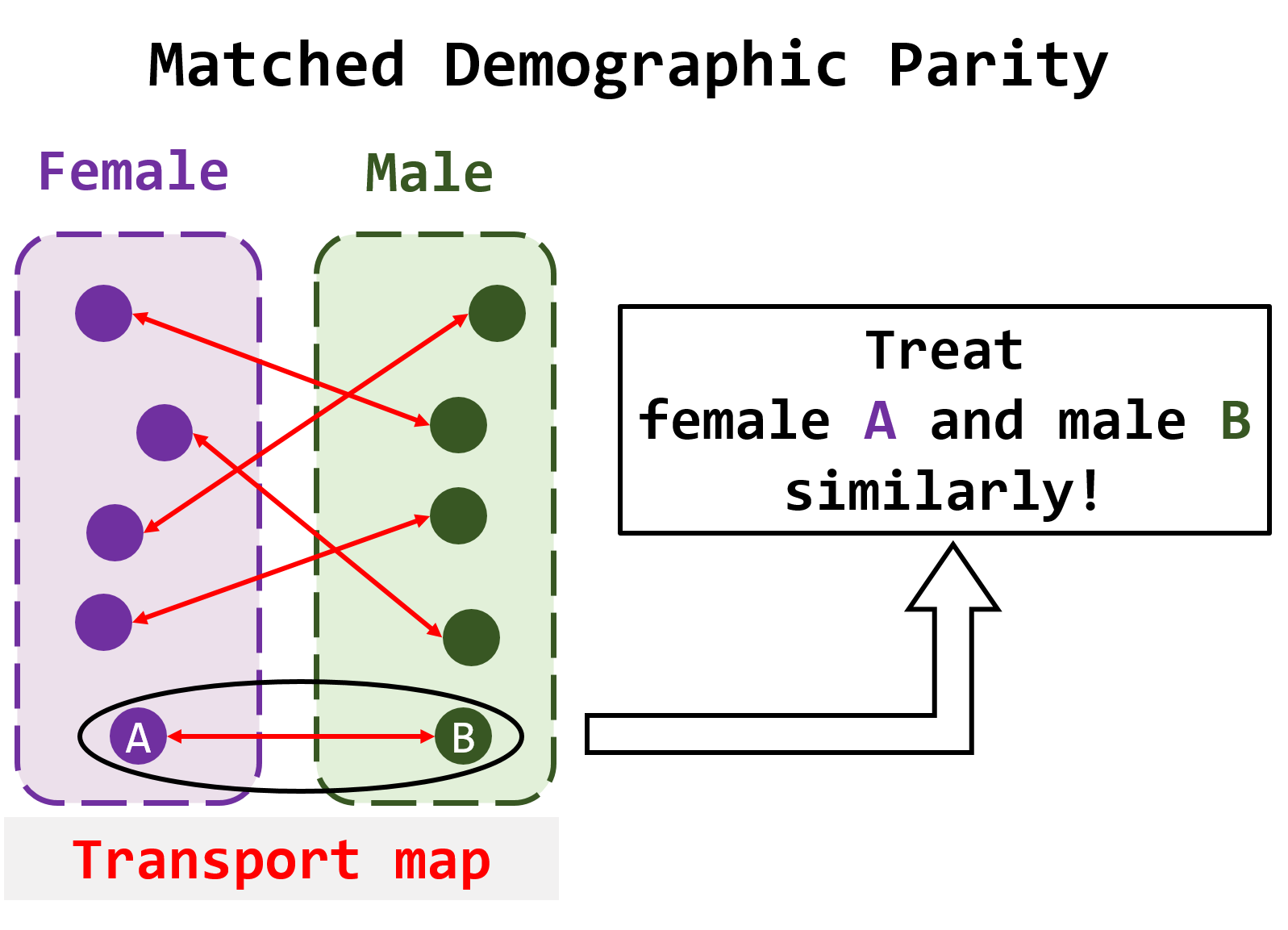}
    \caption{
    Simplified illustration of MDP.
    Once two individuals \textcolor{darkpurple}{A} and \textcolor{darkgreen}{B} are matched (\textcolor{red}{$\leftrightarrow$}), a model treats the pair of matched individuals \textcolor{darkpurple}{A} and \textcolor{darkgreen}{B} similarly (as well as all the other pairs).
    This implicit mechanism contributes to making the model fair.
    }
    \label{fig:mdp}
    \vskip -0.1in
\end{figure}

We define the \textbf{{fair matching function}} for a given $f$ as the transport map that attains the infimum of equation (\ref{eq:mdp}), as formulated in Definition \ref{def:corr_match_f} below.
{The term `fair' is used because MDP is minimized by this transport map.}
Through investigating the {fair matching function}, we can understand the mechanism behind how a group-fair model behaves.
That is, the {fair matching function} specifically reveals which pairs of individuals from two protected groups are treated similarly by the given group-fair model.

\begin{definition}[{Fair matching function} of $f$]\label{def:corr_match_f}
    For a given $f,$ denote
    $
    \mathbf{T}_{s}^{f} := \argmin_{ \mathbf{T}_{s} \in \mathcal{T}_{s}^{\textup{trans}} } \Delta \textup{MDP} (f, \mathbf{T}_{s}),
    s \in \{0, 1\}.
    $
    For
    $\hat{s} := \argmin_{s \in \{ 0,1 \}} \Delta \textup{MDP} (f, \mathbf{T}^f_{s})$,
    the {fair matching function} of $f$ is defined as $\mathbf{T}^{f} := \mathbf{T}_{\hat{s}}^{f}$.
\end{definition}

Note that the existence of this {fair matching function} is guaranteed by Brenier's theorem \citep{villani2008optimal, 10.1214/20-AOS1997}.
To be specific, since $\mathcal{P}_{s}, s \in \{0, 1\}$ are absolutely continuous by (C), and the cost function in MDP, i.e., $c(\mathbf{x}, \mathbf{y}) := \vert f(\mathbf{x}, s) - f(\mathbf{y}, s') \vert,$ is lower semi-continuous and bounded from below,
the minimizer of $\Delta \textup{MDP} (f, \mathbf{T}_{s})$ uniquely exists.

\paragraph{Practical computation of the {fair matching function}}
In practice, we estimate the fair matching function using the observed data $\mathcal{D} = \{(\mathbf{x}_i, y_i, s_i)\}_{i=1}^n.$
Let $\mathcal{D}_{0} = \{ \mathbf{x}_{i} : s_{i} = 0 \} = \{ \mathbf{x}_{i}^{(0)} \}_{i=1}^{n_{0}}$ and $\mathcal{D}_{1} = \{ \mathbf{x}_{j} : s_{j} = 1 \} = \{ \mathbf{x}_{j}^{(1)}\}_{j=1}^{n_{1}}$ be the set of inputs given the sensitive attribute, where $n_{0} + n_{1} = n.$
We here introduce practical methods for computing the fair matching function in Definition \ref{def:corr_match_f}, by considering two cases:

\begin{enumerate}
    \item 
    Case of $n_{0} = n_{1}$:
    When the sizes of two protected groups are equal ($n_{0} = n_{1}$), we can easily find the fair matching function by quantile matching.
    We first sort the scores in $\{ f_{s}(\mathbf{x}) \}_{\mathbf{x} \in \mathcal{D}_{s}}$ for each group $s\in \{0,1\}.$
    Then, we match the individuals having the same rank (i.e., quantile) in each set, thereby obtaining the fair matching function of $f.$
    This straightforward procedure is theoretically guaranteed by the definition of 1-Wasserstein distance, which is calculated by quantile matching \citep{rachev1998mass, NEURIPS2020_51cdbd26, pmlr-v115-jiang20a}.
    We formally present this procedure in Proposition \ref{prop:quantile_match} below.
    Its proof is provided in Section \ref{sec:appen_pfs} of Appendix.
    Let $F_{s}$ represents the cumulative distribution function of $ f_{s}(\mathbf{x}), \mathbf{x} \in {\mathcal{D}}_{s}.$ 
    For technical simplicity, we assume that there is no tie in $\{ f_{s}(\mathbf{x}): \mathbf{x} \in {\mathcal{D}}_{s}\}.$ 
    \begin{proposition}\label{prop:quantile_match}
        Let $\mathbf{T}^f$ be the {fair matching function} of $f$ on $\mathcal{D}_0$ and $\mathcal{D}_1$
        (where the empirical distributions with respect to $\mathcal{D}_0$ and $\mathcal{D}_1$ are used in Definition \ref{def:corr_match_f}).
        Then, the matched individual $ \mathbf{T}^{f} (\mathbf{x}) $ of any $\mathbf{x} \in {\mathcal{D}}_{s}$ is obtained by
        $ \mathbf{T}^{f}(\mathbf{x}) = f_{s'}^{-1} \circ F_{s'}^{-1} \circ F_{s} \circ f_{s}(\mathbf{x}). $
    \end{proposition}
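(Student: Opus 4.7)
The plan is to reduce the minimization in Definition \ref{def:corr_match_f} to a one-dimensional optimal transport problem on the scalar scores $\{f_s(\mathbf{x})\}_{\mathbf{x}\in \mathcal{D}_s}$, and then invoke the well-known fact that one-dimensional $W_1$ is attained by sorted (quantile) matching.

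First, I would observe that when $n_0 = n_1$, any transport map between the empirical distributions on $\mathcal{D}_0$ and $\mathcal{D}_1$ (each giving uniform mass $1/n_s$ to its points) must be a bijection from $\mathcal{D}_s$ onto $\mathcal{D}_{s'}$, i.e.\ an assignment permutation $\sigma$. Writing $\mathcal{D}_s = \{\mathbf{x}_i^{(s)}\}$ and $\mathcal{D}_{s'} = \{\mathbf{x}_j^{(s')}\}$, the MDP objective becomes
\begin{equation*}
\Delta\mathrm{MDP}(f, \mathbf{T}_s) \;=\; \frac{1}{n_s}\sum_{i=1}^{n_s}\bigl| f_s(\mathbf{x}_i^{(s)}) - f_{s'}(\mathbf{x}_{\sigma(i)}^{(s')}) \bigr|,
\end{equation*}
which depends on $\sigma$ only through the induced coupling between the two scalar multisets $A_s := \{f_s(\mathbf{x}_i^{(s)})\}$ and $A_{s'} := \{f_{s'}(\mathbf{x}_j^{(s')})\}$. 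Hence the problem reduces to minimizing the empirical $L_1$ matching cost between $A_s$ and $A_{s'}$ on $\mathbb{R}$.

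Next, I would invoke the classical fact for one-dimensional $W_1$ (see e.g.\ \citet{rachev1998mass}): for two empirical measures on $\mathbb{R}$ with equal mass at each point, the $L_1$-optimal bijection matches the $k$-th smallest value to the $k$-th smallest value. Equivalently, the optimal map is $u \mapsto F_{s'}^{-1}\circ F_s(u)$ where $F_s, F_{s'}$ are the empirical CDFs of $A_s, A_{s'}$. Under the no-tie assumption these CDFs are strictly increasing on their supports, and $f_{s'}: \mathcal{D}_{s'} \to A_{s'}$ is a bijection, so $f_{s'}^{-1}$ is well defined on $A_{s'}$. Composing, the fair matching function of $f$ satisfies
\begin{equation*}
\mathbf{T}^f(\mathbf{x}) \;=\; f_{s'}^{-1}\circ F_{s'}^{-1}\circ F_s\circ f_s(\mathbf{x}),
\end{equation*}
as claimed. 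A final short check: by the bijectivity of $f_{s'}$ and $F_{s'}^{-1}\circ F_s$ on the relevant supports, this $\mathbf{T}^f$ is indeed a bijection from $\mathcal{D}_s$ to $\mathcal{D}_{s'}$, hence a legitimate transport map between the two empirical measures, so it lies in the feasible set.

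The only delicate point, which I expect to be the main obstacle to state cleanly, is justifying the reduction in the first step without losing any feasible transport maps: one must note that since the cost $|f_s(\mathbf{x}) - f_{s'}(\mathbf{y})|$ depends on $(\mathbf{x},\mathbf{y})$ only through the scalar pair $(f_s(\mathbf{x}), f_{s'}(\mathbf{y}))$, the MDP infimum over bijections $\sigma$ on $\mathcal{D}_s \to \mathcal{D}_{s'}$ is identical to the infimum of the $L_1$ matching cost over bijections between the multisets $A_s$ and $A_{s'}$. The no-tie assumption then lets us pull the optimal scalar matching back uniquely through $f_s$ and $f_{s'}$, yielding the stated closed-form composition.
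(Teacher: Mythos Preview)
Your proposal is correct and follows essentially the same approach as the paper: both reduce the MDP minimization over bijections $\mathcal{D}_s\to\mathcal{D}_{s'}$ to a one-dimensional $L_1$ matching problem on the score sets, invoke the classical quantile-matching characterization of $W_1$ (citing \citet{rachev1998mass}), and then compose with $f_{s'}^{-1}$ under the no-tie assumption to obtain the stated formula. If anything, your write-up is somewhat more explicit than the paper's about why the reduction to scalars loses no feasible maps and why the resulting $\mathbf{T}^f$ is a bona fide bijection.
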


    \item 
    Case of $n_{0} \neq n_{1}$:
    When the sizes differ, we can consider a stochastic fair matching function (a stochastic transport map that minimizes $\Delta \textup{MDP}$), which matches individuals with probability, not deterministically.
    In fact, a stochastic transport map corresponds to a joint distribution between two protected groups, i.e., a stochastic transport map can be defined by a joint distribution, as follows:
    
    Denote $\mathbb{Q}$ as a joint distribution between $\mathcal{D}_{0}$ and $\mathcal{D}_{1},$ and let $\mathbf{X}_{0}$ and $\mathbf{X}_{1}$ be the random variables following the empirical distributions on $\mathcal{D}_{0}$ and $\mathcal{D}_{1},$ respectively.
    For a given joint distribution $\mathbb{Q},$ the stochastic transport map $\mathbf{T}_{s}$ (corresponding to the $\mathbb{Q}$) is defined by $\mathbf{T}_{s}(\mathbf{x}_{i}^{(s)}) = \mathbf{x}_{j}^{(s')}$ with probability $\mathbb{Q}(\mathbf{X}_{s} = \mathbf{x}_{i}^{(s)}, \mathbf{X}_{s'} = \mathbf{x}_{j}^{(s')}).$
    Once we find the stochastic fair matching function, i.e., the stochastic transport map (joint distribution $\mathbb{Q}$) minimizing $\Delta \textup{MDP}(f, \mathbb{Q}) = \mathbb{E}_{(\mathbf{X}_{0}, \mathbf{X}_{1}) \sim \mathbb{Q}} \vert f(\mathbf{X}_{0}, 0) - f(\mathbf{X}_{1}, 1) \vert,$ we can compute its transport cost as $\mathbb{E}_{(\mathbf{X}_{0}, \mathbf{X}_{1}) \sim \mathbb{Q}} \Vert \mathbf{X}_{0} - \mathbf{X}_{1} \Vert^{2}.$
    Note that the minimization of $\Delta \textup{MDP}$ with respect to the stochastic transport map is technically equivalent to solving the Kantorovich problem, which can be easily solved by the use of linear programming.
    See Section \ref{sec:appen_pfs} and Section \ref{appendix:kanto} for more details about the stochastic transport map and the Kantorovich problem, respectively.
    
    Alternatively, in practice, we can apply a mini-batch sampling technique.
    We first sample two random mini-batches $\tilde{\mathcal{D}}_{0} \subset \mathcal{D}_{0}$ and $\tilde{\mathcal{D}}_{1} \subset \mathcal{D}_{1}$ with identical size $m.$
    Then, we follow the process in `(1) Case of $n_{0} = n_{1}$' above.
    The transport cost of the fair matching function can be then estimated by the average transport costs computed on many random mini-batches.
    See Remark \ref{rmk:prop:quantile_match} in Section \ref{sec:appen_pfs} for the statistical validity of this mini-batch technique.
\end{enumerate}

Furthermore, Remark \ref{rmk:use_tc} below explains how the transport cost can serve as a metric for assessing the desirability of given group-fair models.

\begin{remark}[Usage of the transport cost of the fair matching function]\label{rmk:use_tc}
    Furthermore, the transport cost of the fair matching function can serve as a measure to assess whether a given group-fair model is desirable.
    For example, when choosing a model between two group-fair models with similar levels of group fairness or/and prediction accuracies, the model with the lower transport cost would be preferred.
    In Section \ref{sec:increase_tc}, we compare transport costs of fair matching functions for two different group-fair models, using the mini-batch technique.
\end{remark}


\subsection{Fairness Through Matching (FTM): learning a group-fair model with a transport map}\label{sec:improve}

The goal of this section is to formulate our proposed algorithm.
Before introducing it, we provide a theoretical support, which shows that a group-fair model can be constructed by MDP using any transport map.
Theorem \ref{prop:reversefair} below, which is the reverse of Theorem \ref{prop:gf}, shows that any transport map in the input space can construct a group-fair model.
Precisely, for a given transport map, if a model provides similar predictions for two individuals who are matched by the transport map, then it is group-fair.
The proof is given in Section \ref{sec:appen_pfs} of Appendix.
\begin{theorem}[Transport map $\Rightarrow$ Group-fair model]\label{prop:reversefair}
    For a given $\mathbf{T}_{s} \in \mathcal{T}_{s}^{\textup{trans}},$
    if $\Delta \textup{MDP} (f, \mathbf{T}_{s}) \le \delta,$
    then we have $\Delta \textup{WDP} ( f )\le \delta$ and $\Delta \overline{\textup{DP}} ( f ) \le \delta.$
\end{theorem}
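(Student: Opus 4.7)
The plan is to reduce both bounds to a single coupling argument. The key structural fact is that, since $\mathbf{T}_s$ is a transport map from $\mathcal{P}_s$ to $\mathcal{P}_{s'}$, the map $\mathbf{X} \mapsto (\mathbf{X}, \mathbf{T}_s(\mathbf{X}))$ applied to $\mathbf{X} \sim \mathcal{P}_s$ produces a joint random variable whose first marginal is $\mathcal{P}_s$ and whose second marginal is $\mathcal{P}_{s'}$. Pushing this forward through $(f(\cdot, s), f(\cdot, s'))$ yields a coupling $\gamma$ of $\mathcal{P}_{f_s}$ and $\mathcal{P}_{f_{s'}}$. I would build this coupling explicitly as a first step.

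For the Wasserstein bound, I would apply the definition $\mathcal{W}(\mathcal{P}_{f_0}, \mathcal{P}_{f_1}) = \inf_{\gamma' \in \Gamma(\mathcal{P}_{f_0}, \mathcal{P}_{f_1})} \mathbb{E}_{(u,v)\sim\gamma'} |u - v|$ to the specific $\gamma$ constructed above:
\[
\Delta \textup{WDP}(f) \;\le\; \mathbb{E}_{(u,v)\sim\gamma}\, |u-v| \;=\; \mathbb{E}_s\, \bigl| f(\mathbf{X}, s) - f(\mathbf{T}_s(\mathbf{X}), s') \bigr| \;=\; \Delta \textup{MDP}(f, \mathbf{T}_s) \;\le\; \delta.
\]
One minor care is to note that $\mathcal{P}_{f_{s'}}$ arises correctly as the second marginal; this follows immediately from $(\mathbf{T}_s)_{\#}\mathcal{P}_s = \mathcal{P}_{s'}$ and the definition of the pushforward.

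For the relaxed DP bound, I would use the same change of variables to rewrite the expectation under $\mathcal{P}_{s'}$ as an expectation under $\mathcal{P}_s$:
\[
\mathbb{E}_{s'}[f(\mathbf{X}, s')] \;=\; \mathbb{E}_s[f(\mathbf{T}_s(\mathbf{X}), s')],
\]
and then apply Jensen's inequality (or simply $|\mathbb{E}[Z]| \le \mathbb{E}|Z|$):
\[
\Delta \overline{\textup{DP}}(f) \;=\; \bigl|\mathbb{E}_s[f(\mathbf{X}, s)] - \mathbb{E}_s[f(\mathbf{T}_s(\mathbf{X}), s')]\bigr| \;\le\; \mathbb{E}_s\bigl|f(\mathbf{X},s) - f(\mathbf{T}_s(\mathbf{X}), s')\bigr| \;=\; \Delta \textup{MDP}(f, \mathbf{T}_s) \;\le\; \delta.
\]

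There is essentially no main obstacle here; the statement is a direct consequence of the defining property of transport maps (that they realize a valid coupling between the two group-conditional input distributions) combined with the variational definition of $\mathcal{W}$ and the triangle inequality for the absolute value. The only point that requires a careful sentence rather than a calculation is the verification that the pushforward of $\mathcal{P}_s$ under $(\mathrm{id}, \mathbf{T}_s)$ lies in $\Gamma(\mathcal{P}_s, \mathcal{P}_{s'})$, which is immediate from the definition of $\mathcal{T}_s^{\textup{trans}}$.
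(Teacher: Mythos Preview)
Your proof is correct and essentially uses the primal (coupling) formulation of the $1$-Wasserstein distance, whereas the paper works with the dual (Kantorovich--Rubinstein) formulation, writing $\Delta\textup{WDP}(f)=\sup_{u\in\mathcal{L}_1}|\mathbb{E}_s(u\circ f_s)-\mathbb{E}_{s'}(u\circ f_{s'})|$, inserting and subtracting $\mathbb{E}_s(u\circ f(\mathbf{T}_s(\mathbf{X}),s'))$, and then bounding the two resulting terms by $\delta$ (via the Lipschitz property of $u$) and by $\textup{TV}(\mathbf{T}_{s\#}\mathcal{P}_s,\mathcal{P}_{s'})=0$ respectively. Your argument is the more elementary one: it avoids the duality theorem entirely and makes transparent that the bound is simply ``the infimum over couplings is at most the value at one particular coupling.'' The paper's route has the minor advantage that the $\Delta\overline{\textup{DP}}$ bound then drops out in one line from $\Delta\overline{\textup{DP}}(f)\le\Delta\textup{WDP}(f)$ (the identity is $1$-Lipschitz), whereas you re-derive it separately via the change-of-variables identity; both are equally short.
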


Again, it is remarkable that a group-fair model and its corresponding transport map are closely related, i.e., \textbf{\textit{every group-fair model has its corresponding implicit transport map, and vice versa}}.
This finding can be mathematically expressed as follows.
Let $\Delta$ be a given (existing) fairness measure, and define $\mathcal{F}_{\Delta} (\delta) := \{ f \in \mathcal{F} : \Delta(f) \le \delta \}$ as the set of group-fair models of level $\delta$ (with respect to $\Delta$).
Similarly, for MDP, define $\mathcal{F}_{\Delta \textup{MDP}} (\mathbf{T}_{s}, \delta) := \{ f \in \mathcal{F} : \Delta \textup{MDP} (f, \mathbf{T}_{s}) \le \delta \}.$
Then, following from Theorem \ref{prop:gf} and \ref{prop:reversefair}, we can conclude that the three measures (i.e., TVDP, WDP, and MDP) are closely related:
$
\mathcal{F}_{\Delta \textup{TVDP}} (\delta) \subseteq \cup_{\mathbf{T}_{s} \in \mathcal{T}_{s}^{\textup{trans}}} \{ f: \Delta \textup{MDP}(f, \mathbf{T}_{s}) \le 2\delta \} \subseteq \mathcal{F}_{\Delta \textup{WDP}} (2\delta).
$

As discussed in Section \ref{sec:intro} as well as several previous works (e.g., \citet{10.1145/2090236.2090255}), there exist group-fair models having undesirable properties such as subset targeting or self-fulfilling prophecy.
The advantage of MDP is that we can screen out such undesirable group-fair models during the learning phase, to consider desirable group fair models only.
And, using MDP achieves this goal by considering group-fair models whose transport maps have low transport costs.
That is, we can search for a group-fair model only on $\cup_{\mathbf{T}_{s} \in \mathcal{T}_{s}^{\textup{good trans}}} \{ f: \Delta \textup{MDP}(f, \mathbf{T}_{s}) \le 2\delta \},$
where $\mathcal{T}_{s}^{\textup{good trans}} \subseteq \mathcal{T}_{s}^{\textup{trans}}$ is a set of specified `good' transport maps.
See Section \ref{sec:choiceT} for such good transport maps that we specifically propose.
It is worth noting that such screening would be challenging with existing group-fair measures.

\paragraph{FTM algorithm}

Based on Theorem \ref{prop:reversefair}, we develop a learning algorithm named \textbf{Fairness Through Matching} (FTM), which learns a group-fair model subject to MDP being small with a given transport map.
FTM consists of two steps.
First, we select a (good) transport map.
Then, we learn a model under the MDP constraint with the selected transport map.
The precise objective of FTM is formulated below.

Suppose a transport map $\mathbf{T}_{s}$ is selected (see Section \ref{sec:choiceT} for the proposed transport maps).
FTM solves the following objective for a given loss function $l$ (e.g., cross-entropy) and a pre-defined fairness level $\delta \ge 0:$
\begin{equation}\label{eq:pop_obj}
    \begin{split}
        f^{\textup{FTM}}(\mathbf{T}_{s}) := \argmin_{f \in \mathcal{F}} \mathbb{E} l(Y, f(\mathbf{X}, S)) 
        \textup{ subject to} \min_{s \in \{0, 1\}} \Delta \textup{MDP} (f, \mathbf{T}_{s}) \le \delta.
    \end{split}
\end{equation}
Unless there is any confusion, we write $f^{\textup{FTM}}$ instead of $f^{\textup{FTM}}(\mathbf{T}_{s})$ for simplicity.
By Theorem \ref{prop:reversefair}, it is clear that $f^{\textup{FTM}}$ is fair (i.e., $\Delta \textup{WDP} (f^{\textup{FTM}}), \Delta \overline{\textup{DP}} (f^{\textup{FTM}}) \le \delta$) for any transport map $\mathbf{T}_{s}.$
In practice, we estimate $f^{\textup{FTM}}$ with observed data $\mathcal{D}$ using mini-batch technique along with a stochastic gradient descent based algorithm (see Section \ref{sec:choiceT} for details).


\subsection{Conceptual comparison of existing approaches and FTM}\label{sec:compare}

\paragraph{Individual fairness}
FTM and individual fairness are similar in the sense that they try to treat similar individuals similarly.
A difference is that FTM aims to treat two individuals from different protected groups similarly, while the individual fairness tries to treat similar individuals similarly regardless of sensitive attribute (even when it is unknown).
That is, similar individuals in FTM could be dissimilar in view of individual fairness, especially when the two protected groups are significantly different. 

On the other hand, a limitation of individual fairness is that group fairness is not guaranteed.
FTM can be also understood as a tool to address this limitation by finding models with higher individual fairness among group-fair models.
Empirical results support this conjecture that FTM improves individual fairness compared to baseline methods for group fairness (see Table \ref{table:indi} in Section \ref{sec:add_exps_indi}).

\paragraph{Fair representation learning}

FTM and FRL (Fair Representation Learning) are similar in the sense that they align two conditional distributions.
However, the role of the alignment in FTM and FRL are different:
FTM builds a prediction model in the original input space, using transport map in the MDP constraint only.
In contrast, FRL methods first learn fair representation by aligning the conditional distributions of representation, and then use the learned fair representation as input.

Furthermore, FTM and FRL methods are motivated by fundamentally different objectives.
FTM is designed to learn group-fair models with desirable properties (e.g., higher fairness on subsets), whereas FRL methods aim to obtain fair representations which can be used for downstream tasks requiring fairness.
The FRL approach proposed by \citet{pmlr-v97-gordaliza19a} is particularly similar to FTM, as both apply the OT map on the input space. 
However, unlike FTM, it trains a prediction model on the (pre-trained) fair representation space derived from the OT map.
On the contrary, FTM builds the prediction model on the original input space and utilizes the OT map solely within the fairness constraint.
As a result, the prediction model of FTM is not tied to the OT map, making it more flexible and leading to improved prediction accuracy.
Empirical evidence supporting this claim is provided in Table \ref{table:compare_gordaliza} in Section \ref{sec:add_exps_frl}, which demonstrates the superior performance of FTM compared to several FRL methods, including \citet{pmlr-v97-gordaliza19a}.

\section{Empirical algorithm with transport map selection}\label{sec:choiceT}

The implication of Theorems \ref{prop:gf} and \ref{prop:reversefair} is that any transport map corresponds to a group-fair model.
However, an improperly chosen transport map can result in undesirable outcomes, even if group fairness is satisfied, as shown in Theorem \ref{prop:reversefair} (see Section \ref{sec:highcost} of Appendix for an example of a problematic group-fair model due to an unreasonable transport map).
Thus, selecting an appropriate transport map is crucial when using FTM, which is the primary focus of this section.

Specifically, we propose two favorable choices of the transport map $\mathbf{T}_{s}$ used in FTM.
In Section \ref{sec:choiceT-OT}, we suggest using the OT map in the input space $\mathcal{X},$ resulting in a group-fair model with a higher fairness level on subsets.
In Section \ref{sec:choiceT-relax}, we propose using the OT map in the product space $\mathcal{X} \times \mathcal{Y},$ to improve prediction performance and the level of equalized odds, when compared to the OT map in the input space.

\subsection{OT map on $\mathcal{X}$}\label{sec:choiceT-OT}

For a given $\mathbf{T}_{s} \in \mathcal{T}_{s}^{\textup{trans}},$ the \textit{transport cost} (on $\mathcal{X}$) of $\mathbf{T}_{s}$ is defined by $\mathbb{E}_{s} \Vert \mathbf{X} - \mathbf{T}_{s}(\mathbf{X}) \Vert^{2}.$
First, we propose using the OT map between the two input spaces, which is the minimizer of the transport cost on $\mathcal{X}$ among all transport maps.
From now on, we call this OT map on $\mathcal{X}$ as the \textbf{marginal OT map}.

In this section, we explore a benefit of using the marginal OT map (i.e., low transport cost) by showing a theoretical relationship between the transport cost and fairness on subsets.
Many undesirable behaviors of group-fair models have been recognized and discussed \citep{10.1145/2090236.2090255, kearns2018empirical, wachter2020bias, mougan2024beyond}.
\textit{Subset fairness}, which is a similar concept to subset targeting in \citet{10.1145/2090236.2090255}, is one of such undesirable behaviors. 
We say that a group-fair model is subset-unfair if it is not group-fair against a certain subset (e.g., aged over 60s) even if it is group-fair overall.
The definition of subset fairness can be formulated as follows.

\begin{definition}[Subset fairness]\label{def:subsetfairness}
    Let $A$ be a subset of $\mathcal{X}.$
    The level of \textit{subset fairness} over $A$ is defined as
    $$
    \Delta \overline{\textup{DP}}_{A}(f) := \vert \mathbb{E} ( f( \mathbf{X}, 0 ) | S = 0, \mathbf{X} \in A ) - \mathbb{E} ( f( \mathbf{X}, 1 ) | S = 1, \mathbf{X} \in A ) \vert.
    $$
\end{definition}

Intuitively, we expect that a group-fair model with a low transport cost would exhibit a high level of subset fairness.
This is because the chance of two matched individuals (from different protected groups) belonging to the same subset $A$ tends to be higher when the transport cost is smaller.
Theorem \ref{thm:subset} theoretically supports this conjecture, whose proof is provided in Section \ref{sec:appen_pfs} of Appendix.

\begin{theorem}[Low transport cost benefits subset fairness]\label{thm:subset}
    Suppose $\mathcal{F}$ is the collection of $L$-Lipschitz\footnote{A function $g: \mathcal{X} \rightarrow \mathbb{R}$ is $L$-Lipschitz if $\vert g(\mathbf{x}_{1}) - g(\mathbf{x}_{2}) \vert \le L \Vert \mathbf{x}_{1} - \mathbf{x}_{2} \Vert$ for all $\mathbf{x}_{1}, \mathbf{x}_{2} \in \mathcal{X}.$} functions.
    Let $A$ be a given subset in $\mathcal{X}.$
    Then, for all $f$ satisfying $\Delta \textup{MDP} (f, \mathbf{T}_{s}^{f}) \le \delta,$ we have
    \begin{equation}
        \begin{split}
            \Delta \overline{\textup{DP}}_{A} (f) 
            \le L \left( \mathbb{E}_{s} \Vert \mathbf{X} - \mathbf{T}_{s}^{f}(\mathbf{X}) \Vert^{2} \right)^{ \frac{1}{2} }
            + \textup{TV}(\mathcal{P}_{0, A}, \mathcal{P}_{1, A}) + U \delta,
        \end{split}
    \end{equation}
    where $\mathcal{P}_{s, A}$ is the distribution of $\mathbf{X} | S = s, \mathbf{X} \in A,$
    and $U > 0$ is a constant only depending on $A$ and $\mathcal{P}_{s}, s=0,1.$
\end{theorem}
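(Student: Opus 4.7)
The plan is to pass through a single intermediate quantity $f(\mathbf{T}(\mathbf{X}),s')$, where $\mathbf{T}:=\mathbf{T}_s^f$, and apply the triangle inequality to split $\Delta\overline{\textup{DP}}_A(f)$ into three pieces that match the three summands on the right-hand side. Fix an $s\in\{0,1\}$ for which $\Delta\textup{MDP}(f,\mathbf{T})\le\delta$ and set $p_s:=\mathcal{P}_s(\mathbf{X}\in A)$. The decomposition I would use is
\begin{align*}
\Delta\overline{\textup{DP}}_A(f)
&\le \bigl|\mathbb{E}_s[f(\mathbf{X},s)-f(\mathbf{T}(\mathbf{X}),s')\mid\mathbf{X}\in A]\bigr| \\
&\quad + \bigl|\mathbb{E}_s[f(\mathbf{T}(\mathbf{X}),s')-f(\mathbf{X},s')\mid\mathbf{X}\in A]\bigr| \\
&\quad + \bigl|\mathbb{E}_s[f(\mathbf{X},s')\mid\mathbf{X}\in A]-\mathbb{E}_{s'}[f(\mathbf{X},s')\mid\mathbf{X}\in A]\bigr|,
\end{align*}
which I label $(I)$, $(II)$, and $(III)$.

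For $(I)$, conditioning on $\mathbf{X}\in A$ inflates the unconditional $L_1$ gap by at most $1/p_s$, so $(I)\le(1/p_s)\,\mathbb{E}_s|f(\mathbf{X},s)-f(\mathbf{T}(\mathbf{X}),s')|=(1/p_s)\,\Delta\textup{MDP}(f,\mathbf{T})\le \delta/p_s$, producing the $U\delta$ term. For $(II)$, the $L$-Lipschitz hypothesis on $f(\cdot,s')$ gives $(II)\le L\cdot\mathbb{E}_s[\|\mathbf{T}(\mathbf{X})-\mathbf{X}\|\mid\mathbf{X}\in A]$; dropping the indicator of $A$ and applying Cauchy--Schwarz then yields a bound of the form $C\cdot L\,(\mathbb{E}_s\|\mathbf{T}(\mathbf{X})-\mathbf{X}\|^2)^{1/2}$ with $C$ depending only on $p_s$, matching the first term of the claim. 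For $(III)$, since $f(\cdot,s')\in[0,1]$, the elementary inequality $|\int h\,d\mu-\int h\,d\nu|\le(\sup h-\inf h)\,\textup{TV}(\mu,\nu)$ applied with $\mu=\mathcal{P}_{s,A}$ and $\nu=\mathcal{P}_{s',A}$ yields $(III)\le\textup{TV}(\mathcal{P}_{s,A},\mathcal{P}_{s',A})$.

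The main obstacle will be the bookkeeping of the $1/p_s$-type prefactors that arise from the conditioning on $\mathbf{X}\in A$ in both $(I)$ and $(II)$. Since they depend only on $A$ and $\mathcal{P}_s$, they can be absorbed into the constant $U$ and the stated Lipschitz coefficient. Apart from that, the only non-trivial identity used is the pushforward relation $\mathbb{E}_{s'}[h(\mathbf{X})]=\mathbb{E}_s[h(\mathbf{T}(\mathbf{X}))]$ coming from $\mathbf{T}_\#\mathcal{P}_s=\mathcal{P}_{s'}$, which is routine.
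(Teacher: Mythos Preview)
Your proposal is correct and is essentially identical to the paper's proof: the paper uses the same three-term triangle-inequality decomposition (your $(I)$, $(II)$, $(III)$ are exactly the paper's second, first, and third terms, respectively), bounds the MDP piece by $\delta/p_s$ with $U=1/\mathcal{P}_s(\mathbf{X}\in A)$, bounds the Lipschitz piece via Cauchy--Schwarz on the transport cost, and bounds the last piece by $\textup{TV}(\mathcal{P}_{0,A},\mathcal{P}_{1,A})$ using $f\in[0,1]$. You are in fact slightly more careful than the paper in flagging the $1/p_s$ prefactor that also arises in $(II)$ from the conditioning on $A$; the paper silently drops it when passing from $\mathbb{E}_s[\,\|\mathbf{X}-\mathbf{T}(\mathbf{X})\|\mid\mathbf{X}\in A\,]$ to $\mathbb{E}_s\|\mathbf{X}-\mathbf{T}(\mathbf{X})\|$.
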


The first term of RHS, $L \left( \mathbb{E}_{s} \Vert \mathbf{X} - \mathbf{T}_{s}^{f}(\mathbf{X}) \Vert^{2} \right)^{1/2},$ implies that using a transport map with a small transport cost helps improve the level of subset fairness. 
The uncontrollable term $\textup{TV}(\mathcal{P}_{0, A}, \mathcal{P}_{1, A})$ can be small for certain subsets. 
For example, for disjoint sets $A_{1}, \cdots, A_{K}$ of $A,$ suppose that $\mathcal{P}_s$ is a mixture of uniform distribution given as
$\mathcal{P}_s(\cdot)=\sum_{k=1}^K p_{sk} \mathbb{I}(\cdot\in A_k)$ with $p_{sk}\ge 0$ and $\sum_{k=1}^K p_{sk}=1$ (e.g., the histogram).
Then, $\textup{TV}(\mathcal{P}_{0, A}, \mathcal{P}_{1, A})$ becomes zero for all $A_{k}, k \in [K].$
The last term $U \delta$ becomes small when $\delta$ is small.

\paragraph{Connection to counterfactual fairness}
The concept of FTM is also related to counterfactual fairness \citep{https://doi.org/10.48550/arxiv.1703.06856}.
In specific, under a simple Structural Causal Model (SCM), the input transported by the marginal OT map is equivalent to the counterfactual input.
Particularly, let $\mathbf{X}_{s} = \mathbf{X} | S = s$ for $s \in \{0,1\}$ and consider an SCM 
\begin{equation}\label{eq:sem}
    \begin{split}
        \mathbf{X}_{s} = \mu_{s} + A \mathbf{X}_{s} + \epsilon_{s}
    \end{split}
\end{equation}
for given $A \in \mathbb{R}^{d \times d},$ $\mu_{s} \in \mathbb{R}^{d}$ with Gaussian random noise $\epsilon_{s} \sim \mathcal{N}(0, \sigma_{s}^{2} D)$ of a diagonal matrix $D \in \mathbb{R}^{d \times d}$ and variance scaler $\sigma_{s}^{2}.$
An example DAG (Directed Acyclic Graph) for equation (\ref{eq:sem}) is Figure \ref{fig:SCM}.

\begin{wrapfigure}{r}{0.3\linewidth}
    \centering
    \begin{tikzpicture}[node distance={20mm}, main/.style = {draw, circle}, squarenode/.style = {draw, rectangle}]
        \node[main] (X) at (3, 0) {\normalsize $\mathbf{X}$};
        \node[main] (S) at (0, 0) {\normalsize $S$};
        \path[-angle 90, font=\scriptsize]
        (S) edge (X);
        \node (e) at (1.2, 1) {\normalsize $\epsilon_{s}$};
        \node (diste) at (2.4, 1) {\normalsize $\sim \mathcal{N}(0, \sigma_{s}^{2} D)$};
        \path[-angle 90, font=\scriptsize]
        (e) edge[dotted] (X);
        \path[-angle 90, font=\scriptsize]
        (S) edge[dotted] (e);
    \end{tikzpicture}
    \caption{An example DAG of the SCM in equation (\ref{eq:sem}).}
    \label{fig:SCM}
\end{wrapfigure}
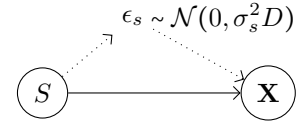

Let $\mathbf{x}_{s}$ be a realization of $\mathbf{X}_{s}$ and assume $(\mathbf{I}_{d}-A)$ has its inverse $B := (\mathbf{I}_{d}-A)^{-1},$ where $\mathbf{I}_{d} \in \mathbb{R}^{d \times d}$ is the identity matrix.
Then, its counterfactual becomes $\tilde{\mathbf{x}}_{s}^{\textsc{CF}} = B \mu_{s'} + \sigma_{s'} \sigma_{s}^{-1} \mathbf{I}_{d} (\mathbf{x}_{s} - B \mu_{s})$ (proved with Proposition \ref{prop:cf} together in Section \ref{sec:appen_pfs} of Appendix).
On the other hand, by Lemma \ref{lemma:gaussianOT} in Section \ref{sec:appen_pfs} of Appendix, the image of $\mathbf{x}_{s}$ by the marginal OT map is given as 
$ \tilde{\mathbf{x}}_{s}^{\textsc{OT}} = B \mu_{s'} + \mathbf{W}_s (\mathbf{x}_{s} - B \mu_{s}) $ for some $\mathbf{W}_s \in \mathbb{R}^{d \times d}.$
Proposition below \ref{prop:cf} shows $\tilde{\mathbf{x}}_{s}^{\textup{CF}} = \tilde{\mathbf{x}}_{s}^{\textup{OT}},$ whose proof is deferred to Section \ref{sec:appen_pfs} of Appendix.

\begin{proposition}[Counterfactual fairness and the marginal OT map]\label{prop:cf}
    For all $A$ having $(\mathbf{I}_{d}-A)^{-1},$
    $\mathbf{W}_s$ becomes $ \sigma_{s'} \sigma_{s}^{-1} \mathbf{I}_{d}.$
    That is, $\tilde{\mathbf{x}}_{s}^{\textsc{CF}} = \tilde{\mathbf{x}}_{s}^{\textsc{OT}}.$
\end{proposition}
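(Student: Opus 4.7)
The plan is to reduce the claim $\mathbf{W}_s = \sigma_{s'}\sigma_s^{-1}\mathbf{I}_d$ to an explicit computation of the Gaussian OT map, and then verify that the same expression arises from the standard three-step (abduction--action--prediction) recipe for counterfactuals. First, I rearrange the SCM in equation (\ref{eq:sem}): since $(\mathbf{I}_d - A)$ is invertible with inverse $B$, we get $\mathbf{X}_s = B\mu_s + B\epsilon_s$, so $\mathbf{X}_s \sim \mathcal{N}(B\mu_s,\, \sigma_s^2 M)$ where $M := B D B^\top$ is positive semi-definite and, crucially, does not depend on $s$.

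Next I apply Lemma \ref{lemma:gaussianOT}, which gives the closed form for the marginal OT map between two non-degenerate Gaussians: $\mathbf{W}_s = \Sigma_s^{-1/2}\bigl(\Sigma_s^{1/2}\Sigma_{s'}\Sigma_s^{1/2}\bigr)^{1/2}\Sigma_s^{-1/2}$ with $\Sigma_s = \sigma_s^2 M$. Because $\Sigma_s$ and $\Sigma_{s'}$ are scalar multiples of the same PSD matrix $M$, the usual non-commutativity obstruction vanishes: $\Sigma_s^{1/2} = \sigma_s M^{1/2}$, so $\Sigma_s^{1/2}\Sigma_{s'}\Sigma_s^{1/2} = \sigma_s^2 \sigma_{s'}^2 M^2$, whose principal square root is $\sigma_s \sigma_{s'} M$. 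Sandwiching by $\Sigma_s^{-1/2} = \sigma_s^{-1} M^{-1/2}$ then collapses the whole expression to $\sigma_{s'}\sigma_s^{-1}\mathbf{I}_d$, which is the desired formula for $\mathbf{W}_s$.

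For the counterfactual side, I write the exogenous noise as $\epsilon_s = \sigma_s U$ with $U \sim \mathcal{N}(0, D)$ shared across the two groups, so that the $s$-dependence of the noise enters only through the scalar $\sigma_s$. Pearl's procedure then reads: abduct $U = \sigma_s^{-1}(B^{-1}\mathbf{x}_s - \mu_s)$ from the observation $\mathbf{X}_s = \mathbf{x}_s$; intervene to set $S = s'$ while holding $U$ fixed; predict $\tilde{\mathbf{x}}_s^{\textsc{CF}} = B\mu_{s'} + B\sigma_{s'}U = B\mu_{s'} + \sigma_{s'}\sigma_s^{-1}(\mathbf{x}_s - B\mu_s)$. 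This matches $\tilde{\mathbf{x}}_s^{\textsc{OT}}$ once the value of $\mathbf{W}_s$ obtained above is substituted, giving the equality claimed in the proposition.

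The main obstacle I anticipate is the matrix-algebra step: Gaussian OT maps are defined through a nested square-root formula that is generally unpleasant to simplify, and a careless calculation can easily introduce spurious non-identity factors. The saving feature is that both covariances are scalar multiples of the common matrix $M$, so $\Sigma_s^{1/2}$ and $\Sigma_{s'}$ commute and the principal square root collapses cleanly. I would also be careful about the abduction step, since $\epsilon_s$ itself depends on $s$; re-parametrising through the $s$-free latent $U$ is what makes the counterfactual well-defined and produces exactly the coefficient $\sigma_{s'}\sigma_s^{-1}$ that the OT computation delivers.
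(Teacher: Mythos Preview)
Your proposal is correct and follows essentially the same approach as the paper: both compute the counterfactual via the abduction--action--prediction recipe (the paper's equation $\sigma_1^{-1}(B^{-1}\tilde{\mathbf{x}}_0^{\textsc{CF}}-\mu_1)=\sigma_0^{-1}(B^{-1}\mathbf{x}_0-\mu_0)$ is exactly your ``hold $U$ fixed''), and both simplify the Gaussian OT formula from Lemma~\ref{lemma:gaussianOT} by exploiting that $\Sigma_s=\sigma_s^2 BDB^\top$ and $\Sigma_{s'}=\sigma_{s'}^2 BDB^\top$ are scalar multiples of the same matrix, so the nested square roots collapse to $\sigma_{s'}\sigma_s^{-1}\mathbf{I}_d$. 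Your introduction of $M:=BDB^\top$ and the shared latent $U$ makes the exposition a bit cleaner, but the argument is identical.
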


We further present an example in Section \ref{sec:highcost} of Appendix showing that a transport map with a high transport cost can lead to a problematic group-fair model.
Moreover, Section \ref{sec:exp-subsets} empirically shows that group-fair models learned by FTM with the marginal OT map attain higher fairness levels on various subsets that are not explicitly considered in the training phase, compared to group-fair models learned by existing algorithms.

\paragraph{Estimation of the marginal OT map}

To estimate the marginal OT map in practice, we sample two random mini-batches $\tilde{\mathcal{D}}_{0} = \{ \mathbf{x}_{i}^{(0)} \}_{i=1}^{m} \subset \mathcal{D}_{0}$ and $\tilde{\mathcal{D}}_{1} = \{ \mathbf{x}_{j}^{(1)}\}_{j=1}^{m} \subset \mathcal{D}_{1}$ with an identical size $m \le n.$
For given two empirical distributions on $\tilde{\mathcal{D}}_{0}$ and $\tilde{\mathcal{D}}_{1},$
the cost matrix between the two is defined by
$\mathbf{C} := [c_{i, j}] \in \mathbb{R}_{+}^{m \times m}$ where $ c_{i, j} = \Vert \mathbf{x}_{i}^{(0)} - \mathbf{x}_{j}^{(1)} \Vert^{2}.$
The optimal coupling is then defined by the matrix $\Gamma = [\gamma_{i, j}] \in \mathbb{R}_{+}^{m \times m},$ which solves the following objective:
\begin{equation}\label{eq:otproblem}
    \min_{\Gamma} \Vert \mathbf{C} \odot \Gamma \Vert_{1}
    = \min_{\gamma_{i, j}} c_{i, j} \gamma_{i, j}
    \textup{ s.t. }
    \sum_{i=1}^{m} \gamma_{i, j} = \sum_{j=1}^{m} \gamma_{i, j} = \frac{1}{m},
    \gamma_{i, j} \ge 0.    
\end{equation}
Due to the equal sample sizes of $\tilde{\mathcal{D}}_{s}, s\in \{0,1\},$ the optimal coupling has only one non-zero (positive) entry for each row and column (i.e., $\Gamma$ is a doubly stochastic matrix scaled by a constant factor of $1/m$).
Then, the marginal OT map for each $\mathbf{x}_{i}^{(0)} \in \tilde{\mathcal{D}}_{0}$ is defined by $\mathbf{T}_{0, \tilde{\mathcal{D}}}(\mathbf{x}_{i}^{(0)}) = \mathbf{x}_{j}^{(1)} \mathbbm{1}(\gamma_{i, j} > 0)$ and $\mathbf{T}_{1, \tilde{\mathcal{D}}}$ is defined similarly.

\subsection{OT map on $\mathcal{X} \times \mathcal{Y}$}\label{sec:choiceT-relax}

One might argue that using the marginal OT map could degrade the prediction performance much, since the matchings done by the marginal OT map do not consider the similarity in $Y.$
As a remedy for this issue, we consider incorporating the label $Y$ into the cost matrix calculation to avoid substantial degradation in prediction performance.

For this purpose, we define a new cost function on $\mathcal{X} \times \mathcal{Y}.$
Let $\alpha$ be a given positive constant.
The new cost function $c^{\alpha}: \mathbb{R}^{d+1} \times \mathbb{R}^{d+1} \rightarrow \mathbb{R}_{+}$ is defined by:
$c^{\alpha}( (\mathbf{x}_{1}, y_{1}), (\mathbf{x}_{2}, y_{2}) ) := \Vert \mathbf{x}_{1} - \mathbf{x}_{2} \Vert^{2} + \alpha \vert y_{1} - y_{2} \vert, $ for given $\mathbf{x}_{1}, \mathbf{x}_{2} \in \mathbb{R}^{d}$ and $y_{1}, y_{2} \in \mathbb{R}.$
Among all transport maps from the distribution of $\mathbf{X}, Y | S = s$ to the distribution of $\mathbf{X}, Y | S = s',$ we find the OT map that minimizes this modified transport cost on $\mathcal{X} \times \mathcal{Y}$ (i.e., the expected value of $c^{\alpha},$ where the expectation is taken over the distribution of $\mathbf{X}, Y | S = s$).
We call this OT map on $\mathcal{X} \times \mathcal{Y}$ as the \textbf{joint OT map}.

Using this joint OT map with a positive $\alpha$ can contribute to the improvement in prediction accuracy compared to the marginal OT map.
This is because, while the marginal OT map does not care labels when matching individuals, the joint OT map tends to match individuals with the same label as much as possible when $\alpha$ is sufficiently large.

Not only the prediction accuracy, but also the level of equalized odds (i.e., demographic parities on the subsets consisting of those with $Y=0$ and $Y=1,$ respectively) can be improved.
This is because, FTM with the joint OT map tends to predict similarly for individuals with the same labels but from different protected groups, which directly aligns with the concept of equalized odds.
We can theoretically prove this claim as follows.
First, we decompose $\Delta \textup{MDP}(f, \mathbf{T}_{s})$ as
$ \Delta \textup{MDP}(f, \mathbf{T}_{s}) 
= w_{0} \mathbb{E}_{s} \left( \vert f(\mathbf{X}, s) - f(\mathbf{T}_{s}(\mathbf{X}), s') \vert \big\vert Y = 0 \right)
+ w_{1} \mathbb{E}_{s} \left( \vert f(\mathbf{X}, s) - f(\mathbf{T}_{s}(\mathbf{X}), s') \vert \big\vert Y = 1 \right),
$
where $w_{y} := \mathbb{P}(Y = y \vert S = s), y \in \{0, 1\}.$
Note that by the definition of the joint OT map, we have for almost all $\mathbf{x}$ with respect to the distribution of $\mathbf{X} \vert S = s,$
$\mathcal{P} (Y = y \vert \mathbf{X} = \mathbf{x}, S = s) = \mathcal{P}(Y = y \vert \mathbf{X} = \mathbf{T}_{s}(\mathbf{x}), S = s')$ for all $y \in \{0, 1\}$ when $\alpha \rightarrow \infty.$
Then, we have that
$C \max\{ \Delta \overline{\textup{TPR}}(f), \Delta \overline{\textup{FPR}}(f) \} 
\le w_{0} \Delta \overline{\textup{TPR}}(f) + w_{1} \Delta \overline{\textup{FPR}}(f) 
\le \Delta \textup{MDP}(f, \mathbf{T}_{s}),$
where $C = \min\{w_{0}, w_{1}\}$ is a constant.
Here, $\Delta \overline{\textup{TPR}}(f) = \vert \mathbb{E}( f_{0}(\mathbf{X}) \vert Y = 1, S = 0 ) - \mathbb{E}( f_{1}(\mathbf{X}) \vert Y = 1, S = 1 ) \vert$ 
and $\Delta \overline{\textup{FPR}}(f) = \vert \mathbb{E}( f_{0}(\mathbf{X}) \vert Y = 0, S = 0 ) - \mathbb{E}( f_{1}(\mathbf{X}) \vert Y = 0, S = 1 ) \vert$
are the smooth versions of $\Delta {\textup{TPR}}(f)$ and $\Delta {\textup{FPR}}(f),$ respectively.
Hence, we can conclude that using the joint OT map with large $\alpha$ can improve EO, compared to the marginal OT map.

We empirically confirm this claim in Section \ref{sec:various_T}, by showing that group-fair models learned by FTM with the joint OT map offer improved prediction accuracies as well as improved levels of equalized odds, when compared to FTM with the marginal OT map.

\paragraph{Estimation of the joint OT map}

We apply a similar technique to the marginal OT map case, starting by sampling two random mini-batches $\tilde{\mathcal{D}}_{0}$ and $\tilde{\mathcal{D}}_{1}$ with an identical size $m \le n.$
Let $y_{i}^{(0)}$ and $y_{j}^{(1)}$ be the corresponding labels of $\mathbf{x}_{i}^{(0)} \in \tilde{\mathcal{D}}_{0}$ and $\mathbf{x}_{j}^{(1)} \in \tilde{\mathcal{D}}_{1},$ respectively.
For a given $\alpha \ge 0,$ we modify the cost matrix as follows:
$\mathbf{C}^{\alpha} := [c_{i, j}^{\alpha}] \in \mathbb{R}_{+}^{m \times m}$ where $ c_{i, j}^{\alpha} = \Vert \mathbf{x}_{i}^{(0)} - \mathbf{x}_{j}^{(1)} \Vert^{2} + \alpha \vert y_{i}^{(0)} - y_{j}^{(1)} \vert.$
Note that when $\alpha = 0,$ this problem becomes equivalent to the case of the marginal OT map in equation (\ref{eq:otproblem}).
We similarly calculate the optimal coupling the matrix by solving the following objective:
\begin{equation}\label{eq:otXYproblem}
    \min_{\Gamma} \Vert \mathbf{C}^{\alpha} \odot \Gamma \Vert_{1}
    = \min_{\gamma_{i, j}} c_{i, j}^{\alpha} \gamma_{i, j}
    \textup{ s.t. }
    \sum_{i=1}^{m} \gamma_{i, j} = \sum_{j=1}^{m} \gamma_{i, j} = \frac{1}{m},
    \gamma_{i, j} \ge 0.
\end{equation}


Then, the joint OT map for each $\mathbf{x}_{i}^{(0)} \in \tilde{\mathcal{D}}_{0}$ is defined by $\mathbf{T}_{0, \tilde{\mathcal{D}}}(\mathbf{x}_{i}^{(0)}) = \mathbf{x}_{j}^{(1)} \mathbbm{1}(\gamma_{i, j} > 0)$ and $\mathbf{T}_{1, \tilde{\mathcal{D}}}$ is defined similarly.

\subsection{Empirical algorithm for FTM}
In practice, we learn $f$ with a stochastic gradient descent algorithm.
For each update, to calculate the expected loss, we sample a random mini-batch $\mathcal{D}' \subset \mathcal{D}$ of size $n' \le n.$
Then, we update the solution using the gradient of the following objective function
\begin{equation}
    \begin{split}
        \mathcal{L}(f) := \frac{1}{n'} \sum_{(\mathbf{x}_{i}, y_{i}, s_{i}) \in \mathcal{D}'} l(y_{i}, f(\mathbf{x}_{i}, s_{i}))
        + \lambda \frac{1}{m} \sum_{\mathbf{x}_{i}^{(s)} \in \tilde{\mathcal{D}}_{s}} \left\vert f(\mathbf{x}_{i}^{(s)}, s) - f(\mathbf{T}_{s, \tilde{\mathcal{D}}} (\mathbf{x}_{i}^{(s)}), s') \right\vert
    \end{split}
\end{equation}
for any $s\in \{0,1\},$
where $\lambda>0$ is the Lagrange multiplier and $ \mathbf{T}_{s, \tilde{\mathcal{D}}} $ is a pre-specified transport map from $\tilde{\mathcal{D}}_{s}$ to $\tilde{\mathcal{D}}_{s'}$ (e.g., the marginal OT map from Section \ref{sec:choiceT-OT} or the joint OT map from Section \ref{sec:choiceT-relax}).

\section{Experiments}\label{sec:exps}

This section presents our experimental results, showing that FTM with the proposed transport maps in Section \ref{sec:choiceT} empirically works well to learn group-fair models.
The key findings throughout this section are summarized as follows.
\begin{itemize} 
    \item[$\bullet$] 
    FTM with the marginal OT map successfully learns group-fair models that exhibit
    (i) competitive prediction performance (Section \ref{sec:exp-tradeoff})
    and
    (ii) higher levels of subset fairness (Section \ref{sec:exp-subsets}),
    when compared to other group-fair models learned by existing baseline algorithms.
    Beyond subset fairness, we further evaluate the self-fulfilling prophecy \citep{10.1145/2090236.2090255} as an additional benefit of low transport cost (see Table \ref{table:interpret} and \ref{table:interpret3} in Section \ref{sec:results-appendix} of Appendix).
    
    \item[$\bullet$] 
    FTM with the joint OT map has the ability to learn group-fair models with improved prediction performance as well as improved levels of equalized odds, when compared to FTM with the marginal OT map (Section \ref{sec:various_T}).    
\end{itemize}

\subsection{Settings}\label{sec:exp-settings}

\paragraph{Datasets}

We use four real-world benchmark tabular datasets in our experiments: \textsc{Adult} \citep{Dua:2019}, \textsc{German} \citep{Dua:2019}, \textsc{Dutch} \citep{dutch}, and \textsc{Bank} \citep{MORO201422}.
The basic information about these datasets is provided in Table \ref{table:datasets} in Section \ref{sec:datasets-appendix} of Appendix.
We randomly partition each dataset into training and test datasets with the 8:2 ratio.
For each split, we learn models using the training dataset and evaluate the models on the test dataset.
This process is repeated 5 times, and the average performance on the test datasets is reported.

\paragraph{Baseline algorithms and implementation details}

For the baseline algorithms, we consider three most popular state-of-the-art methods: Reduction \citep{pmlr-v80-agarwal18a}, Reg (minimizing cross-entropy + $\lambda \Delta \overline{\textup{DP}}^{2}$, which is considered in \citet{donini2018empirical, chuang2021fair}), and Adv (adversarial learning for ensuring that the model's predictions are independent of sensitive attributes, which is proposed by \citet{10.1145/3278721.3278779}).
Additionally, we consider the unfair baseline (abbr. Unfair), the ERM model trained without any fairness regularization or constraint.
For the measure of prediction performance, we use the classification accuracy (abbr. \texttt{Acc}).
For fairness measures, we consider $\Delta \textup{DP},$ $\Delta \overline{\textup{DP}}$ and $\Delta \textup{WDP},$ which are defined in Section \ref{sec:fairness}.

For all algorithms, we employ MLP networks with ReLU activation and two hidden layers, where the hidden size is equal to the input dimension.
We run all algorithms for $200$ epochs and report their final performances on the test dataset.
The Adam optimizer \citep{https://doi.org/10.48550/arxiv.1412.6980} with the initial learning rate of $0.001$ is used.
To obtain the OT map for each mini-batch, we solve the linear program by using the POT library \citep{flamary2021pot}.
We utilize several Intel Xeon Silver 4410Y CPU cores and RTX 3090 GPU processors.
More implementation details with \texttt{Pytorch}-style psuedo-code are provided in Section \ref{sec:algorithms-appendix} and \ref{sec:code-appendix} of Appendix.

\begin{figure*}[h]
    \centering
    \includegraphics[width=0.245\textwidth]{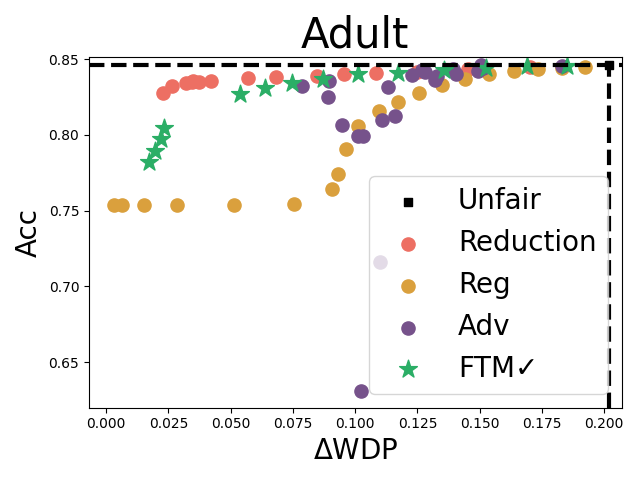}
    \includegraphics[width=0.245\textwidth]{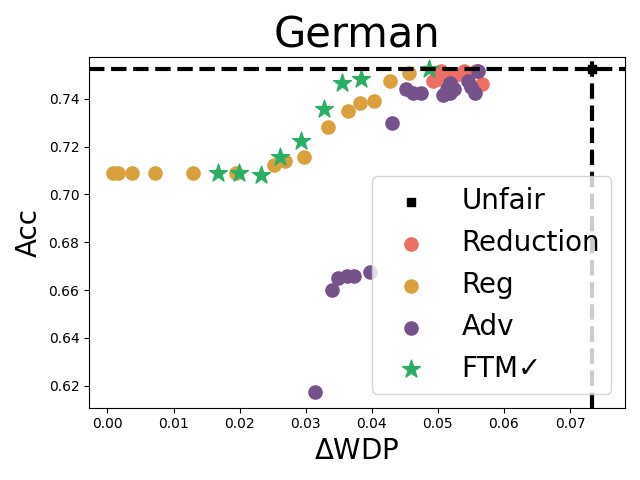}
    \includegraphics[width=0.245\textwidth]{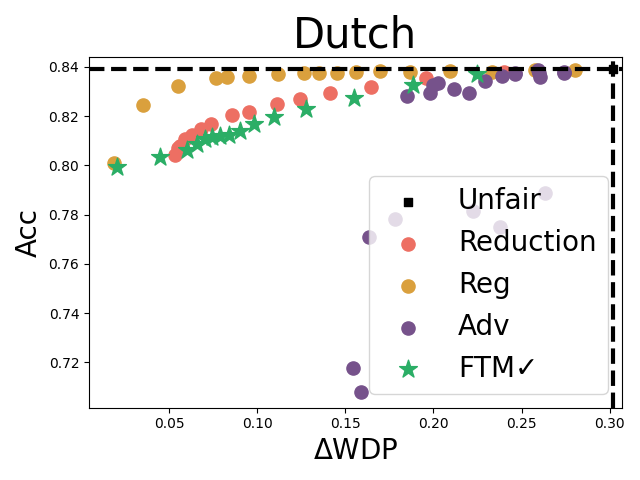}
    \includegraphics[width=0.245\textwidth]{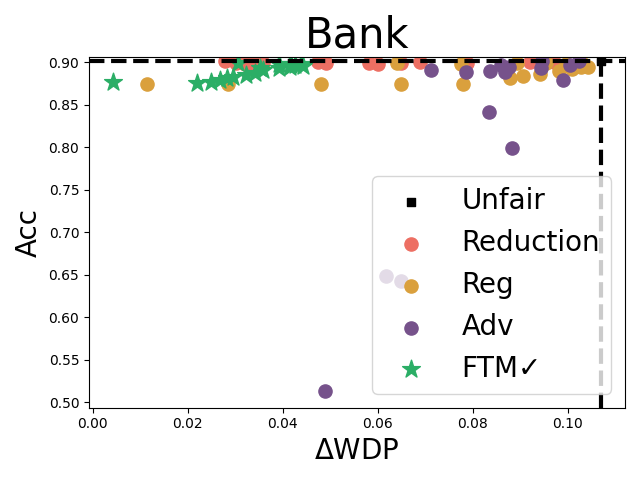}
    \caption{
    Fairness-prediction trade-offs:
    $\Delta \textup{WDP}$ vs. \texttt{Acc} on
    (Left to right) \textsc{Adult}, \textsc{German}, \textsc{Dutch}, \textsc{Bank} datasets.
    Similar results are observed for $\Delta \textup{DP}$ and $\Delta \overline{\textup{DP}}$ in Figure \ref{fig:tradeoff-appen} in Section \ref{sec:results-appendix} of Appendix.
    }
    \label{fig:tradeoff}
    \vskip -0.1in
\end{figure*}

\subsection{FTM with the marginal OT map}

This section presents the performance of FTM with the marginal OT map, in terms of (i) fairness-prediction trade-off and (ii) improvement in subset fairness.

\subsubsection{Fairness-prediction trade-off}\label{sec:exp-tradeoff}

In this section, we empirically verify that learned models by FTM successfully achieves group fairness (demographic parity).
Figure \ref{fig:tradeoff} below shows that FTM successfully learns group-fair models for various fairness levels.

Another main implication is that using the marginal OT map does not hamper prediction performance much.
Figure \ref{fig:tradeoff} supports this assertion in terms of fairness-prediction trade-off, that is, FTM is competitive with the three baselines.
In most datasets, the performance of FTM is not significantly worse than that of the top-performing algorithm (i.e., Reduction).
Notably, on \textsc{German} dataset, FTM performs the best, whereas Reduction fails to learn group-fair models with fairness level under 0.06.
Additionally, FTM mostly outperforms the other two baseline algorithms, Reg and Adv.
Hence, we can conclude that FTM is also a promising algorithm for achieving group fairness.

\subsubsection{Improvement in subset fairness}\label{sec:exp-subsets}

This section highlights the key advantages of using the marginal OT map in terms of subset fairness, which is theoretically supported by Theorem \ref{thm:subset}.
We examine two scenarios for the subset $A$ in Definition \ref{def:subsetfairness}:
(1) random subsets and (2) subsets defined by specific input variables.

\paragraph{Random subsets}

First, we generate a random subset $\mathcal{D}_{\textup{sub}}$ from the test data defined as $\mathcal{D}_{\textup{sub}}=\{i: \mathbf{v}^{\top} \mathbf{x}_{i} \ge 0\},$ using a random vector $\mathbf{v}$ drawn from the uniform distribution on $[-1,1]^d.$ 
Then, we calculate $\Delta \overline{\textup{DP}}$ on $\mathcal{D}_{\textup{sub}}.$
Figure \ref{fig:subsetfair} presents boxplots of the $\Delta \overline{\textup{DP}}$ values calculated on 1,000 randomly generated $\mathcal{D}_{\textup{sub}}$.
Outliers in the boxplots (points in red boxes) represent the example instances of subset unfairness.
For a fair comparison, we evaluate under a given $\Delta \overline{\textup{DP}}$ for each dataset: 0.06 for \textsc{Adult}, 0.01 for \textsc{German}, 0.07 for \textsc{Dutch}, and 0.04 for \textsc{Bank}.
Notably, FTM consistently has the fewest outliers than all the baselines, indicating that FTM achieves higher fairness on random subsets.

\begin{figure*}[ht]
    \centering
    \includegraphics[width=0.24\textwidth]{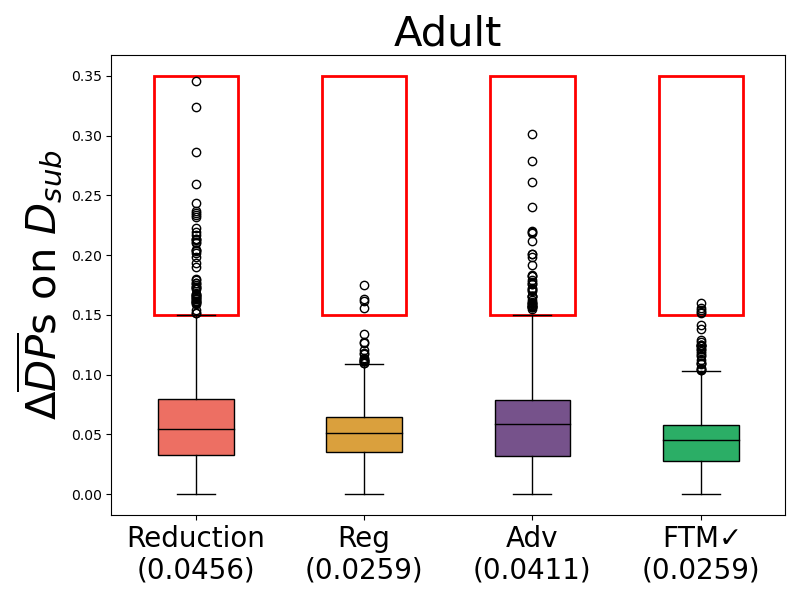}
    \includegraphics[width=0.24\textwidth]{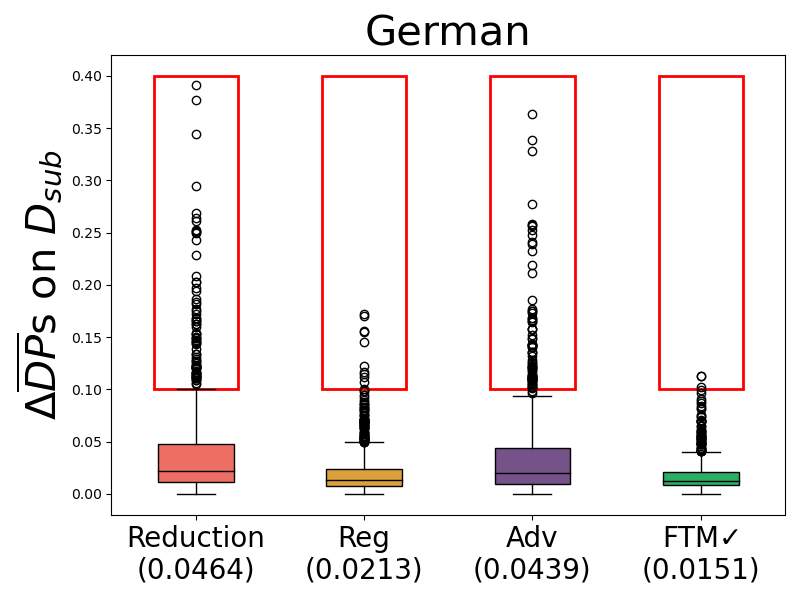}
    \includegraphics[width=0.24\textwidth]{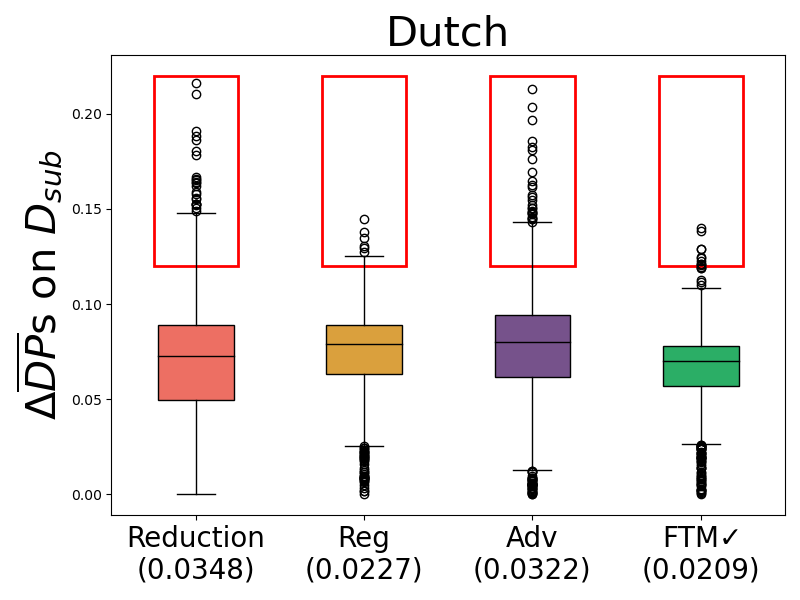}
    \includegraphics[width=0.24\textwidth]{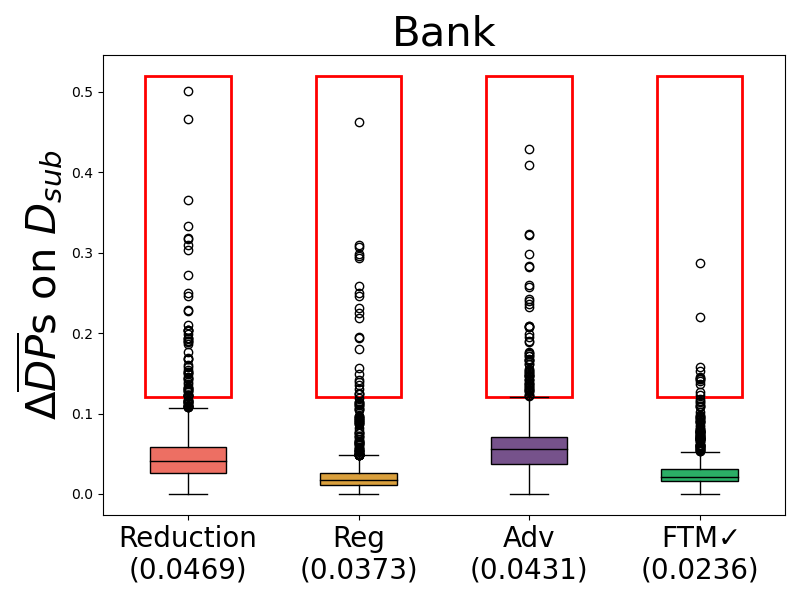}
    \caption{Fairness on random subsets: Boxplots of the levels of $\Delta \overline{\textup{DP}}$ on 1,000 randomly generated subsets $\mathcal{D}_{\textup{sub}}$ of test datasets.
    (Left to right) \textsc{Adult}, \textsc{German}, \textsc{Dutch} and \textsc{Bank}.
    The values presented under the algorithm name (e.g., 0.0151 for FTM in \textsc{German}) are the standard deviations.
    }
    \label{fig:subsetfair}
    \vskip -0.1in
\end{figure*}

\paragraph{Subsets defined by specific input variables}

Second, we focus on subsets defined by a specific input variable.
For this scenario, we construct two subsets by binarizing a specific input variable using its median value.
Note that we learn models considering only gender as the sensitive attribute.
Table \ref{table:subgroup} presents the fairness levels (with respect to gender) in the two subsets of the learned models.
We consider \textsc{German} and \textsc{Dutch} datasets for this analysis.
For \textsc{German} dataset, the two subsets are defined by $ \{ \mathbf{x} \textup{ of high age} \} $ and $ \{ \mathbf{x} \textup{ of low age} \} $.
For \textsc{Dutch} dataset, the two subsets are defined by $ \{ \mathbf{x} \textup{ who is married} \} $ and $ \{ \mathbf{x} \textup{ who is not married} \} $.
The results highlight the superiority of FTM in achieving higher fairness in these specific subsets.

\begin{table}[h]
    \footnotesize
    \centering
    \caption{
    Fairness levels on the subsets defined by specific input variables:
    \textbf{Bold} faced ones highlight the best results, and \underline{underlined} ones are the second best ones.
    Standard errors are provided in Table \ref{table:subgroup-append} and \ref{table:subgroup-2-append} in Section \ref{sec:results-appendix} of Appendix.
    (Left) The subsets are defined by the input variable `age' on \textsc{German} dataset under a given $\Delta \overline{\textup{DP}} = 0.045$.
    (Right) The subsets are defined by the input variable `marital status' on \textsc{Dutch} dataset under a given $\Delta \overline{\textup{DP}} = 0.120$.
    }
    \label{table:subgroup}
    \begin{tabular}{c||c|c|c|c|c|c|c|c|c|c}
        \toprule
        & \multicolumn{5}{c|}{\textsc{German}} & \multicolumn{5}{c}{\textsc{Dutch}}
        \\
        \midrule
        Measure & Subset & Reduction & Reg & Adv & FTM $\checkmark$ & Subset & Reduction & Reg & Adv & FTM $\checkmark$
        \\
        \midrule
        \midrule
        $\Delta \textup{DP}$ & \multirow{3}{*}{High age} & 0.073 & 0.077 & \underline{0.048} & \textbf{0.045} & \multirow{3}{*}{Married} & 0.258 & 0.372  & \underline{0.237}  & \textbf{0.204} 
        \\
        $\Delta \overline{\textup{DP}}$ & & 0.049 & 0.029 & \underline{0.028} & \textbf{0.026} & & 0.182  & \underline{0.164}  & 0.187  & \textbf{0.152} 
        \\
        $\Delta \textup{WDP}$ & & 0.053 & \underline{0.039} & 0.042 & \textbf{0.038} & & 0.183 & \underline{0.172}  & 0.193  & \textbf{0.152} 
        \\
        \midrule
        $\Delta \textup{DP}$ & \multirow{3}{*}{Low age} & 0.118 & \underline{0.116} & 0.122 & \textbf{0.077} & \multirow{3}{*}{Not married} & \textbf{0.061}  & 0.131  & 0.095  & \underline{0.068} 
        \\
        $\Delta \overline{\textup{DP}}$ & & \textbf{0.047} & \underline{0.050} & 0.053 & \textbf{0.047} & & \underline{0.045}  & 0.062  & 0.098  & \textbf{0.036} 
        \\
        $\Delta \textup{WDP}$ & & \underline{0.058} & 0.059 & 0.061 & \textbf{0.054} & & \textbf{0.045}  & 0.072  & 0.098  & \textbf{0.045} 
        \\
        \bottomrule
    \end{tabular}
\end{table}

\subsection{FTM with the joint OT map}\label{sec:various_T}

This section shows the effect of using the joint OT map, in terms of improvement in (i) prediction accuracy, level of equalized odds (Section \ref{sec:partial_improve}), and (ii) the transport cost (Section \ref{sec:increase_tc}).
We use \textsc{Adult} dataset for this analysis.
For FTM with the joint OT map, we fix $\alpha = 100,$ because the results with $\alpha > 100$ are almost identical to those with $\alpha = 100.$
In other words, $100$ is the minimum value for $\alpha$ where $\alpha \vert y_{i}^{(0)} - y_{j}^{(1)} \vert$ fully dominates the transport cost $\Vert \mathbf{x}_{i}^{(0)} - \mathbf{x}_{j}^{(1)} \Vert^{2}.$

\subsubsection{Improvement in prediction accuracy and equalized odds}\label{sec:partial_improve}

\begin{wraptable}{r}{0.6\linewidth}
    \footnotesize
    \centering
    \caption{
    Comparison between (i) the marginal OT map and (ii) the joint OT map in terms of prediction accuracy and level of equalized odds, with the two fixed fairness level $\Delta \textup{DP}$s at 0.033 and 0.054.
    }
    \label{table:rFTM-tradeoff}
    \begin{tabular}{c||c|c|c|c}
        \toprule
        $\Delta \textup{DP} = 0.033$ & \texttt{Acc} ($\uparrow$) & $\Delta \textup{TPR}$ ($\downarrow$) & $\Delta \textup{FPR}$ ($\downarrow$) & $\Delta \textup{EO}$ ($\downarrow$)
        \\
        \midrule
        Marginal OT map & 0.806 & 0.052 & 0.012 & 0.032
        \\
        Joint OT map & 0.810 & 0.043 & 0.013 & 0.028
        \\
        \midrule
        $\Delta \textup{DP} = 0.054$ & \texttt{Acc} ($\uparrow$) & $\Delta \textup{TPR}$ ($\downarrow$) & $\Delta \textup{FPR}$ ($\downarrow$) & $\Delta \textup{EO}$ ($\downarrow$)
        \\
        \midrule
        Marginal OT map & 0.826 & 0.031 & 0.023 & 0.027
        \\
        Joint OT map & 0.830 & 0.021 & 0.019 & 0.020
        \\
        \bottomrule
    \end{tabular}
\end{wraptable}

Table \ref{table:rFTM-tradeoff} shows that FTM with the joint OT map can provide higher prediction performance than FTM with the marginal OT map in certain scenarios, where more accurate group-fair models than FTM with the marginal OT map exist (e.g., $\Delta \textup{DP} \le 0.06$ in Figure \ref{fig:tradeoff}).
Furthermore, we observe that the level of equalized odds can also be improved in these scenarios.
To assess the level of equalized odds, we basically use the differences in TPR and FPR, defined as
$\Delta \textup{TPR} := \vert \mathbb{P}(C_{f_{0}}(\mathbf{X}) = 1 \vert S = 0, Y = 1) - \mathbb{P}(C_{f_{1}}(\mathbf{X}) = 1 \vert S = 1, Y = 1) \vert$
and $ \Delta \textup{FPR} :=\vert \mathbb{P}(C_{f_{0}}(\mathbf{X}) = 1 \vert S = 0, Y = 0) - \mathbb{P}(C_{f_{1}}(\mathbf{X}) = 1 \vert S = 1, Y = 0) \vert,$ respectively.
Note that $\Delta \textup{TPR}$ is also a measure for equal opportunity.
We additionally use an overall measure defined as $\Delta \textup{EO} := \frac{1}{2} \sum_{y = 0, 1} \left\vert \mathbb{P} ( C_{f_{0}}(\mathbf{X}) = 1 | S = 0, Y = y ) - \mathbb{P} ( C_{f_{1}}(\mathbf{X}) = 1 | S = 1, Y = y ) \right\vert,$ which is also considered in previous works \citep{donini2018empirical, chuang2021fair}.

\begin{figure}[h]
    \centering
    \includegraphics[width=1.0\textwidth]{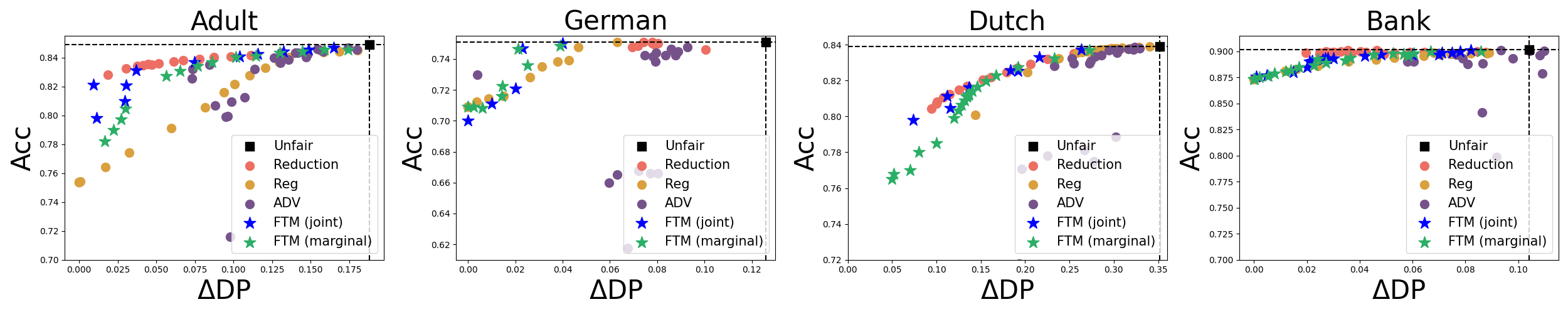}
    \caption{
    Fairness-prediction trade-offs:
    $\Delta \textup{DP}$ vs. \texttt{Acc} on
    (Left to right) \textsc{Adult}, \textsc{German}, \textsc{Dutch}, \textsc{Bank} datasets.
    FTM (joint) = FTM with the joint OT map.
    FTM (marginal) = FTM with the marginal OT map.
    }
    \label{fig:rFTM-tradeoff-all}
    \vskip -0.1in
\end{figure}

In addition, we compare FTM using the marginal OT map, FTM using the joint OT map, and other baseline methods, in terms of the overall fairness-prediction trade-off.
As shown in Figure \ref{fig:rFTM-tradeoff-all}, FTM with the joint OT map achieves performance comparable to the best-performing baseline.
This result suggests that using FTM with an appropriate transport map can empirically achieve the optimal trade-off, highlighting the flexibility of FTM in controlling the fairness-prediction trade-off, by selecting an appropriate transport map.

\subsubsection{Increase in transport cost}\label{sec:increase_tc}

On the other hand, using the joint OT map can result in a higher transport cost (of the fair matching function) compared to the marginal OT map.
To compute the transport cost of the fair matching function, we follow the mini-batch technique introduced in Section \ref{sec:mdp}.
We use 100 random mini-batches with batch size of $m=1024.$
The left panel of Figure \ref{fig:rFTM-tradeoff-2} illustrates that increasing $\alpha$ can improve prediction accuracy, though this improvement comes with a higher transport cost, especially when group-fair models more accurate than FTM with the marginal OT map exist.
That is, at $\Delta \textup{DP} = 0.025,$ a point where a group-fair model more accurate than FTM with the marginal OT map exists (e.g., Reduction in Figure \ref{fig:tradeoff}), both accuracy and transport cost increase, as $\alpha$ increases.

In contrast, the right panel of Figure \ref{fig:rFTM-tradeoff-2} shows that increasing $\alpha$ does not significantly improve prediction accuracy while still incurring a higher transport cost, when FTM with the marginal OT map is competitive with other group-fair models in terms of accuracy.
That is, at $\Delta \textup{DP} = 0.073,$ a point where the accuracy of FTM with the marginal OT map is similar to that of other group-fair models, increasing $\alpha$ does not yield notably beneficial results.

\begin{wrapfigure}{r}{0.5\linewidth}
    \centering
    \includegraphics[width=0.48\textwidth]{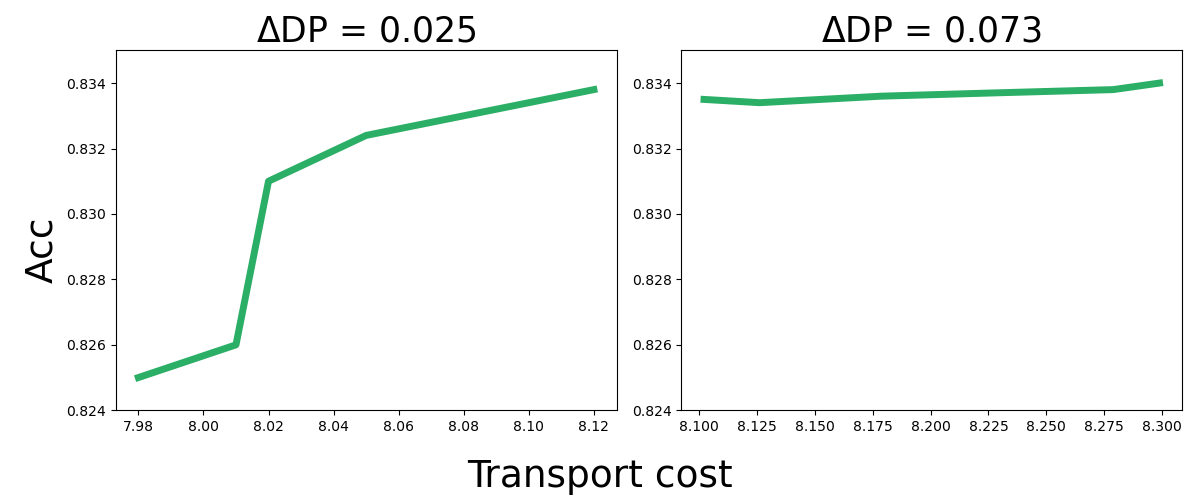}
    \caption{
    Transport cost-prediction trade-offs:
    Transport cost of the {fair matching function} vs. \texttt{Acc} of FTM with the joint OT map of $\alpha \in \{ 0, 1, 5, 10, 50, 100 \}$ on \textsc{Adult} dataset.
    }
    \label{fig:rFTM-tradeoff-2}
\end{wrapfigure}

Overall, FTM with the joint OT map using an appropriately tuned $\alpha$ could be a desirable solution, especially when seeking for a group-fair model that is more accurate than a group-fair model learned by FTM with the marginal OT map.
However, tuning $\alpha$ can be challenging, and compromising subset fairness would be generally not advisable.
Additionally, as discussed in Section \ref{sec:exp-tradeoff}, using the marginal OT map is also competitive with other baselines in terms of prediction accuracy.
Therefore, we basically recommend using the marginal OT map for FTM, while considering the joint OT map is particularly useful when the prediction accuracy of a group-fair model learned by FTM with the marginal OT map is suboptimal.

\subsection{Further comparisons}\label{sec:add_exps}

To further assess the validity of FTM from various perspectives, we consider three scenarios for empirical comparison with baseline algorithms:
(i) FTM is robust under a group imbalance setting (i.e., $\mathcal{P}(S = 0) << \mathcal{P}(S = 1)$).
(ii) FTM can improve the level of individual fairness, compared to existing group fairness algorithms.
(iii) FTM outperforms several existing FRL methods, by achieving higher prediction accuracy for a given level of fairness.
For simplicity, we employ the marginal OT map in this section for these analyses.

\subsubsection{Comparison of stability under imbalance setting}\label{sec:add_exps_imbalance}

We evaluate the stability of FTM and existing algorithms under an imbalanced scenario for the sensitive attribute.
For this purpose, after training/test data split, we construct an imbalanced training dataset with a 5:95 ratio (5\% for $S=0$ and 95\% for $S=1$) by randomly sampling data from $\mathcal{D}_{0}$ and fully using $\mathcal{D}_{1}.$
Then, we measure the performances on test dataset, where the models are learned on this imbalanced training dataset.
We use \textsc{Adult} dataset for this analysis.
For a fixed fairness level of $\Delta \overline{\textup{DP}} \approx 0.064$, we calculate the corresponding prediction accuracy.
This procedure is repeated five times, and we calculate the average as well as the standard deviation.

\begin{table}[h]
    \centering
    \footnotesize
    \caption{
    Stability under imbalance setting:
    For a fixed fairness level $\Delta \overline{\textup{DP}} \approx 0.064$, average (Avg), standard deviation (Std), and coefficient of variation (CV) of the prediction accuracy (\texttt{Acc}) over five random training imbalanced datasets are provided.
    }
    \label{table:stabilities}
    \begin{tabular}{c||c|c|c|c}
        \toprule
        Measure & Reduction & Reg & Adv & FTM $\checkmark$
        \\
        \midrule
        Avg & 0.835 & 0.836 & 0.819 & 0.827
        \\
        Std & 0.003 & 0.003 & 0.021 & 0.002
        \\
        CV & 0.004 & 0.004 & 0.026 & 0.002
        \\
        \bottomrule
    \end{tabular}
    \vskip -0.1in
\end{table}

In Table \ref{table:stabilities}, we present the average, standard deviation, and coefficient of variation (standard deviation $\div$ average) for each algorithm.
The results indicate that FTM offers comparable stability to the baselines, showing that FTM is not particularly unstable under this group imbalance setting.

\subsubsection{Comparison in view of individual fairness}\label{sec:add_exps_indi}

\begin{table}[h]
    \centering
    \footnotesize
    \caption{
    Comparison in view of individual fairness:
    Given a fixed $\Delta \textsc{DP},$ the level of individual fairness (i.e., \texttt{Con}) and corresponding \texttt{Acc} are provided.
    The bold faced values are the best values of \texttt{Con}.
    }
    \label{table:indi}
    \begin{tabular}{c||c|c|c|c}
        \toprule
        \multirow{2}{*}{Dataset ($\Delta \textsc{DP}$)} & \multicolumn{4}{c}{\texttt{Con} (\texttt{Acc})} \\
        & Reduction & Reg & Adv & FTM $\checkmark$ \\
        \midrule
        \midrule
        \textsc{Adult} ($\approx$ 0.08) & 0.915 (0.839) & 0.902 (0.802) & 0.899 (0.829) & \textbf{0.935} (0.838) \\
        \midrule
        \textsc{German} ($\approx$ 0.06) & 0.950 (0.743) & 0.932 (0.745) & 0.944 (0.672) & \textbf{0.965} (0.744) \\
        \midrule
        \textsc{Dutch} ($\approx$ 0.17) & 0.952 (0.824) & 0.914 (0.815) & 0.921 (0.778) & \textbf{0.980} (0.824) \\
        \midrule
        \textsc{Bank} ($\approx$ 0.04) & 0.927 (0.885) & 0.978 (0.885) & 0.911 (0.884) & \textbf{0.972} (0.885) \\
        \bottomrule
    \end{tabular}
    \vskip -0.1in
\end{table}

As discussed in Section \ref{sec:compare}, as FTM is conceptually similar to the individual fairness, we here compare FTM to baseline algorithms for group fairness, in view of individual fairness.
For the measure of individual fairness, we use the consistency (\texttt{Con}) from \citet{Yurochkin2020Training, yurochkin2021sensei}, which is the ratio of consistently predicted labels when we only flip the sensitive variable among the input variables.
Table \ref{table:indi} presents the results, showing that FTM consistently achieves higher  individual fairness than baselines.

\subsubsection{Comparison with FRL methods}\label{sec:add_exps_frl}

\begin{table}[h]
    \centering
    \footnotesize
    \caption{
    Comparison between FTM and FRL methods:
    Given a fixed $\Delta \overline{\textsc{DP}},$ the prediction accuracy (\texttt{Acc}) is provided.
    The bold faced values are the best values of \texttt{Acc}.
    }
    \label{table:compare_gordaliza}
    \begin{tabular}{c||c|c|c|c}
        \toprule
        \multirow{2}{*}{Dataset ($\Delta \textsc{DP}$)} & \multicolumn{4}{c}{\texttt{Acc}} \\
        & LAFTR & sIPM-LFR & \citet{pmlr-v97-gordaliza19a} & FTM $\checkmark$ \\
        \midrule
        \midrule
        \textsc{Adult} ($\approx$ 0.06) & 0.823 & 0.820 & 0.815 & \textbf{0.835} \\
        \midrule
        \textsc{German} ($\approx$ 0.04) & 0.721 & 0.741 & 0.728 & \textbf{0.743} \\
        \midrule
        \textsc{Dutch} ($\approx$ 0.15) & 0.810 & 0.813 & 0.811 & \textbf{0.820} \\
        \midrule
        \textsc{Bank} ($\approx$ 0.02) & 0.877 & 0.878 & 0.876 & \textbf{0.878} \\
        \bottomrule
    \end{tabular}
    \vskip -0.1in
\end{table}

In this section, we experimentally compare FTM with several existing FRL methods.
The baseline FRL methods include three approaches: two adversarial learning-based FRL methods, LAFTR \citep{Madras2018LearningAF} and sIPM-LFR \citep{pmlr-v162-kim22b}, as well as the FRL method proposed by \citet{pmlr-v97-gordaliza19a}, which is particularly similar to FTM in that both utilize the OT map for aligning distributions.
Note that in FRL, fair representations are first obtained (via barycentric mapping for \citet{pmlr-v97-gordaliza19a} and adversarial learning for LAFTR and sIPM-LFR), and then we learn a prediction model on the fair representation space.

The results are presented in Table \ref{table:compare_gordaliza}, showing the superior prediction performance of FTM compared to the baseline FRL methods.
This outperformance of FTM is due to the fact that FRL methods are pre-processing approaches (i.e., using fair representation as a new input feature for the prediction model), whereas FTM is an in-processing approach.


\section{Conclusion and discussion}
\label{sec:conc}

In this paper, we have discussed the existence of implicit transport maps corresponding to each group-fair model.
Specifically, we have introduced a novel group fairness measure named MDP.
Building upon MDP, we propose a novel algorithm, FTM, designed for learning group-fair models with high levels of subset fairness.
Experimental results demonstrate that FTM with the marginal OT map effectively produces group-fair models with improved levels of subset fairness on various subsets compared to baseline models, while maintaining reasonable prediction performance.
Moreover, we have proposed to use the joint OT map to improve the prediction accuracy and equalized odds, compared to the marginal OT map.

We suggest several promising topics for future research:
(1) Expanding FTM to incorporate other fairness notions would be a valuable avenue for future work.
For example, while this paper has focused DP (demographic parity) for simplicity and clarity, but applying FTM to Eqopp (equal opportunity) would be straightforward by restricting the calculation of MDP to instances where $Y=1.$
(2) Exploring scenarios with multiple sensitive attributes, a topic frequently studied in group fairness research, is another valuable direction.
Such cases require matching individuals from more than two protected groups, which would be challenging and is therefore left for future work.
(3) There may be other transport maps that yield different types of group-fair models, while this paper has only considered two transport maps.
Exploring other useful transport maps would yield interesting insights and applications.

\subsubsection*{Broader Impact Statement}
The broad goal of this study is to caution users of fair AI models -- such as social planners and courts -- against focusing solely on group fairness without accounting for the risks of potential discrimination, such as subset fairness.
Additionally, our study aims to equip them with tools to address these aspects.
Even though our study is rather technical, we believe that it can provide a new perspective on algorithmic fairness and could potentially impact policy-making and regulation in related fields.

A key social benefit of the proposed methods is that we can train group-fair models with higher levels of subset fairness without needing to collect and process additional sensitive data.
By doing so, the proposed algorithm is expected to transcend the fairness-privacy trade-off, making it practical for use without conflicting with data protection laws.

Another social impact of our study is that the relationship between transport maps and group-fair models may help us form a new concept of fairness that can be readily accepted by society.
Our approach explores the micro-level behavior of a given group-fair model (i.e., how the model matches individuals rather than simply looking at the statistics), which could facilitate finding reasonable compromises for existing fairness concepts that may appear paradoxical.



\subsubsection*{Acknowledgments}

YK was partly supported by 
the National Research Foundation of Korea(NRF) grant funded by the Korea government(MSIT)(No. 2022R1A5A7083908),
Institute of Information \& communications Technology Planning \& Evaluation (IITP) grant funded by the Korea government(MSIT) (No.RS-2022-II220184, Development and Study of AI Technologies to Inexpensively Conform to Evolving Policy on Ethics),
and Institute of Information \& communications Technology Planning \& Evaluation (IITP) grant funded by the Korea government(MSIT) [NO.RS-2021-II211343, Artificial Intelligence Graduate School Program (Seoul National University)].
MC was supported by a Korea Institute for Advancement of Technology (KIAT) grant funded by the Korea Government (MOTIE) (RS-2024-00409092, 2024 HRD Program for Industrial Innovation).


\clearpage

\bibliography{FTM-arxiv.bib}

\begin{thebibliography}{88}
\providecommand{\natexlab}[1]{#1}
\providecommand{\url}[1]{\texttt{#1}}
\expandafter\ifx\csname urlstyle\endcsname\relax
  \providecommand{\doi}[1]{doi: #1}\else
  \providecommand{\doi}{doi: \begingroup \urlstyle{rm}\Url}\fi

\bibitem[Agarwal et~al.(2018)Agarwal, Beygelzimer, Dudik, Langford, and
  Wallach]{pmlr-v80-agarwal18a}
Alekh Agarwal, Alina Beygelzimer, Miroslav Dudik, John Langford, and Hanna
  Wallach.
\newblock A reductions approach to fair classification.
\newblock In Jennifer Dy and Andreas Krause (eds.), \emph{Proceedings of the
  35th International Conference on Machine Learning}, volume~80 of
  \emph{Proceedings of Machine Learning Research}, pp.\  60--69. PMLR, 10--15
  Jul 2018.
\newblock URL \url{https://proceedings.mlr.press/v80/agarwal18a.html}.

\bibitem[Agarwal et~al.(2019)Agarwal, Dud{\'{\i}}k, and Wu]{ADW19}
Alekh Agarwal, Miroslav Dud{\'{\i}}k, and Zhiwei~Steven Wu.
\newblock Fair regression: Quantitative definitions and reduction-based
  algorithms.
\newblock In \emph{Proceedings of the 36th International Conference on Machine
  Learning, {ICML} 2019, 9-15 June 2019, Long Beach, California, {USA}}, 2019.
\newblock URL \url{http://proceedings.mlr.press/v97/agarwal19d.html}.

\bibitem[Angwin et~al.(2016)Angwin, Larson, Mattu, and
  Kirchner]{angwin2016machine}
Julia Angwin, Jeff Larson, Surya Mattu, and Lauren Kirchner.
\newblock Machine bias.
\newblock \emph{ProPublica, May}, 23:\penalty0 2016, 2016.

\bibitem[Barata et~al.(2021)Barata, Takes, van~den Herik, and
  Veenman]{barata2021fair}
António~Pereira Barata, Frank~W. Takes, H.~Jaap van~den Herik, and Cor~J.
  Veenman.
\newblock Fair tree classifier using strong demographic parity, 2021.

\bibitem[Barocas \& Selbst(2016)Barocas and Selbst]{barocas2016big}
Solon Barocas and Andrew~D Selbst.
\newblock Big data's disparate impact.
\newblock \emph{Calif. L. Rev.}, 104:\penalty0 671, 2016.

\bibitem[Bellamy et~al.(2018)Bellamy, Dey, Hind, Hoffman, Houde, Kannan, Lohia,
  Martino, Mehta, Mojsilovic, Nagar, Ramamurthy, Richards, Saha, Sattigeri,
  Singh, Varshney, and Zhang]{aif360-oct-2018}
Rachel K.~E. Bellamy, Kuntal Dey, Michael Hind, Samuel~C. Hoffman, Stephanie
  Houde, Kalapriya Kannan, Pranay Lohia, Jacquelyn Martino, Sameep Mehta,
  Aleksandra Mojsilovic, Seema Nagar, Karthikeyan~Natesan Ramamurthy, John
  Richards, Diptikalyan Saha, Prasanna Sattigeri, Moninder Singh, Kush~R.
  Varshney, and Yunfeng Zhang.
\newblock {AI Fairness} 360: An extensible toolkit for detecting,
  understanding, and mitigating unwanted algorithmic bias, October 2018.
\newblock URL \url{https://arxiv.org/abs/1810.01943}.

\bibitem[Bird et~al.(2020)Bird, Dudík, Edgar, Horn, Lutz, Milan, Sameki,
  Wallach, and Walker]{bird2020fairlearn}
Sarah Bird, Miro Dudík, Richard Edgar, Brandon Horn, Roman Lutz, Vanessa
  Milan, Mehrnoosh Sameki, Hanna Wallach, and Kathleen Walker.
\newblock Fairlearn: A toolkit for assessing and improving fairness in ai.
\newblock Technical Report MSR-TR-2020-32, Microsoft, May 2020.

\bibitem[Buyl \& Bie(2022)Buyl and Bie]{buyl2022optimal}
Maarten Buyl and Tijl~De Bie.
\newblock Optimal transport of classifiers to fairness.
\newblock In Alice~H. Oh, Alekh Agarwal, Danielle Belgrave, and Kyunghyun Cho
  (eds.), \emph{Advances in Neural Information Processing Systems}, 2022.
\newblock URL \url{https://openreview.net/forum?id=-welFirjMss}.

\bibitem[Calders et~al.(2009)Calders, Kamiran, and
  Pechenizkiy]{calders2009building}
Toon Calders, Faisal Kamiran, and Mykola Pechenizkiy.
\newblock Building classifiers with independency constraints.
\newblock In \emph{2009 IEEE International Conference on Data Mining
  Workshops}, pp.\  13--18. IEEE, 2009.

\bibitem[Carvalho et~al.(2022)Carvalho, Pradelski, and
  Williams]{carvalho2022affirmative}
Jean-Paul Carvalho, Bary Pradelski, and Cole Williams.
\newblock Affirmative action with multidimensional identities.
\newblock \emph{Available at SSRN 4070930}, 2022.

\bibitem[Celis et~al.(2019)Celis, Huang, Keswani, and
  Vishnoi]{celis2019classification}
L~Elisa Celis, Lingxiao Huang, Vijay Keswani, and Nisheeth~K Vishnoi.
\newblock Classification with fairness constraints: A meta-algorithm with
  provable guarantees.
\newblock In \emph{Proceedings of the Conference on Fairness, Accountability,
  and Transparency}, pp.\  319--328, 2019.

\bibitem[Chiappa \& Gillam(2018)Chiappa and
  Gillam]{https://doi.org/10.48550/arxiv.1802.08139}
Silvia Chiappa and Thomas P.~S. Gillam.
\newblock Path-specific counterfactual fairness, 2018.
\newblock URL \url{https://arxiv.org/abs/1802.08139}.

\bibitem[Chizat et~al.(2020)Chizat, Roussillon, L\'{e}ger, Vialard, and
  Peyr\'{e}]{10.5555/3495724.3495914}
L\'{e}na\"{\i}c Chizat, Pierre Roussillon, Flavien L\'{e}ger,
  Fran\c{c}ois-Xavier Vialard, and Gabriel Peyr\'{e}.
\newblock Faster wasserstein distance estimation with the sinkhorn divergence.
\newblock In \emph{Proceedings of the 34th International Conference on Neural
  Information Processing Systems}, NIPS '20, Red Hook, NY, USA, 2020. Curran
  Associates Inc.
\newblock ISBN 9781713829546.

\bibitem[Chouldechova(2016)]{https://doi.org/10.48550/arxiv.1610.07524}
Alexandra Chouldechova.
\newblock Fair prediction with disparate impact: A study of bias in recidivism
  prediction instruments, 2016.
\newblock URL \url{https://arxiv.org/abs/1610.07524}.

\bibitem[Chuang \& Mroueh(2021)Chuang and Mroueh]{chuang2021fair}
Ching-Yao Chuang and Youssef Mroueh.
\newblock Fair mixup: Fairness via interpolation.
\newblock In \emph{International Conference on Learning Representations}, 2021.
\newblock URL \url{https://openreview.net/forum?id=DNl5s5BXeBn}.

\bibitem[Chzhen et~al.(2019)Chzhen, Denis, Hebiri, Oneto, and
  Pontil]{chzhen2019leveraging}
Evgenii Chzhen, Christophe Denis, Mohamed Hebiri, Luca Oneto, and Massimiliano
  Pontil.
\newblock Leveraging labeled and unlabeled data for consistent fair binary
  classification.
\newblock In \emph{Advances in Neural Information Processing Systems}, pp.\
  12760--12770, 2019.

\bibitem[Chzhen et~al.(2020)Chzhen, Denis, Hebiri, Oneto, and
  Pontil]{NEURIPS2020_51cdbd26}
Evgenii Chzhen, Christophe Denis, Mohamed Hebiri, Luca Oneto, and Massimiliano
  Pontil.
\newblock Fair regression with wasserstein barycenters.
\newblock In H.~Larochelle, M.~Ranzato, R.~Hadsell, M.F. Balcan, and H.~Lin
  (eds.), \emph{Advances in Neural Information Processing Systems}, volume~33,
  pp.\  7321--7331. Curran Associates, Inc., 2020.
\newblock URL
  \url{https://proceedings.neurips.cc/paper/2020/file/51cdbd2611e844ece5d80878eb770436-Paper.pdf}.

\bibitem[Corbett-Davies et~al.(2017)Corbett-Davies, Pierson, Feller, Goel, and
  Huq]{corbett2017algorithmic}
Sam Corbett-Davies, Emma Pierson, Avi Feller, Sharad Goel, and Aziz Huq.
\newblock Algorithmic decision making and the cost of fairness.
\newblock In \emph{Proceedings of the 23rd acm sigkdd international conference
  on knowledge discovery and data mining}, pp.\  797--806, 2017.

\bibitem[Cotter et~al.(2019)Cotter, Jiang, and
  Sridharan]{Cotter2019TwoPlayerGF}
Andrew Cotter, Heinrich Jiang, and Karthik Sridharan.
\newblock Two-player games for efficient non-convex constrained optimization.
\newblock In \emph{ALT}, 2019.

\bibitem[Creager et~al.(2019)Creager, Madras, Jacobsen, Weis, Swersky, Pitassi,
  and Zemel]{creager2019flexibly}
Elliot Creager, David Madras, J{\"o}rn-Henrik Jacobsen, Marissa Weis, Kevin
  Swersky, Toniann Pitassi, and Richard Zemel.
\newblock Flexibly fair representation learning by disentanglement.
\newblock In \emph{International Conference on Machine Learning}, pp.\
  1436--1445. PMLR, 2019.

\bibitem[Cuturi(2013)]{NIPS2013_af21d0c9}
Marco Cuturi.
\newblock Sinkhorn distances: Lightspeed computation of optimal transport.
\newblock In C.J. Burges, L.~Bottou, M.~Welling, Z.~Ghahramani, and K.Q.
  Weinberger (eds.), \emph{Advances in Neural Information Processing Systems},
  volume~26. Curran Associates, Inc., 2013.
\newblock URL
  \url{https://proceedings.neurips.cc/paper/2013/file/af21d0c97db2e27e13572cbf59eb343d-Paper.pdf}.

\bibitem[Damodaran et~al.(2018)Damodaran, Kellenberger, Flamary, Tuia, and
  Courty]{10.1007/978-3-030-01225-0_28}
Bharath~Bhushan Damodaran, Benjamin Kellenberger, R\'{e}mi Flamary, Devis Tuia,
  and Nicolas Courty.
\newblock Deepjdot: Deep joint distribution optimal transport for unsupervised
  domain adaptation.
\newblock In \emph{Computer Vision – ECCV 2018: 15th European Conference,
  Munich, Germany, September 8-14, 2018, Proceedings, Part IV}, pp.\
  467–483, Berlin, Heidelberg, 2018. Springer-Verlag.
\newblock ISBN 978-3-030-01224-3.
\newblock \doi{10.1007/978-3-030-01225-0_28}.
\newblock URL \url{https://doi.org/10.1007/978-3-030-01225-0_28}.

\bibitem[Deb et~al.(2021)Deb, Ghosal, and Sen]{Deb2021RatesOE}
Nabarun Deb, Promit Ghosal, and Bodhisattva Sen.
\newblock Rates of estimation of optimal transport maps using plug-in
  estimators via barycentric projections.
\newblock In \emph{NeurIPS}, 2021.

\bibitem[Donini et~al.(2018)Donini, Oneto, Ben-David, Shawe-Taylor, and
  Pontil]{donini2018empirical}
Michele Donini, Luca Oneto, Shai Ben-David, John~S Shawe-Taylor, and
  Massimiliano Pontil.
\newblock Empirical risk minimization under fairness constraints.
\newblock In \emph{Advances in Neural Information Processing Systems}, pp.\
  2791--2801, 2018.

\bibitem[Dua \& Graff(2017)Dua and Graff]{Dua:2019}
Dheeru Dua and Casey Graff.
\newblock {UCI} machine learning repository, 2017.
\newblock URL \url{http://archive.ics.uci.edu/ml}.

\bibitem[Dwork et~al.(2012)Dwork, Hardt, Pitassi, Reingold, and
  Zemel]{10.1145/2090236.2090255}
Cynthia Dwork, Moritz Hardt, Toniann Pitassi, Omer Reingold, and Richard Zemel.
\newblock Fairness through awareness.
\newblock In \emph{Proceedings of the 3rd Innovations in Theoretical Computer
  Science Conference}, ITCS '12, pp.\  214–226, New York, NY, USA, 2012.
  Association for Computing Machinery.
\newblock ISBN 9781450311151.
\newblock \doi{10.1145/2090236.2090255}.
\newblock URL \url{https://doi.org/10.1145/2090236.2090255}.

\bibitem[Edwards \& Storkey(2016)Edwards and
  Storkey]{9aa5ba8a091248d597ff7cf0173da151}
Harrison Edwards and Amos Storkey.
\newblock Censoring representations with an adversary.
\newblock In \emph{International Conference in Learning Representations
  (ICLR2016)}, pp.\  1--14, May 2016.
\newblock URL
  \url{https://iclr.cc/archive/www/doku.php%3Fid=iclr2016:main.html}.
\newblock 4th International Conference on Learning Representations, ICLR 2016 ;
  Conference date: 02-05-2016 Through 04-05-2016.

\bibitem[Feldman et~al.(2015)Feldman, Friedler, Moeller, Scheidegger, and
  Venkatasubramanian]{feldman2015certifying}
Michael Feldman, Sorelle~A Friedler, John Moeller, Carlos Scheidegger, and
  Suresh Venkatasubramanian.
\newblock Certifying and removing disparate impact.
\newblock In \emph{proceedings of the 21th ACM SIGKDD international conference
  on knowledge discovery and data mining}, pp.\  259--268, 2015.

\bibitem[Fish et~al.(2016)Fish, Kun, and Lelkes]{fish2016confidence}
Benjamin Fish, Jeremy Kun, and {\'A}d{\'a}m~D Lelkes.
\newblock A confidence-based approach for balancing fairness and accuracy.
\newblock In \emph{Proceedings of the 2016 SIAM International Conference on
  Data Mining}, pp.\  144--152. SIAM, 2016.

\bibitem[Flamary et~al.(2021)Flamary, Courty, Gramfort, Alaya, Boisbunon,
  Chambon, Chapel, Corenflos, Fatras, Fournier, Gautheron, Gayraud, Janati,
  Rakotomamonjy, Redko, Rolet, Schutz, Seguy, Sutherland, Tavenard, Tong, and
  Vayer]{flamary2021pot}
R{\'e}mi Flamary, Nicolas Courty, Alexandre Gramfort, Mokhtar~Z. Alaya,
  Aur{\'e}lie Boisbunon, Stanislas Chambon, Laetitia Chapel, Adrien Corenflos,
  Kilian Fatras, Nemo Fournier, L{\'e}o Gautheron, Nathalie~T.H. Gayraud,
  Hicham Janati, Alain Rakotomamonjy, Ievgen Redko, Antoine Rolet, Antony
  Schutz, Vivien Seguy, Danica~J. Sutherland, Romain Tavenard, Alexander Tong,
  and Titouan Vayer.
\newblock Pot: Python optimal transport.
\newblock \emph{Journal of Machine Learning Research}, 22\penalty0
  (78):\penalty0 1--8, 2021.
\newblock URL \url{http://jmlr.org/papers/v22/20-451.html}.

\bibitem[Forrow et~al.(2019)Forrow, H\"{u}tter, Nitzan, Rigollet, Schiebinger,
  and Weed]{pmlr-v89-forrow19a}
Aden Forrow, Jan-Christian H\"{u}tter, Mor Nitzan, Philippe Rigollet, Geoffrey
  Schiebinger, and Jonathan Weed.
\newblock Statistical optimal transport via factored couplings.
\newblock In Kamalika Chaudhuri and Masashi Sugiyama (eds.), \emph{Proceedings
  of the Twenty-Second International Conference on Artificial Intelligence and
  Statistics}, volume~89 of \emph{Proceedings of Machine Learning Research},
  pp.\  2454--2465. PMLR, 16--18 Apr 2019.
\newblock URL \url{https://proceedings.mlr.press/v89/forrow19a.html}.

\bibitem[Genevay et~al.(2016)Genevay, Cuturi, Peyr\'{e}, and
  Bach]{10.5555/3157382.3157482}
Aude Genevay, Marco Cuturi, Gabriel Peyr\'{e}, and Francis Bach.
\newblock Stochastic optimization for large-scale optimal transport.
\newblock In \emph{Proceedings of the 30th International Conference on Neural
  Information Processing Systems}, NIPS'16, pp.\  3440–3448, Red Hook, NY,
  USA, 2016. Curran Associates Inc.
\newblock ISBN 9781510838819.

\bibitem[Goh et~al.(2016)Goh, Cotter, Gupta, and
  Friedlander]{goh2016satisfying}
Gabriel Goh, Andrew Cotter, Maya Gupta, and Michael~P Friedlander.
\newblock Satisfying real-world goals with dataset constraints.
\newblock In \emph{Advances in Neural Information Processing Systems}, pp.\
  2415--2423, 2016.

\bibitem[Goodfellow et~al.(2014)Goodfellow, Pouget-Abadie, Mirza, Xu,
  Warde-Farley, Ozair, Courville, and Bengio]{NIPS2014_5ca3e9b1}
Ian Goodfellow, Jean Pouget-Abadie, Mehdi Mirza, Bing Xu, David Warde-Farley,
  Sherjil Ozair, Aaron Courville, and Yoshua Bengio.
\newblock Generative adversarial nets.
\newblock In Z.~Ghahramani, M.~Welling, C.~Cortes, N.~Lawrence, and K.Q.
  Weinberger (eds.), \emph{Advances in Neural Information Processing Systems},
  volume~27. Curran Associates, Inc., 2014.
\newblock URL
  \url{https://proceedings.neurips.cc/paper/2014/file/5ca3e9b122f61f8f06494c97b1afccf3-Paper.pdf}.

\bibitem[Gordaliza et~al.(2019)Gordaliza, Barrio, Fabrice, and
  Loubes]{pmlr-v97-gordaliza19a}
Paula Gordaliza, Eustasio~Del Barrio, Gamboa Fabrice, and Jean-Michel Loubes.
\newblock Obtaining fairness using optimal transport theory.
\newblock In Kamalika Chaudhuri and Ruslan Salakhutdinov (eds.),
  \emph{Proceedings of the 36th International Conference on Machine Learning},
  volume~97 of \emph{Proceedings of Machine Learning Research}, pp.\
  2357--2365. PMLR, 09--15 Jun 2019.
\newblock URL \url{https://proceedings.mlr.press/v97/gordaliza19a.html}.

\bibitem[Hardt et~al.(2016{\natexlab{a}})Hardt, Price, Price, and
  Srebro]{NIPS2016_9d268236}
Moritz Hardt, Eric Price, Eric Price, and Nati Srebro.
\newblock Equality of opportunity in supervised learning.
\newblock In D.~Lee, M.~Sugiyama, U.~Luxburg, I.~Guyon, and R.~Garnett (eds.),
  \emph{Advances in Neural Information Processing Systems}, volume~29. Curran
  Associates, Inc., 2016{\natexlab{a}}.
\newblock URL
  \url{https://proceedings.neurips.cc/paper/2016/file/9d2682367c3935defcb1f9e247a97c0d-Paper.pdf}.

\bibitem[Hardt et~al.(2016{\natexlab{b}})Hardt, Price, and
  Srebro]{hardt2016equality}
Moritz Hardt, Eric Price, and Nati Srebro.
\newblock Equality of opportunity in supervised learning.
\newblock In \emph{Advances in neural information processing systems}, pp.\
  3315--3323, 2016{\natexlab{b}}.

\bibitem[Hellman(2019)]{Hellman2019MeasuringAF}
Deborah Hellman.
\newblock Measuring algorithmic fairness.
\newblock \emph{Criminal Procedure eJournal}, 2019.
\newblock URL \url{https://api.semanticscholar.org/CorpusID:199002104}.

\bibitem[H{\"u}tter \& Rigollet(2021)H{\"u}tter and
  Rigollet]{10.1214/20-AOS1997}
Jan-Christian H{\"u}tter and Philippe Rigollet.
\newblock {Minimax estimation of smooth optimal transport maps}.
\newblock \emph{The Annals of Statistics}, 49\penalty0 (2):\penalty0 1166 --
  1194, 2021.
\newblock \doi{10.1214/20-AOS1997}.
\newblock URL \url{https://doi.org/10.1214/20-AOS1997}.

\bibitem[Ingold \& Soper(2016)Ingold and Soper]{ingold2016amazon}
David Ingold and Spencer Soper.
\newblock Amazon doesn’t consider the race of its customers. should it.
\newblock \emph{Bloomberg, April}, 1, 2016.

\bibitem[Jiang et~al.(2020{\natexlab{a}})Jiang, Pacchiano, Stepleton, Jiang,
  and Chiappa]{jiang2020wasserstein}
Ray Jiang, Aldo Pacchiano, Tom Stepleton, Heinrich Jiang, and Silvia Chiappa.
\newblock Wasserstein fair classification.
\newblock In \emph{Uncertainty in Artificial Intelligence}, pp.\  862--872.
  PMLR, 2020{\natexlab{a}}.

\bibitem[Jiang et~al.(2020{\natexlab{b}})Jiang, Pacchiano, Stepleton, Jiang,
  and Chiappa]{pmlr-v115-jiang20a}
Ray Jiang, Aldo Pacchiano, Tom Stepleton, Heinrich Jiang, and Silvia Chiappa.
\newblock Wasserstein fair classification.
\newblock In Ryan~P. Adams and Vibhav Gogate (eds.), \emph{Proceedings of The
  35th Uncertainty in Artificial Intelligence Conference}, volume 115 of
  \emph{Proceedings of Machine Learning Research}, pp.\  862--872. PMLR, 22--25
  Jul 2020{\natexlab{b}}.
\newblock URL \url{https://proceedings.mlr.press/v115/jiang20a.html}.

\bibitem[Kamiran et~al.(2012)Kamiran, Karim, and Zhang]{kamiran2012decision}
Faisal Kamiran, Asim Karim, and Xiangliang Zhang.
\newblock Decision theory for discrimination-aware classification.
\newblock In \emph{2012 IEEE 12th International Conference on Data Mining},
  pp.\  924--929. IEEE, 2012.

\bibitem[Kamishima et~al.(2012)Kamishima, Akaho, Asoh, and
  Sakuma]{kamishima2012fairness}
Toshihiro Kamishima, Shotaro Akaho, Hideki Asoh, and Jun Sakuma.
\newblock Fairness-aware classifier with prejudice remover regularizer.
\newblock In \emph{Joint European Conference on Machine Learning and Knowledge
  Discovery in Databases}, pp.\  35--50. Springer, 2012.

\bibitem[Kantorovich(2006)]{Kantorovich2006OnTT}
Lev Kantorovich.
\newblock On the translocation of masses.
\newblock \emph{Journal of Mathematical Sciences}, 133:\penalty0 1381--1382,
  2006.

\bibitem[Kearns et~al.(2018{\natexlab{a}})Kearns, Neel, Roth, and
  Wu]{kearns2018empirical}
Michael Kearns, Seth Neel, Aaron Roth, and Zhiwei~Steven Wu.
\newblock An empirical study of rich subgroup fairness for machine learning,
  2018{\natexlab{a}}.

\bibitem[Kearns et~al.(2018{\natexlab{b}})Kearns, Neel, Roth, and
  Wu]{pmlr-v80-kearns18a}
Michael Kearns, Seth Neel, Aaron Roth, and Zhiwei~Steven Wu.
\newblock Preventing fairness gerrymandering: Auditing and learning for
  subgroup fairness.
\newblock In Jennifer Dy and Andreas Krause (eds.), \emph{Proceedings of the
  35th International Conference on Machine Learning}, volume~80 of
  \emph{Proceedings of Machine Learning Research}, pp.\  2564--2572. PMLR,
  10--15 Jul 2018{\natexlab{b}}.
\newblock URL \url{https://proceedings.mlr.press/v80/kearns18a.html}.

\bibitem[Kim et~al.(2022{\natexlab{a}})Kim, Kim, Kong, Ohn, and
  Kim]{pmlr-v162-kim22b}
Dongha Kim, Kunwoong Kim, Insung Kong, Ilsang Ohn, and Yongdai Kim.
\newblock Learning fair representation with a parametric integral probability
  metric.
\newblock In Kamalika Chaudhuri, Stefanie Jegelka, Le~Song, Csaba Szepesvari,
  Gang Niu, and Sivan Sabato (eds.), \emph{Proceedings of the 39th
  International Conference on Machine Learning}, volume 162 of
  \emph{Proceedings of Machine Learning Research}, pp.\  11074--11101. PMLR,
  17--23 Jul 2022{\natexlab{a}}.
\newblock URL \url{https://proceedings.mlr.press/v162/kim22b.html}.

\bibitem[Kim et~al.(2022{\natexlab{b}})Kim, Ohn, Kim, and Kim]{KIM2022441}
Kunwoong Kim, Ilsang Ohn, Sara Kim, and Yongdai Kim.
\newblock Slide: A surrogate fairness constraint to ensure fairness
  consistency.
\newblock \emph{Neural Networks}, 154:\penalty0 441--454, 2022{\natexlab{b}}.
\newblock ISSN 0893-6080.
\newblock \doi{https://doi.org/10.1016/j.neunet.2022.07.027}.
\newblock URL
  \url{https://www.sciencedirect.com/science/article/pii/S0893608022002891}.

\bibitem[Kingma \& Ba(2014)Kingma and
  Ba]{https://doi.org/10.48550/arxiv.1412.6980}
Diederik~P. Kingma and Jimmy Ba.
\newblock Adam: A method for stochastic optimization, 2014.
\newblock URL \url{https://arxiv.org/abs/1412.6980}.

\bibitem[Kleinberg et~al.(2018)Kleinberg, Ludwig, Mullainathan, and
  Rambachan]{kleinberg2018algorithmic}
Jon Kleinberg, Jens Ludwig, Sendhil Mullainathan, and Ashesh Rambachan.
\newblock Algorithmic fairness.
\newblock In \emph{Aea papers and proceedings}, volume 108, pp.\  22--27, 2018.

\bibitem[Knott \& Smith(1984)Knott and Smith]{Knott1984OnTO}
Martin Knott and C.~S. Smith.
\newblock On the optimal mapping of distributions.
\newblock \emph{Journal of Optimization Theory and Applications}, 43:\penalty0
  39--49, 1984.

\bibitem[Kusner et~al.(2017)Kusner, Loftus, Russell, and
  Silva]{https://doi.org/10.48550/arxiv.1703.06856}
Matt~J. Kusner, Joshua~R. Loftus, Chris Russell, and Ricardo Silva.
\newblock Counterfactual fairness, 2017.
\newblock URL \url{https://arxiv.org/abs/1703.06856}.

\bibitem[Li et~al.(2015)Li, Swersky, and Zemel]{pmlr-v37-li15}
Yujia Li, Kevin Swersky, and Rich Zemel.
\newblock Generative moment matching networks.
\newblock In Francis Bach and David Blei (eds.), \emph{Proceedings of the 32nd
  International Conference on Machine Learning}, volume~37 of \emph{Proceedings
  of Machine Learning Research}, pp.\  1718--1727, Lille, France, 07--09 Jul
  2015. PMLR.
\newblock URL \url{https://proceedings.mlr.press/v37/li15.html}.

\bibitem[Madras et~al.(2018)Madras, Creager, Pitassi, and
  Zemel]{Madras2018LearningAF}
David Madras, Elliot Creager, Toniann Pitassi, and Richard~S. Zemel.
\newblock Learning adversarially fair and transferable representations.
\newblock In \emph{ICML}, 2018.

\bibitem[Mehrabi et~al.(2019)Mehrabi, Morstatter, Saxena, Lerman, and
  Galstyan]{mehrabi2019survey}
Ninareh Mehrabi, Fred Morstatter, Nripsuta Saxena, Kristina Lerman, and Aram
  Galstyan.
\newblock A survey on bias and fairness in machine learning.
\newblock \emph{arXiv preprint arXiv:1908.09635}, 2019.

\bibitem[Mehrotra et~al.(2022)Mehrotra, Pradelski, and
  Vishnoi]{10.1145/3531146.3533124}
Anay Mehrotra, Bary S.~R. Pradelski, and Nisheeth~K. Vishnoi.
\newblock Selection in the presence of implicit bias: The advantage of
  intersectional constraints.
\newblock In \emph{Proceedings of the 2022 ACM Conference on Fairness,
  Accountability, and Transparency}, FAccT '22, pp.\  599–609, New York, NY,
  USA, 2022. Association for Computing Machinery.
\newblock ISBN 9781450393522.
\newblock \doi{10.1145/3531146.3533124}.
\newblock URL \url{https://doi.org/10.1145/3531146.3533124}.

\bibitem[Molina \& Loiseau(2022)Molina and Loiseau]{molina2022bounding}
Mathieu Molina and Patrick Loiseau.
\newblock Bounding and approximating intersectional fairness through marginal
  fairness.
\newblock In Alice~H. Oh, Alekh Agarwal, Danielle Belgrave, and Kyunghyun Cho
  (eds.), \emph{Advances in Neural Information Processing Systems}, 2022.
\newblock URL \url{https://openreview.net/forum?id=LffWuGtC9BE}.

\bibitem[Monge(1781)]{monge1781memoire}
Gaspard Monge.
\newblock \emph{M{\'e}moire sur la th{\'e}orie des d{\'e}blais et des
  remblais}.
\newblock De l'Imprimerie Royale, 1781.

\bibitem[Moro et~al.(2014)Moro, Cortez, and Rita]{MORO201422}
Sérgio Moro, Paulo Cortez, and Paulo Rita.
\newblock A data-driven approach to predict the success of bank telemarketing.
\newblock \emph{Decision Support Systems}, 62:\penalty0 22--31, 2014.
\newblock ISSN 0167-9236.
\newblock \doi{https://doi.org/10.1016/j.dss.2014.03.001}.
\newblock URL
  \url{https://www.sciencedirect.com/science/article/pii/S016792361400061X}.

\bibitem[Mougan et~al.(2024)Mougan, Ferrara, State, Ruggieri, and
  Staab]{mougan2024beyond}
Carlos Mougan, Antonio Ferrara, Laura State, Salvatore Ruggieri, and Steffen
  Staab.
\newblock Beyond demographic parity: Redefining equal treatment, 2024.
\newblock URL \url{https://openreview.net/forum?id=cVea4KQ4xm}.

\bibitem[Nilforoshan et~al.(2022)Nilforoshan, Gaebler, Shroff, and
  Goel]{pmlr-v162-nilforoshan22a}
Hamed Nilforoshan, Johann~D Gaebler, Ravi Shroff, and Sharad Goel.
\newblock Causal conceptions of fairness and their consequences.
\newblock In Kamalika Chaudhuri, Stefanie Jegelka, Le~Song, Csaba Szepesvari,
  Gang Niu, and Sivan Sabato (eds.), \emph{Proceedings of the 39th
  International Conference on Machine Learning}, volume 162 of
  \emph{Proceedings of Machine Learning Research}, pp.\  16848--16887. PMLR,
  17--23 Jul 2022.
\newblock URL \url{https://proceedings.mlr.press/v162/nilforoshan22a.html}.

\bibitem[Olkin \& Pukelsheim(1982)Olkin and Pukelsheim]{OLKIN1982257}
I.~Olkin and F.~Pukelsheim.
\newblock The distance between two random vectors with given dispersion
  matrices.
\newblock \emph{Linear Algebra and its Applications}, 48:\penalty0 257--263,
  1982.
\newblock ISSN 0024-3795.
\newblock \doi{https://doi.org/10.1016/0024-3795(82)90112-4}.
\newblock URL
  \url{https://www.sciencedirect.com/science/article/pii/0024379582901124}.

\bibitem[Petersen et~al.(2021)Petersen, Mukherjee, Sun, and
  Yurochkin]{https://doi.org/10.48550/arxiv.2110.13796}
Felix Petersen, Debarghya Mukherjee, Yuekai Sun, and Mikhail Yurochkin.
\newblock Post-processing for individual fairness, 2021.
\newblock URL \url{https://arxiv.org/abs/2110.13796}.

\bibitem[Pleiss et~al.(2017)Pleiss, Raghavan, Wu, Kleinberg, and
  Weinberger]{pleiss2017fairness}
Geoff Pleiss, Manish Raghavan, Felix Wu, Jon Kleinberg, and Kilian~Q
  Weinberger.
\newblock On fairness and calibration.
\newblock In \emph{Advances in Neural Information Processing Systems}, pp.\
  5680--5689, 2017.

\bibitem[Quy et~al.(2022)Quy, Roy, Iosifidis, Zhang, and Ntoutsi]{Le_Quy_2022}
Tai~Le Quy, Arjun Roy, Vasileios Iosifidis, Wenbin Zhang, and Eirini Ntoutsi.
\newblock A survey on datasets for fairness-aware machine learning.
\newblock \emph{{WIREs} Data Mining and Knowledge Discovery}, 12\penalty0 (3),
  mar 2022.
\newblock \doi{10.1002/widm.1452}.
\newblock URL \url{https://doi.org/10.1002%2Fwidm.1452}.

\bibitem[Rachev \& R{\"u}schendorf(1998)Rachev and
  R{\"u}schendorf]{rachev1998mass}
Svetlozar~T Rachev and Ludger R{\"u}schendorf.
\newblock \emph{Mass Transportation Problems: Volume I: Theory}, volume~1.
\newblock Springer Science \& Business Media, 1998.

\bibitem[Salimans et~al.(2018)Salimans, Zhang, Radford, and
  Metaxas]{salimans2018improving}
Tim Salimans, Han Zhang, Alec Radford, and Dimitris Metaxas.
\newblock Improving gans using optimal transport, 2018.

\bibitem[Shui et~al.(2022)Shui, Xu, CHEN, Li, Ling, Arbel, Wang, and
  Gagn{\'e}]{shui2022on}
Changjian Shui, Gezheng Xu, Qi~CHEN, Jiaqi Li, Charles Ling, Tal Arbel, Boyu
  Wang, and Christian Gagn{\'e}.
\newblock On learning fairness and accuracy on multiple subgroups.
\newblock In Alice~H. Oh, Alekh Agarwal, Danielle Belgrave, and Kyunghyun Cho
  (eds.), \emph{Advances in Neural Information Processing Systems}, 2022.
\newblock URL \url{https://openreview.net/forum?id=YsRH6uVcx2l}.

\bibitem[Silvia et~al.(2020)Silvia, Ray, Tom, Aldo, Heinrich, and
  John]{Silvia_Ray_Tom_Aldo_Heinrich_John_2020}
Chiappa Silvia, Jiang Ray, Stepleton Tom, Pacchiano Aldo, Jiang Heinrich, and
  Aslanides John.
\newblock A general approach to fairness with optimal transport.
\newblock \emph{Proceedings of the AAAI Conference on Artificial Intelligence},
  34\penalty0 (04):\penalty0 3633--3640, Apr. 2020.
\newblock \doi{10.1609/aaai.v34i04.5771}.
\newblock URL \url{https://ojs.aaai.org/index.php/AAAI/article/view/5771}.

\bibitem[Simons et~al.(2021)Simons, Adams~Bhatti, and
  Weller]{10.1145/3461702.3462556}
Joshua Simons, Sophia Adams~Bhatti, and Adrian Weller.
\newblock Machine learning and the meaning of equal treatment.
\newblock In \emph{Proceedings of the 2021 AAAI/ACM Conference on AI, Ethics,
  and Society}, AIES '21, pp.\  956–966, New York, NY, USA, 2021. Association
  for Computing Machinery.
\newblock ISBN 9781450384735.
\newblock \doi{10.1145/3461702.3462556}.
\newblock URL \url{https://doi.org/10.1145/3461702.3462556}.

\bibitem[Su et~al.(2015)Su, Wang, Shi, Zeng, Sun, Luo, and Gu]{7053911}
Zhengyu Su, Yalin Wang, Rui Shi, Wei Zeng, Jian Sun, Feng Luo, and Xianfeng Gu.
\newblock Optimal mass transport for shape matching and comparison.
\newblock \emph{IEEE Transactions on Pattern Analysis and Machine
  Intelligence}, 37\penalty0 (11):\penalty0 2246--2259, 2015.
\newblock \doi{10.1109/TPAMI.2015.2408346}.

\bibitem[Van~der Laan(2001)]{dutch}
Paul Van~der Laan.
\newblock \emph{The 2001 Census in the Netherlands: Integration of Registers
  and Surveys}, pp.\  39--52.
\newblock 12 2001.

\bibitem[Villani(2008)]{villani2008optimal}
C.~Villani.
\newblock \emph{Optimal Transport: Old and New}.
\newblock Grundlehren der mathematischen Wissenschaften. Springer Berlin
  Heidelberg, 2008.
\newblock ISBN 9783540710509.
\newblock URL \url{https://books.google.co.kr/books?id=hV8o5R7\_5tkC}.

\bibitem[von K{\"u}gelgen et~al.(2022)von K{\"u}gelgen, Karimi, Bhatt, Valera,
  Weller, and Sch{\"o}lkopf]{Kugelgen20}
J.~von K{\"u}gelgen, A.-H. Karimi, U.~Bhatt, I.~Valera, A.~Weller, and
  B.~Sch{\"o}lkopf.
\newblock On the fairness of causal algorithmic recourse.
\newblock In \emph{Proceedings of the 36th AAAI Conference on Artificial
  Intelligence (AAAI)}, pp.\  9584--9594. AAAI Press, February 2022.
\newblock URL \url{https://sites.google.com/view/recourse21/home}.
\newblock *also at ICML 2021 Workshop Algorithmic Recourse and NeurIPS 2020
  Workshop Algorithmic Fairness through the Lens of Causality and
  Interpretability (AFCI).

\bibitem[Wachter et~al.(2020)Wachter, Mittelstadt, and
  Russell]{wachter2020bias}
Sandra Wachter, Brent Mittelstadt, and Chris Russell.
\newblock Bias preservation in machine learning: the legality of fairness
  metrics under eu non-discrimination law.
\newblock \emph{W. Va. L. Rev.}, 123:\penalty0 735, 2020.

\bibitem[Webster et~al.(2018)Webster, Recasens, Axelrod, and
  Baldridge]{webster2018mind}
Kellie Webster, Marta Recasens, Vera Axelrod, and Jason Baldridge.
\newblock Mind the gap: A balanced corpus of gendered ambiguous pronouns.
\newblock \emph{Transactions of the Association for Computational Linguistics},
  6:\penalty0 605--617, 2018.

\bibitem[Wei et~al.(2020)Wei, Ramamurthy, and Calmon]{wei20a}
Dennis Wei, Karthikeyan~Natesan Ramamurthy, and Flavio Calmon.
\newblock Optimized score transformation for fair classification.
\newblock volume 108 of \emph{Proceedings of Machine Learning Research}, pp.\
  1673--1683, Online, 26--28 Aug 2020. PMLR.
\newblock URL \url{http://proceedings.mlr.press/v108/wei20a.html}.

\bibitem[Wu et~al.(2019)Wu, Zhang, and Wu]{onconvexand}
Yongkai Wu, Lu~Zhang, and Xintao Wu.
\newblock On convexity and bounds of fairness-aware classification.
\newblock In \emph{The World Wide Web Conference}, WWW '19, pp.\  3356–3362,
  New York, NY, USA, 2019. Association for Computing Machinery.
\newblock ISBN 9781450366748.
\newblock \doi{10.1145/3308558.3313723}.
\newblock URL \url{https://doi.org/10.1145/3308558.3313723}.

\bibitem[Xu et~al.(2018)Xu, Yuan, Zhang, and Wu]{xu2018fairgan}
Depeng Xu, Shuhan Yuan, Lu~Zhang, and Xintao Wu.
\newblock Fairgan: Fairness-aware generative adversarial networks.
\newblock In \emph{2018 IEEE International Conference on Big Data (Big Data)},
  pp.\  570--575. IEEE, 2018.

\bibitem[Yona \& Rothblum(2018)Yona and Rothblum]{pmlr-v80-yona18a}
Gal Yona and Guy Rothblum.
\newblock Probably approximately metric-fair learning.
\newblock In Jennifer Dy and Andreas Krause (eds.), \emph{Proceedings of the
  35th International Conference on Machine Learning}, volume~80 of
  \emph{Proceedings of Machine Learning Research}, pp.\  5680--5688. PMLR,
  10--15 Jul 2018.
\newblock URL \url{https://proceedings.mlr.press/v80/yona18a.html}.

\bibitem[Yurochkin \& Sun(2021)Yurochkin and Sun]{yurochkin2021sensei}
Mikhail Yurochkin and Yuekai Sun.
\newblock Sensei: Sensitive set invariance for enforcing individual fairness.
\newblock In \emph{International Conference on Learning Representations}, 2021.
\newblock URL \url{https://openreview.net/forum?id=DktZb97_Fx}.

\bibitem[Yurochkin et~al.(2020)Yurochkin, Bower, and
  Sun]{Yurochkin2020Training}
Mikhail Yurochkin, Amanda Bower, and Yuekai Sun.
\newblock Training individually fair ml models with sensitive subspace
  robustness.
\newblock In \emph{International Conference on Learning Representations}, 2020.
\newblock URL \url{https://openreview.net/forum?id=B1gdkxHFDH}.

\bibitem[Zafar et~al.(2017)Zafar, Valera, Rogriguez, and
  Gummadi]{zafar2017fairness}
Muhammad~Bilal Zafar, Isabel Valera, Manuel~Gomez Rogriguez, and Krishna~P
  Gummadi.
\newblock Fairness constraints: Mechanisms for fair classification.
\newblock In \emph{Artificial Intelligence and Statistics}, pp.\  962--970,
  2017.

\bibitem[Zafar et~al.(2019)Zafar, Valera, Gomez-Rodriguez, and
  Gummadi]{zafar2019fairness}
Muhammad~Bilal Zafar, Isabel Valera, Manuel Gomez-Rodriguez, and Krishna~P
  Gummadi.
\newblock Fairness {C}onstraints: A {F}lexible {A}pproach for {F}air
  {C}lassification.
\newblock \emph{J. Mach. Learn. Res.}, 20\penalty0 (75):\penalty0 1--42, 2019.

\bibitem[Zemel et~al.(2013)Zemel, Wu, Swersky, Pitassi, and
  Dwork]{pmlr-v28-zemel13}
Rich Zemel, Yu~Wu, Kevin Swersky, Toni Pitassi, and Cynthia Dwork.
\newblock Learning fair representations.
\newblock In Sanjoy Dasgupta and David McAllester (eds.), \emph{Proceedings of
  the 30th International Conference on Machine Learning}, volume~28 of
  \emph{Proceedings of Machine Learning Research}, pp.\  325--333, Atlanta,
  Georgia, USA, 17--19 Jun 2013. PMLR.
\newblock URL \url{https://proceedings.mlr.press/v28/zemel13.html}.

\bibitem[Zhang et~al.(2018)Zhang, Lemoine, and
  Mitchell]{10.1145/3278721.3278779}
Brian~Hu Zhang, Blake Lemoine, and Margaret Mitchell.
\newblock Mitigating unwanted biases with adversarial learning.
\newblock In \emph{Proceedings of the 2018 AAAI/ACM Conference on AI, Ethics,
  and Society}, AIES '18, pp.\  335–340, New York, NY, USA, 2018. Association
  for Computing Machinery.
\newblock ISBN 9781450360128.
\newblock \doi{10.1145/3278721.3278779}.
\newblock URL \url{https://doi.org/10.1145/3278721.3278779}.

\bibitem[Zhou et~al.(2021)Zhou, Huang, Fries, Youssef, Amrhein, Chang,
  Banerjee, Rubin, Xing, Shah, and
  Lungren]{https://doi.org/10.48550/arxiv.2111.11665}
Yuyin Zhou, Shih-Cheng Huang, Jason~Alan Fries, Alaa Youssef, Timothy~J.
  Amrhein, Marcello Chang, Imon Banerjee, Daniel Rubin, Lei Xing, Nigam Shah,
  and Matthew~P. Lungren.
\newblock Radfusion: Benchmarking performance and fairness for multimodal
  pulmonary embolism detection from ct and ehr, 2021.
\newblock URL \url{https://arxiv.org/abs/2111.11665}.

\end{thebibliography}
\bibliographystyle{tmlr}

\clearpage
\appendix

\section{Proofs of theorems}\label{sec:appen_pfs}





Let $\mathbb{N}$ be the set of natural numbers.
For $s=0,1$ and any measurable set $A \subseteq [0, 1],$ we denote 
$ f_s^{-1}(A) := \{ \mathbf{x} \in \mathcal{X}_s : f(\mathbf{x},s) \in A \}. $
Recall that $\mathcal{X}_{s}$ is the domain of $\mathbf{X} | S = s.$

\textbf{Proposition \ref{prop:perfect}}
    For any perfectly group-fair model $f,$ i.e., $\mathcal{P}_{f_{0}} = \mathcal{P}_{f_{1}},$
    there exists a transport map $\mathbf{T}_{s} = \mathbf{T}_{s}(f)$ satisfying 
    $
    f \left( \mathbf{X}, s \right) = f \left( \mathbf{T}_{s} (\mathbf{X}) , s' \right), a.e.
    $

\begin{proof}[Proof of Proposition \ref{prop:perfect}]
    By letting $\delta = 0,$ Theorem \ref{prop:gf} below implies $ \mathbb{E}_{s} \left\vert f \left( \mathbf{X}, s \right) - f \left( \mathbf{T}_{s} (\mathbf{X}) , s' \right) \right\vert = 0.$
    This implies that $\left\vert f \left( \mathbf{X}, s \right) - f \left( \mathbf{T}_{s} (\mathbf{X}) , s' \right) \right\vert = 0$ almost everywhere, which concludes the proof.
\end{proof}

\textbf{Theorem \ref{prop:gf}}
    Fix a fairness level $\delta \ge 0.$
    For any given group-fair model $f$ such that $\Delta \textup{TVDP}(f) \le \delta,$
    there exists a transport map $\mathbf{T}_{s} \in \mathcal{T}_{s}^{\textup{trans}}$ satisfying
    $\Delta \textup{MDP}(f, \mathbf{T}_{s}) \le 2 \delta.$

\begin{proof}[Proof of Theorem \ref{prop:gf}]
    Without loss of generality, let $s = 0$ and $s' = 1.$    
    Let $F_{0} : [0, 1] \rightarrow [0, 1]$ and $F_{1} : [0, 1] \rightarrow [0, 1]$ denote the cumulative distribution functions (CDFs) of $f(\mathbf{X}, 0) | S = 0$ and $f(\mathbf{X}, 1) | S = 1,$ respectively.
    Note that $F_{0}$ and $F_{1}$ have at most countably many discontinuity points.

    Define $D_{s}$ as the set of all discontinuity points of $F_{s},$ which is countable.
        For each $t \in D_{s},$ define the jump at $t$ as $\Delta F_{s}(t) := F_{s}(t) - F_{s}(t -),$
        where $F_{s}(t-)$ denotes the left limit of $F_{s}$ at $t.$
    
    Define the sub-CDF (i.e., the continuous part of $F_{s}$) as:
    $F_{s}^{cont}(v) := F_{s}(v) - \sum_{t \in D_{s}, t \le v} \Delta F_{s}(t)$ for $v \in [0, 1].$
        Note that this function $F_{s}^{cont}$ is continuous and non-decreasing on $[0, 1].$
        Moreover, for any interval $(a, b] \subseteq [0, 1],$ define $F_{s}^{cont}((a, b]) := F_{s}^{cont}(b) - F_{s}^{cont}(a).$

    We here prove for the case when $F_{1}^{cont}(1) \le F_{0}^{cont}(1).$
    The other case $F_{1}^{cont}(1) > F_{0}^{cont}(1)$ can be treated similarly.

    \begin{enumerate}
        \item (Constructing subsets based on $F_{s}^{cont}$)
        Since $F_{1}^{cont}(1) \le F_{0}^{cont}(1),$ there exists $z \le 1$ such that $F_{1}^{cont}(1) = F_{0}^{cont}(z).$
            We partition the interval $[0, z]$ into subintervals of length at most $\delta.$
            Let $v_{0} = 0$ and define $v_{k} := \min \{ v_{k-1} + \delta, z \}$ for $k = 1, \ldots, m.$
            Here, $m \in \mathbb{N}$ is the number that satisfies $v_{m} = z.$
            Note that $F_{1}^{cont}(1) = \sum_{k=1}^{m} F_{1}^{cont}((v_{k-1}, v_{k}]).$

            For each $k \in \{ 1, \ldots, m \},$ we compare the measures of the intervals as follows.
            
            If $ F_{1}^{cont}( (v_{k-1}, v_{k}] ) \le F_{0}^{cont} ( (v_{k-1}, v_{k} ] ),$ then there exists $ z_{k} \in [v_{k-1}, v_{k}] $ such that
            $ F_{1}^{cont}( (v_{k-1}, v_{k}] ) = F_{0}^{cont} ( (v_{k-1}, z_{k}] ).$
            Define $\mathcal{X}_{0, k} := f_{0}^{-1}( (v_{k-1}, z_{k}] \setminus D_{0} )$ and $\mathcal{X}_{1, k} := f_{1}^{-1}( (v_{k-1}, v_{k}] \setminus D_{1} ).$
            
            If $F_{1}^{cont}( (v_{k-1}, v_{k}] ) > F_{0}^{cont}( (v_{k-1}, v_{k}] ),$ we can define $z_{k},$ $\mathcal{X}_{0, k}$ and $\mathcal{X}_{1, k}$ similarly.

        \item (Defining probability measures and transport maps on subsets)
        For each $k \in \{ 1, \ldots, m \},$ define probability measures $\mathcal{P}_{s, k}, s \in \{ 0, 1 \}$ such that
        $
        \mathcal{P}_{s, k}(A) := \frac{\mathcal{P}_{s}(A \cap \mathcal{X}_{s, k})}{\mathcal{P}_{s} (\mathcal{X}_{s, k})}
        $
        for measurable subsets $A \subseteq \mathcal{X}.$       

        By Brenier's Theorem \citep{villani2008optimal, 10.1214/20-AOS1997}, there exists a transport map $\mathbf{T}_{0, k}^{(1)}$ from $\mathcal{P}_{0, k} (\cdot)$ to $\mathcal{P}_{1, k} (\cdot),$ 
            under (C).
        Since $v_{k} - v_{k-1} \le \delta, \forall k,$
        we have that 
        \begin{equation}\label{eq:prop:gf_1}
            \vert f(\mathbf{x}, 0) - f(\mathbf{T}_{0, k}^{(1)}(\mathbf{x}), 1) \vert \le \delta, \forall \mathbf{x} \in \mathcal{X}_{0, k}.
        \end{equation}

        \item (Handling discontinuity points)
            Let $D_{0, 1} := D_{0} \cap D_{1}$ be the intersection of $D_{0}$ and $D_{1},$ which is the set of common discontinuity points.

        Fix $d \in D_{0, 1}.$
        Suppose that 
        $\mathcal{P}_{1} ( f_{1}^{-1} ( \{d\} ) ) \le \mathcal{P}_{0} ( f_{0}^{-1} ( \{d\}) ).$
        Then, there exists $f_{0}^{-1}(\{d\})' \subset f_{0}^{-1}(\{d\})$ such that 
        $\mathcal{P}_{0} ( f_{0}^{-1} ( \{d\} )' ) = \mathcal{P}_{1} ( f_{1}^{-1} ( \{d\}) ).$
        Define $ \tilde{\mathcal{X}}_{0, d} := f_{0}^{-1}(\{d\})'$
        and
        $ \tilde{\mathcal{X}}_{1, d} := f_{1}^{-1}(\{d\}).$
        We can define $\tilde{\mathcal{X}}_{0, d}$ and $\tilde{\mathcal{X}}_{1, d}$ similarly when $\mathcal{P}_{1} ( f_{1}^{-1} ( \{d\} ) ) > \mathcal{P}_{0} ( f_{0}^{-1} ( \{d\}) ).$

        For each $d \in D_{0, 1},$
        we define probability measures $\tilde{\mathcal{P}}_{s, d}, s \in \{ 0, 1 \}$ such that
        $
        \tilde{\mathcal{P}}_{s, d}(A) := \mathcal{P}_{s}(A \cap \tilde{\mathcal{X}}_{s, d}) / \mathcal{P}_{s} (\tilde{\mathcal{X}}_{s, d})
        $
        for measurable subsets $A \subseteq \mathcal{X}.$        
        Then, there exists a transport map $\mathbf{T}_{0, d}^{(2)}$ from 
        $\tilde{\mathcal{P}}_{0, d}(\cdot)$ to $\tilde{\mathcal{P}}_{1, d}(\cdot).$
        By the definition of $\tilde{\mathcal{X}}_{0, d},$ we have that
        \begin{equation}\label{eq:prop:gf_2}
            f(\mathbf{x}, 0) = f(\mathbf{T}_{0, d}^{(2)}(\mathbf{x}), 1) = d, \forall \mathbf{x} \in \tilde{\mathcal{X}}_{0, d}.
        \end{equation}

        \item (Constructing the complement parts)
        We collect the complements as
        $$
        \mathcal{X}_{0}' := \mathcal{X}_{0} \setminus \left( \bigcup_{k=1}^{m} \mathcal{X}_{0, k} \cup \bigcup_{d \in D_{0, 1}} \tilde{\mathcal{X}}_{0, d} \right)
        \textup{ and }
        \mathcal{X}_{1}' := \mathcal{X}_{1} \setminus \left( \bigcup_{k=1}^{m} \mathcal{X}_{1, k} \cup \bigcup_{d \in D_{0, 1}} \tilde{\mathcal{X}}_{1, d} \right).
        $$

        Because $ \mathcal{P}_{0} ( \bigcup_{k=1}^{m} \mathcal{X}_{0, k} ) = \mathcal{P}_{1}( \bigcup_{k=1}^{m} \mathcal{X}_{1, k} )$
        and $\mathcal{P}_{0} ( \bigcup_{d \in D_{0, 1}} \tilde{\mathcal{X}}_{0, d} ) = \mathcal{P}_{1}( \bigcup_{d \in D_{0, 1}} \tilde{\mathcal{X}}_{1, d} ),$
        we have 
        $ \mathcal{P}_{0}( \mathcal{X}_{0}' ) 
        = 1 - \mathcal{P}_{0} ( \bigcup_{k \in \{1, \ldots, m \} } \mathcal{X}_{0, k} ) - \mathcal{P}_{0} ( \bigcup_{d \in D_{0, 1}} \tilde{\mathcal{X}}_{0, d} ) = \mathcal{P}_{1}( \mathcal{X}_{1}' ) \le \delta. $

        Define probability measures $\mathcal{P}_{s}', s \in \{ 0, 1 \}$ on $\mathcal{X}_{s}', s \in \{0, 1\}$ such that
        $ \mathcal{P}_{s}'(A) := \frac{\mathcal{P}_{s}(A \cap \mathcal{X}_{s}')}{\mathcal{P}_{s} (\mathcal{X}_{s}')} $
        for measurable subsets $A \subseteq \mathcal{X}.$
        Then, there exists a transport map $\mathbf{T}_{0}^{(3)}$ from $\mathcal{P}_{0}'(\cdot)$ to $\mathcal{P}_{1}'(\cdot).$

            For $\mathbf{x} \in \mathcal{X}_{0}',$
            we have
            $ \mathcal{P}_{0}(\mathcal{X}_{0}') = 1 - \left( \sum_{k=1}^{m} \mathcal{P}_{0}(\mathcal{X}_{0, k}) + \sum_{d \in D_{0, 1}} \mathcal{P}_{0} (\tilde{\mathcal{X}_{0, d}}) \right) \le \delta,$
            since
            $ \Delta \textup{TVDP} (f) = \textup{TV} \left( \mathcal{P}_{f(\mathbf{X}, 0) | S = 0}, \mathcal{P}_{f(\mathbf{X}, 1) | S = 1} \right) \le \delta.$
        Furthermore, since $f(\cdot) \in [0, 1],$ we have that
        \begin{equation}\label{eq:prop:gf_3}
            \begin{split}
                & \mathbb{E}_{0} \left( \vert f(\mathbf{X}, 0) - f(\mathbf{T}_{0}^{(3)}(\mathbf{X}), 1) \vert \cdot \mathbbm{1}( \mathbf{X} \in \mathcal{X}_{0}' ) \right)
                = \int \vert f(\mathbf{X}, 0) - f(\mathbf{T}_{0}^{(3)}(\mathbf{X}), 1) \vert \cdot \mathbbm{1}( \mathbf{X} \in \mathcal{X}_{0}' ) d\mathcal{P}_{0}(\mathbf{X})
                \\
                & \le \int \mathbbm{1}( \mathbf{X} \in \mathcal{X}_{0}' ) d\mathcal{P}_{0}(\mathbf{X})
                = \mathcal{P}_{0} ( \mathcal{X}_{0}' )
                \le \delta.
            \end{split}
        \end{equation}

        \item (Overall transport map)
        Finally, combining 2 to 4 above, we define the (overall) transport map $\mathbf{T}_{0}$ as
        \begin{equation}
            \begin{split}
                \mathbf{T}_{0}(\cdot) & := 
                \sum_{k = 1}^{m} \mathbf{T}_{0, k}^{(1)}(\cdot) \mathbbm{1}(\cdot \in \mathcal{X}_{0, k} )
                + \sum_{d \in D_{0, 1}} \mathbf{T}_{0, d}^{(2)}(\cdot) \mathbbm{1}(\cdot \in \tilde{\mathcal{X}}_{0, d} )
                + \mathbf{T}_{0}^{(3)}(\cdot) \mathbbm{1}(\cdot \in \mathcal{X}_{0}' ).
            \end{split}
        \end{equation}
        We note that 
        $
        \big\{
        \{ \mathcal{X}_{0, k} \}_{k=1}^{m},
        \{ \tilde{\mathcal{X}}_{0, d} \}_{d \in D_{0,1}},
        \mathcal{X}_{0}' 
        \big\}
        $
        and
        $
        \big\{
        \{ \mathcal{X}_{1, k} \}_{k=1}^{m},
        \{ \tilde{\mathcal{X}}_{1, d} \}_{d \in D_{0, 1}},
        \mathcal{X}_{1}' 
        \big\}
        $ 
        are partitions of $\mathcal{X}_{0}$ and $\mathcal{X}_{1},$ respectively.
        Moreover,
        $\mathcal{P}_{0} (\mathcal{X}_{0,k}) = \mathcal{P}_{1} (\mathcal{X}_{1,k}), \forall k,$
        $\mathcal{P}_{0} (\tilde{\mathcal{X}}_{0,d}) = \mathcal{P}_{1} (\tilde{\mathcal{X}}_{1,d}), \forall d,$
        and
        $\mathcal{P}_{0}(\mathcal{X}_{0}') = \mathcal{P}_{1}(\mathcal{X}_{1}').$
        Hence, 
        $\mathbf{T}_{0}$ is a transport map from $\mathcal{P}_{0}$ to $\mathcal{P}_{1}.$

        \item (Calculation of the bound for $\Delta \textup{MDP}(f, \mathbf{T}_{0})$)
        Using the constructed transport map $\mathbf{T}_{0},$ we have that
            \begin{equation}
                \begin{split}
                    \Delta \textup{MDP}(f, \mathbf{T}_{0}) & = \mathbb{E}_{0} \vert f(\mathbf{X}, 0) - f(\mathbf{T}_{0}(\mathbf{X}), 1) \vert 
                    = \int \vert f(\mathbf{X}, 0) - f(\mathbf{T}_{0}(\mathbf{X}), 1) \vert d\mathcal{P}_{0}(\mathbf{X})
                    \\
                    & = \sum_{k = 1}^{m} \int_{\mathcal{X}_{0, k}} \vert f(\mathbf{x}, 0) - f(\mathbf{T}_{0, k}^{(1)}(\mathbf{x}), 1) \vert d\mathcal{P}_{0}(\mathbf{x})
                    \\
                    & + \sum_{d \in D_{0, 1}} \int_{\tilde{\mathcal{X}}_{0, d}} \vert f(\mathbf{X}, 0) - f(\mathbf{T}_{0, d}^{(2)}(\mathbf{x}), 1) \vert \cdot d\mathcal{P}_{0}(\mathbf{x})
                    \\
                    & + \int_{\mathcal{X}_{0}'} \vert f(\mathbf{x}, 0) - f(\mathbf{T}_{0}^{(3)}(\mathbf{x}), 1) \vert d\mathcal{P}_{0}(\mathbf{x})
                    \\
                    & \underset{(*)}{\le} \delta \sum_{k = 1}^{m} \mathcal{P}_{0} ( \mathcal{X}_{0, k} ) 
                    + \int_{\mathcal{X}_{0}'} \vert f(\mathbf{x}, 0) - f(\mathbf{T}_{0}^{(3)}(\mathbf{x}), 1) \vert d\mathcal{P}_{0}(\mathbf{x})
                    \\
                    & \le \delta + \delta = 2 \delta,
                \end{split}
            \end{equation}
            where (*) holds by the inequalities in (\ref{eq:prop:gf_1}), (\ref{eq:prop:gf_2}), and (\ref{eq:prop:gf_3}).
    \end{enumerate}
\end{proof}

\clearpage

    \paragraph{Theorem \ref{prop:gf} without the condition (C): the case when $\mathcal{P}_{0}$ and $\mathcal{P}_{1}$ are discrete}
        The condition (C) is assumed for easier discussion involving continuous distributions.
        This is because the existence of (deterministic) transport maps is not guaranteed when the distributions $\mathcal{P}_{0}$ and $\mathcal{P}_{1}$ are discrete and their supports are different.
        However, by using the notion of stochastic transport map (defined below), we can derive a theoretical result similar to Theorem \ref{prop:gf} when $\mathcal{P}_{0}$ and $\mathcal{P}_{1}$ are discrete.

        Let $\mathcal{X}_{0} = \{ \mathbf{x}_{1}^{(0)}, \ldots, \mathbf{x}_{n_{0}}^{(0)} \}$ and $\mathcal{X}_{1} = \{ \mathbf{x}_{1}^{(1)}, \ldots, \mathbf{x}_{n_{1}}^{(1)} \}.$
        For this time only, define $\mathcal{P}_{0}$ and $\mathcal{P}_{1}$ as the empirical distributions on $\mathcal{D}_{0}$ and $\mathcal{D}_{1},$ respectively.
        Let $\mathbf{X}_{0}$ and $\mathbf{X}_{1}$ be the random variables following $\mathcal{P}_{0}$ and $\mathcal{P}_{1},$ respectively.
        Furthermore, let $f(\mathcal{X}_{0}) := \{ f(\mathbf{x}, 0) : \mathbf{x} \in \mathcal{X}_{0} \}$ and $f(\mathcal{X}_{1}) := \{ f(\mathbf{x}, 1) : \mathbf{x} \in \mathcal{X}_{1} \}.$
        Denote $\mathcal{P}_{f_{0}}$ and $\mathcal{P}_{f_{1}}$ be the empirical distributions on $f(\mathcal{X}_{0})$ and $f(\mathcal{X}_{1}),$
        i.e., the distributions of $f(\mathbf{X}_{0}), \mathbf{X}_{0} \sim \mathcal{P}_{0}$ and $f(\mathbf{X}_{1}), \mathbf{X}_{1} \sim \mathcal{P}_{1},$ respectively.

        For a given joint distribution $\mathbb{Q}$ between $\mathbf{X}_{0}$ and $\mathbf{X}_{1},$
        we can define $\Delta \textup{MDP} (f, \mathbb{Q}) := \mathbb{E}_{(\mathbf{X}_{0}, \mathbf{X}_{1}) \sim \mathbb{Q}} \vert f(\mathbf{X}_{0}, 0) - f(\mathbf{X}_{1}, 1) \vert,$
        which is a general version of $\Delta \textup{MDP}(f, \mathbf{T}_{s}).$
        That is, instead of the deterministic (one-to-one) transport map, we define the stochastic transport map $\mathbf{T}_{s}$ defined by $\mathbf{T}_{s}(\mathbf{x}_{i}^{(s)}) = \mathbf{x}_{j}^{(s')}$ with probability $\mathbb{Q}(\mathbf{X}_{s} = \mathbf{x}_{i}^{(s)}, \mathbf{X}_{s'} = \mathbf{x}_{j}^{(s')}).$

        In Proposition \ref{rmk:prop:gf_extension} below, we show that a stochastic transport map -- a joint distribution $\mathbb{Q}$ (rather than the deterministic transport map) --
        exists and satisfies $\Delta \textup{MDP}(f, \mathbb{Q}) \le \delta.$
        This is analogous to the result $\Delta \textup{MDP}(f, \mathbf{T}_{s}) \le \delta$ presented in Theorem \ref{prop:gf}.
        Note that if $n_{0} = n_{1},$ there exist transport maps (i.e., one-to-one mappings or permutations) between $\mathcal{X}_{0}$ and $\mathcal{X}_{1}.$

        \begin{proposition}\label{rmk:prop:gf_extension}
            Assume the supports of $\mathcal{P}_{f_{0}}$ and $\mathcal{P}_{f_{1}}$ share $m (\le n_{0}, n_{1})$ number of common points.
            Then, for any given group-fair model $f$ such that $\Delta \textup{TVDP}(f) \le \delta,$
            there exists a joint distribution $\mathbb{Q}$ between $\mathbf{X}_{0}$ and $\mathbf{X}_{1}$ satisfying 
            $\Delta \textup{MDP}(f, \mathbb{Q}) \le \delta.$
        \end{proposition}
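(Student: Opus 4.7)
The plan is to construct the joint distribution $\mathbb{Q}$ via a decomposition-based coupling that exploits the discrete structure and the TV bound directly, parallelling the continuous construction in Theorem~\ref{prop:gf} but replacing Brenier-type transport maps by simple product-type couplings on fibers. First, I would enumerate the $m$ common support points $v_{1},\ldots,v_{m}$ of $\mathcal{P}_{f_{0}}$ and $\mathcal{P}_{f_{1}}$ and, for each $k \in \{1,\ldots,m\}$ and $s \in \{0,1\}$, set $A_{k}^{(s)} := f_{s}^{-1}(\{v_{k}\}) \cap \mathcal{X}_{s}$ and $p_{k}^{(s)} := \mathcal{P}_{s}(A_{k}^{(s)})$. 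The relevant overlap mass is $c_{k} := \min(p_{k}^{(0)}, p_{k}^{(1)})$, and by the discrete formula for total variation
\begin{equation*}
\sum_{k=1}^{m} c_{k} \;=\; 1 - \mathrm{TV}(\mathcal{P}_{f_{0}}, \mathcal{P}_{f_{1}}) \;\ge\; 1 - \delta.
\end{equation*}

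Next, I would split each marginal as $\mathcal{P}_{s} = \mathcal{M}_{s} + \mathcal{R}_{s}$, where on each fiber $A_{k}^{(s)}$ the \emph{matchable part} $\mathcal{M}_{s}$ equals $\mathcal{P}_{s}$ restricted to $A_{k}^{(s)}$ scaled by $c_{k}/p_{k}^{(s)}$ (well-defined since $c_{k} \le p_{k}^{(s)}$), and $\mathcal{R}_{s} := \mathcal{P}_{s} - \mathcal{M}_{s}$ is the \emph{residual}. By construction, both $f_{0,\#}\mathcal{M}_{0}$ and $f_{1,\#}\mathcal{M}_{1}$ equal $\sum_{k} c_{k}\delta_{v_{k}}$, and the residuals have total mass $1 - \sum_{k} c_{k} \le \delta$ on each side. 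I then define $\mathbb{Q}$ as the sum of two couplings: on the matchable part, conditional on the common prediction value $v_{k}$, I couple $X_{0}$ and $X_{1}$ as a product of the conditional laws $\mathcal{M}_{s}(\cdot \mid A_{k}^{(s)})$ (a valid coupling because both conditional measures have total mass $c_{k}$ on the $v_{k}$-fiber); on the residual part, I take any coupling of $\mathcal{R}_{0}$ and $\mathcal{R}_{1}$, for example the product measure scaled to match the common total mass, or the solution of a Kantorovich problem between $\mathcal{R}_{0}$ and $\mathcal{R}_{1}$. The marginals of $\mathbb{Q}$ are then $\mathcal{M}_{s} + \mathcal{R}_{s} = \mathcal{P}_{s}$ for $s = 0,1$.

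Finally, I would bound $\Delta\mathrm{MDP}(f,\mathbb{Q})$ by splitting the integral along these two parts: on the matchable part, $f(X_{0},0) = f(X_{1},1)$ almost surely by construction, so the contribution is zero; on the residual part, the integrand $|f(X_{0},0) - f(X_{1},1)|$ is bounded by $1$ because $f \in [0,1]$, and the total residual mass is at most $\delta$, yielding $\Delta\mathrm{MDP}(f,\mathbb{Q}) \le 0 + \delta = \delta$. The main obstacle, which is really just a bookkeeping point, is verifying that the within-fiber coupling has marginal $\mathcal{M}_{s}$ rather than $\mathcal{P}_{s}$ restricted to the fiber, since $p_{k}^{(0)}$ and $p_{k}^{(1)}$ generally differ and one must scale correctly by $c_{k}/p_{k}^{(s)}$; apart from this, no regularity condition like (C) is needed because the discrete Kantorovich relaxation always admits a feasible joint distribution.
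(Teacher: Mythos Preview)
Your proposal is correct and shares the paper's core idea: put as much coupling mass as possible on pairs with matching $f$-values, then bound the leftover by the total-variation mass. The paper, however, executes this via an explicit coupling matrix: assuming without loss of generality $n_{0}\le n_{1}$ and reordering so that $f(\mathbf{x}_{i}^{(0)},0)=f(\mathbf{x}_{i}^{(1)},1)$ for $i\le m$, it sets $\Gamma_{11}=\tfrac{1}{n_{1}}\mathbf{I}_{m}$, $\Gamma_{21}=\mathbf{0}$, fills the remaining columns to satisfy the marginal constraints, and then computes $\Delta\textup{TVDP}(f)=1-m/n_{1}$ directly. That computation tacitly assumes each atom of $\mathcal{P}_{f_{s}}$ has mass exactly $1/n_{s}$, i.e.\ that the $f$-values within each group are distinct; your fiber-based maximal-coupling construction with $c_{k}=\min(p_{k}^{(0)},p_{k}^{(1)})$ handles arbitrary fiber sizes and is therefore slightly more general, while the paper's matrix form is more concrete and makes the marginal constraints visible at a glance. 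Either way the key inequality $\sum_{k}c_{k}=1-\textup{TV}(\mathcal{P}_{f_{0}},\mathcal{P}_{f_{1}})\ge 1-\delta$ is what drives the bound, and both arguments arrive at $\Delta\textup{MDP}(f,\mathbb{Q})\le\Delta\textup{TVDP}(f)\le\delta$.
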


        \begin{proof}
            If suffices to show that there exists $\mathbb{Q}$ such that $\Delta \textup{MDP}(f, \mathbb{Q}) \le \Delta \textup{TVDP}(f).$
            Without loss of generality, assume $n_{0} \le n_{1}$ and
            let $f(\mathbf{x}_{i}^{(0)}, 0) = f(\mathbf{x}_{i}^{(1)}, 1)$ for $i \in \{ 1, \ldots, m \}$ (i.e., the common points).

            First, we calculate $\Delta \textup{TVDP}(f)$ as follows.
            Recall the definition of TV for discrete measures: $\textup{TV}(\mathcal{P}_{f_{0}}, \mathcal{P}_{f_{1}}) = \frac{1}{2} \sum_{z} \vert \mathcal{P}_{f_{0}}(z) - \mathcal{P}_{f_{1}}(z) \vert.$
            For the $m$ number of common points, the sum of differences is $m \vert \frac{1}{n_{0}} - \frac{1}{n_{1}} \vert = m \frac{n_{1} - n_{0}}{n_{0} n_{1}}.$
            For the points only in $f(\mathcal{X}_{0})$ but not in $f(\mathcal{X}_{1}),$ the sum of differences is $(n_{0} - m) / n_{0}.$
            Similarly, for the points only in $f(\mathcal{X}_{1})$ but not in $f(\mathcal{X}_{0}),$ the sum of differences is $(n_{1} - m) / n_{1}.$
            As a result, we have
            \begin{equation}
                \begin{split}
                    \Delta \textup{TVDP}(f) 
                    = \textup{TV}(\mathcal{P}_{f_{0}}, \mathcal{P}_{f_{1}})
                    = \frac{1}{2} \left( m \frac{n_{1} - n_{0}}{n_{0} n_{1}} + \frac{n_{0} - m}{n_{0}} + \frac{n_{1} - m}{n_{1}} \right) = 1 - \frac{m}{n_{1}}.
                \end{split}
            \end{equation}

            Then, we construct $\mathbb{Q}$ as follows.
            Let $\gamma_{i, j} = \mathbb{Q}(\mathbf{X}_{0} = \mathbf{x}_{i}^{(0)}, \mathbf{X}_{1} = \mathbf{x}_{j}^{(1)})$
            and $\Gamma = [\gamma_{i, j}]_{i \in [n_{0}], j \in [n_{1}]} \in \mathbb{R}_{+}^{n_{0} \times n_{1}}$ be a matrix based on $\gamma_{i, j}$s, which we construct as below:
            \begin{enumerate}
                \item Build a diagonal matrix $\Gamma_{11} = \frac{1}{n_{1}} \mathbf{I}_{m} \in \mathbb{R}_{+}^{m \times m},$ where $\mathbf{I}_{m}$ is the identity matrix of size $m \times m.$
                
                \item Build a zero matrix $\Gamma_{21} = \mathbf{0}_{(n_{0} - m) \times m} \in \mathbb{R}_{+}^{(n_{0} - m) \times m},$ where $\mathbf{0}$ denotes the zero matrix.
                
                \item Build matrices $\Gamma_{12} \in \mathbb{R}_{+}^{m \times (n_{1} - m)}$ and matrix $\Gamma_{22} \in \mathbb{R}_{+}^{(n_{0} - m) \times (n_{1} - m)}$
                satisfying
                $\Gamma_{12} \mathbf{1}_{n_{1} - m} = \left( \frac{1}{n_{0}} - \frac{1}{n_{1}} \right) \mathbf{1}_{m},$
                $\Gamma_{22} \mathbf{1}_{n_{1} - m} = \frac{1}{n_{0}} \mathbf{1}_{n_{0} - m},$
                and
                $\begin{pmatrix}
                    \Gamma_{12} \\ \Gamma_{22}
                \end{pmatrix}^{\top}  \mathbf{1}_{n_{0}} = \frac{1}{n_{1}} \mathbf{1}_{n_{1} - m}.$
                
                \item Complete
                $
                \Gamma := \begin{pmatrix}
                \Gamma_{11} & \Gamma_{12} \\
                \Gamma_{21} & \Gamma_{22}
                \end{pmatrix}
                = \begin{pmatrix}
                \frac{1}{n_1} \mathbf{I}_m & \Gamma_{12} \\
                \mathbf{0}_{(n_0 - m) \times m} & \Gamma_{22}
                \end{pmatrix}.
                $
            \end{enumerate}
            
            Last, note that the $\Gamma$ is a coupling matrix (i.e., satisfying the constraints $\sum_{i = 1}^{n_{0}} \gamma_{i, j} = 1/n_{0}, \forall j \in [n_{1}]$ and $\sum_{j = 1}^{n_{1}} \gamma_{i, j} = 1/n_{1}, \forall i \in [n_{0}]$).
            Hence, we can define the joint distribution $\mathbb{Q}$ by the constructed $\Gamma$ (i.e., $\Gamma$ is a matrix representation of $\mathbb{Q}$, see equation (\ref{eq:otproblem}) for the formulation).
            Finally, we have
            \begin{equation}
                \begin{split}
                    \Delta \textup{MDP}(f, \mathbb{Q})
                    & = \mathbb{E}_{(\mathbf{X}_{0}, \mathbf{X}_{1}) \sim \mathbb{Q}} \vert f(\mathbf{X}_{0}, 0) - f(\mathbf{X}_{1}, 1) \vert
                    \\
                    & = \sum_{i, j} \mathbb{Q}(\mathbf{X}_{0} = \mathbf{x}_{i}^{(0)}, \mathbf{X}_{1} = \mathbf{x}_{j}^{(1)})
                    \vert f(\mathbf{X}_{0}, 0) - f(\mathbf{X}_{1}, 1) \vert
                    = \sum_{i, j} \gamma_{i, j} \vert f(\mathbf{X}_{0}, 0) - f(\mathbf{X}_{1}, 1) \vert
                    \\
                    & = \sum_{i \in [m], j \in [m]} \gamma_{i, j} \vert f(\mathbf{X}_{0}, 0) - f(\mathbf{X}_{1}, 1) \vert
                    + \sum_{i, \in \{m+1, \ldots, n_{0}\}, j \in [m]} \gamma_{i, j} \vert f(\mathbf{X}_{0}, 0) - f(\mathbf{X}_{1}, 1) \vert
                    \\
                    & + \sum_{i \in [n_{0}], j \in \{m+1, \ldots, n_{1}\}} \gamma_{i, j} \vert f(\mathbf{X}_{0}, 0) - f(\mathbf{X}_{1}, 1) \vert
                    \\
                    & = \sum_{i \in [n_{0}], j \in \{m+1, \ldots, n_{1}\}} \gamma_{i, j} \vert f(\mathbf{X}_{0}, 0) - f(\mathbf{X}_{1}, 1) \vert
                    \le \sum_{i \in [n_{0}], j \in \{m+1, \ldots, n_{1}\}} \gamma_{i, j} = 1 - \frac{m}{n_{1}}
                    \le \Delta \textup{TVDP}(f).
                \end{split}
            \end{equation}            
            Therefore, any $\mathbb{Q}$ constructed by the steps 1-4 above satisfies $\Delta \textup{MDP}(f, \mathbb{Q}) \le \delta.$
        \end{proof}

    \textbf{Proposition \ref{prop:quantile_match}.}
    Let $\mathbf{T}^f$ be the {fair matching function} of $f$ on $\mathcal{D}_0$ and $\mathcal{D}_1$
    (where the empirical distributions with respect to $\mathcal{D}_0$ and $\mathcal{D}_1$ are used in Definition \ref{def:corr_match_f}).
    Then, the matched individual $ \mathbf{T}^{f} (\mathbf{x}) $ of any $\mathbf{x} \in {\mathcal{D}}_{s}$ is obtained by
    $ \mathbf{T}^{f}(\mathbf{x}) = f_{s'}^{-1} \circ F_{s'}^{-1} \circ F_{s} \circ f_{s}(\mathbf{x}). $

    \begin{proof}[Proof of Theorem \ref{prop:quantile_match}]
        First, we aim to find a permutation map $M^{*} = \argmin_{M} \sum_{i=1}^{m} \vert f_{s}(\mathbf{x}_{i}) - M ( f_{s'}(\mathbf{x}_{i}) ) \vert $
        subject to $\{ M(f_{s}(\mathbf{x}_{i})): \mathbf{x}_{i} \in {\mathcal{D}}_{s} \} = \{ f_{s'}(\mathbf{x}_{j}): \mathbf{x}_{j} \in {\mathcal{D}}_{s'} \}.$
        By \citet{rachev1998mass, NEURIPS2020_51cdbd26, pmlr-v115-jiang20a}, we have the fact that 1-Wasserstein distance is equivalent to the average of difference between two prediction scores that have same quantiles in each group.
        That is, $ M^{\star}(y) = F_{s'}^{-1} \circ F_{s} (y) $ for any $y \in \{ f_{s}(\mathbf{x}) : \mathbf{x} \in {\mathcal{D}}_{s} \}.$
        
        Second, we have $M^{\star}$ is one-to-one and $ \{ f_{s'}(\mathbf{x}_{j}): \mathbf{x}_{j} \in {\mathcal{D}}_{s'} \} = \{ f_{s} ( \mathbf{T}^{f}(\mathbf{x}_{i}) ) : \mathbf{x}_{i} \in {\mathcal{D}}_{s} \} = \{ M^{\star}(f_{s}(\mathbf{x}_{i})): \mathbf{x}_{i} \in {\mathcal{D}}_{s} \}, $ since $\mathbf{T}^{f}$ is also an exact one-to-one map.
        
        Therefore, we conclude that $ \mathbf{T}^{f}(\mathbf{x}) = f_{s'}^{-1} \circ M^{\star} \circ f_{s} (\mathbf{x}) = f_{s'}^{-1} \circ F_{s'}^{-1} \circ F_{s} \circ f_{s}(\mathbf{x}) $ for all $\mathbf{x} \in {\mathcal{D}}_{s}.$
    \end{proof}
    
    \begin{remark}\label{rmk:prop:quantile_match}
        Using Proposition \ref{prop:quantile_match} to estimate the {fair matching function} is equivalent to estimating the OT map between two score distributions.
        The statistical convergence rate for estimating the OT map between one-dimensional distributions is fast; it is of the order 
        $
        \mathcal{O}(\vert {\mathcal{D}}_{0} \vert^{-1/2} + \vert {\mathcal{D}}_{1} \vert^{1/2}) 
        = \mathcal{O}(n_{0}^{-1/2} + n_{1}^{1/2})
        $ 
        \citep{10.5555/3495724.3495914, Deb2021RatesOE, 10.1214/20-AOS1997}.
        When we use two mini-batches $\tilde{\mathcal{D}}_{0} \subset \mathcal{D}_{0}$ and $\tilde{\mathcal{D}}_{1} \subset \mathcal{D}_{1}$ with identical size $m,$ the order becomes $\mathcal{O}(m^{-1/2}).$
        This indicates that sufficiently large mini-batch size $m$ can guarantee accurate estimation.
    \end{remark}


\clearpage

\textbf{Theorem \ref{prop:reversefair}}
    For a given $\mathbf{T}_{s} \in \mathcal{T}_{s}^{\textup{trans}},$
    if $\Delta \textup{MDP} (f, \mathbf{T}_{s}) \le \delta,$
    then we have $\Delta \textup{WDP} ( f )\le \delta$ and $\Delta \overline{\textup{DP}} ( f ) \le \delta.$

\begin{proof}[Proof of Theorem \ref{prop:reversefair}]
    Fix $f \in \{ f \in \mathcal{F} : \mathbb{E}_{s} \vert f(\mathbf{X}, s) - f(\mathbf{T}_{s}(\mathbf{X}), s') \vert \le \delta \}.$
    Let $\mathcal{L}_{1}$ the set of all $1$-Lipschitz functions.
    Using the fact that Wasserstein-1 distance is equivalent to IPM induced by set of $1$-Lipschitz function \citep{villani2008optimal}, we have that
    \begin{equation}\label{proof:theory:sdp}
        \begin{split}
            \Delta \textup{WDP} (f)
            & = \mathcal{W} \left( \mathcal{P}_{f(\mathbf{X}, 0) | S = 0}, \mathcal{P}_{f(\mathbf{X}, 1) | S = 1} \right) \\
            & = \sup_{u \in \mathcal{L}_{1}}
            \left| \mathbb{E}_{s} ( u \circ f(\mathbf{X}, s) ) - \mathbb{E}_{s'} ( u \circ f(\mathbf{X}, s') ) \right| \\
            & \le \sup_{u \in \mathcal{L}_{1}} \left| \mathbb{E}_{s} ( u \circ f(\mathbf{X}, s) ) - \mathbb{E}_{s} ( u \circ f(\mathbf{T}_{s}(\mathbf{X}), s') ) \right|
            + \sup_{u \in \mathcal{L}_{1}} \left| \mathbb{E}_{s} ( u \circ f(\mathbf{T}_{s}(\mathbf{X}), s') ) - \mathbb{E}_{s'} ( u \circ f(\mathbf{X}, s') ) \right| \\
            & \le \sup_{u \in \mathcal{L}_{1}} \mathbb{E}_{s} \left|   u \circ f(\mathbf{X}, s) -  u \circ f(\mathbf{T}_{s}(\mathbf{X}), s' ) \right|
            + \sup_{u \in \mathcal{L}_{1}} \left| \mathbb{E}_{s} ( u \circ f(\mathbf{T}_{s}(\mathbf{X}), s') ) - \mathbb{E}_{s'} ( u \circ f(\mathbf{X}, s') ) \right| \\
            & \overset{u \in \mathcal{L}_{1}}{\le} \mathbb{E}_{s} \left|  f(\mathbf{X}, s) - f(\mathbf{T}_{s}(\mathbf{X}), s') \right|
            + \sup_{u \in \mathcal{L}_{1}} \left| \mathbb{E}_{s} ( u \circ f(\mathbf{T}_{s}(\mathbf{X}), s') ) - \mathbb{E}_{s'} ( u \circ f(\mathbf{X}, s') ) \right| \\
            & \le \delta 
            + \sup_{u \in \mathcal{L}_{1}} \left| \mathbb{E}_{s} ( u \circ f(\mathbf{T}_{s}(\mathbf{X}), s') ) - \mathbb{E}_{s'} ( u \circ f(\mathbf{X}, s') ) \right| \\
            & \le \delta + \sup_{f \in \mathcal{F}} \sup_{u \in \mathcal{L}_{1}} \left| \mathbb{E}_{s} ( u \circ f (\mathbf{T}_{s}(\mathbf{X}), s') ) - \mathbb{E}_{s'} ( u \circ f (\mathbf{X}, s') ) \right| \\
            & \le \delta + \textup{TV} ( \mathbf{T}_{s \#} \mathcal{P}_{s}, \mathcal{P}_{s'} ) = \delta.
        \end{split}
    \end{equation}
    \\
    The last equality holds since $ \textup{TV} ( \mathbf{T}_{s \#} \mathcal{P}_{s}, \mathcal{P}_{s'} ) = 0$ for any transport map $\mathbf{T}_{s}.$
    \\
    For $\Delta \overline{\textup{DP}} (f),$
    because the identity map is 1-Lipschitz, we have that $\Delta \overline{\textup{DP}}(f) \le \Delta \textup{WDP}(f),$ which completes the proof.
\end{proof}

\clearpage


\textbf{Theorem \ref{thm:subset}}
    Suppose $\mathcal{F}$ is the collection of $L$-Lipschitz functions.
    Let $A$ be a given subset in $\mathcal{X}.$
    Then, for all $f$ satisfying $\Delta \textup{MDP} (f, \mathbf{T}_{s}^{f}) \le \delta,$ we have
    \begin{equation}
        \begin{split}
            \Delta \overline{\textup{DP}}_{A} (f) 
            \le L \left( \mathbb{E}_{s} \Vert \mathbf{X} - \mathbf{T}_{s}^{f}(\mathbf{X}) \Vert^{2} \right)^{ \frac{1}{2} }
            + \textup{TV}(\mathcal{P}_{0, A}, \mathcal{P}_{1, A}) + U \delta,
        \end{split}
    \end{equation}
    where $\mathcal{P}_{s, A}$ is the distribution of $\mathbf{X} | S = s, \mathbf{X} \in A,$
    and $U > 0$ is a constant only depending on $A$ and $\mathcal{P}_{s}, s=0,1.$

\begin{proof}
    We write $\mathbf{T}_{s} = \mathbf{T}_{s}^{f}$ for notational simplicity.
    \begin{equation}
        \begin{split}
            & \vert \mathbb{E} ( f( \mathbf{X}, 0 ) | S = 0, \mathbf{X} \in A ) - \mathbb{E} ( f( \mathbf{X}, 1 ) | S = 1, \mathbf{X} \in A ) \vert
            \\ 
            & \le \vert \mathbb{E} ( f( \mathbf{X}, 0 ) | S = 1, \mathbf{X} \in A ) - \mathbb{E} ( f( \mathbf{T}_{1}(\mathbf{X}), 0 ) | S = 1, \mathbf{X} \in A ) \vert 
            \\ 
            & + \vert \mathbb{E} ( f( \mathbf{T}_{1}(\mathbf{X}), 0 ) | S = 1, \mathbf{X} \in A ) - \mathbb{E} ( f( \mathbf{X}, 1 ) | S = 1, \mathbf{X} \in A ) \vert
            \\ 
            & + \vert \mathbb{E} ( f( \mathbf{X}, 0 ) | S = 0, \mathbf{X} \in A ) - \mathbb{E} ( f( \mathbf{X}, 0 ) | S = 1, \mathbf{X} \in A ) \vert.
        \end{split}
    \end{equation}

    By (C1),
    the first term is bounded by $L \mathbb{E}_{1} \Vert \mathbf{X} - \mathbf{T}_{1}(\mathbf{X}) \Vert,$
    which is also bounded by 
    $
    L \left( \mathbb{E}_{1} \Vert \mathbf{X} - \mathbf{T}_{1}(\mathbf{X}) \Vert^{2} \right)^{1/2}.
    $

    The second term is bounded by $\delta$ up to a constant for all $f$ satisfying $\Delta \textup{MDP}(f, \mathbf{T}_{s}) \le \delta.$
    That is, we have
    \begin{equation}
        \begin{split}
            & \vert \mathbb{E} ( f( \mathbf{T}_{1}(\mathbf{X}), 0 ) | S = 1, \mathbf{X} \in A ) - \mathbb{E} ( f( \mathbf{X}, 1 ) | S = 1, \mathbf{X} \in A ) \vert 
            \\ 
            & = \left\vert 
            \frac{ \int f ( \mathbf{T}_{1}( \mathbf{X} ), 0 ) \mathbb{I}(\mathbf{X} \in A) d\mathcal{P}_{1}(\mathbf{X}) }{ \int \mathbb{I}( \mathbf{X} \in A ) d\mathcal{P}_{1}(\mathbf{X}) }
            -
            \frac{ \int f ( \mathbf{X} , 1 ) \mathbb{I}(\mathbf{X} \in A) d\mathcal{P}_{1}(\mathbf{X}) }{ \int \mathbb{I}( \mathbf{X} \in A ) d\mathcal{P}_{1} (\mathbf{X}) }
            \right\vert 
            \\ 
            & \le 
            \frac{1}{ \int \mathbb{I}( \mathbf{X} \in A ) d\mathcal{P}_{1}(\mathbf{X}) } \int_{\mathbf{X} \in A} \vert f( \mathbf{T}_{1}(\mathbf{X}), 0 ) - f( \mathbf{X}, 1 ) \vert d\mathcal{P}_{1}(\mathbf{X})
            \\ 
            & \le 
            \frac{1}{ \int \mathbb{I}( \mathbf{X} \in A ) d\mathcal{P}_{1}(\mathbf{X}) } \int_{\mathbf{X} \in \mathcal{X}} \vert f( \mathbf{T}_{1}(\mathbf{X}), 0 ) - f( \mathbf{X}, 1 ) \vert d\mathcal{P}_{1}(\mathbf{X})
            \\ 
            & = U'(A, \mathcal{P}_{1}) \times \mathbb{E}_{1} \vert f( \mathbf{T}_{1}(\mathbf{X}), 0 ) - f( \mathbf{X}, 1 ) \vert
            \\
            & \le U'(A, \mathcal{P}_{1}) \times \delta
        \end{split}
    \end{equation}
    where $U'(A, \mathcal{P}_{1}) = 1 / \int \mathbb{I}( \mathbf{X} \in A ) d\mathcal{P}_{1}(\mathbf{X}) = 1 / \mathcal{P} (\mathbf{X} \in A | S = 1)$ is a constant only depending on $\mathcal{P}_{1}$ and $A.$ 

    The third term 
    $ \vert \mathbb{E} ( f( \mathbf{X}, 0 ) | S = 0, \mathbf{X} \in A ) - \mathbb{E} ( f( \mathbf{X}, 0 ) | S = 1, \mathbf{X} \in A ) \vert $ 
    is not controllable by either the transport map or $\delta$ but depends on the given distributions and $A.$
    That is,
    \begin{equation}
        \begin{split}
            & \left\vert \mathbb{E} ( f( \mathbf{X}, 0 ) | S = 0, \mathbf{X} \in A ) - \mathbb{E} ( f( \mathbf{X}, 0 ) | S = 1, \mathbf{X} \in A ) \right\vert
            \\
            & = \left\vert \int_{A} f(\mathbf{X}, 0) d\mathcal{P}_{0}(\mathbf{X}) - \int_{A} f(\mathbf{X}, 0) d\mathcal{P}_{1}(\mathbf{X}) \right\vert
            \\
            & \le \sup_{f \in \mathcal{F}} \left\vert \int_{A} f(\mathbf{X}, 0) d\mathcal{P}_{0}(\mathbf{X}) - \int_{A} f(\mathbf{X}, 0) d\mathcal{P}_{1}(\mathbf{X}) \right\vert
            \\
            & \le \textup{TV}(\mathcal{P}_{0, A}, \mathcal{P}_{1, A}).
        \end{split}
    \end{equation}
    
    Hence, we have
    \begin{equation}
        \begin{split}
            & \left\vert \mathbb{E} ( f( \mathbf{X}, 0 ) | S = 0, \mathbf{X} \in A ) - \mathbb{E} ( f( \mathbf{X}, 1 ) | S = 1, \mathbf{X} \in A ) \right\vert
            \\
            & \le L (\mathbb{E}_{1} \Vert \mathbf{X} - \mathbf{T}_{1}(\mathbf{X}) \Vert^{2})^{1/2} + \textup{TV}(\mathcal{P}_{0, A}, \mathcal{P}_{1, A}) + U'(A, \mathcal{P}_{1}) \delta.
        \end{split}
    \end{equation}

    We can similarly derive 
    \begin{equation}
        \begin{split}
            & \left\vert \mathbb{E} ( f( \mathbf{X}, 0 ) | S = 0, \mathbf{X} \in A ) - \mathbb{E} ( f( \mathbf{X}, 1 ) | S = 1, \mathbf{X} \in A ) \right\vert
            \\
            & \le L (\mathbb{E}_{0} \Vert \mathbf{X} - \mathbf{T}_{0}(\mathbf{X}) \Vert^{2})^{1/2} + \textup{TV}(\mathcal{P}_{0, A}, \mathcal{P}_{1, A}) + U'(A, \mathcal{P}_{0}) \delta.
        \end{split}
    \end{equation}

Letting $U:= \max \left\{ U'(A, \mathcal{P}_{0}), U'(A, \mathcal{P}_{1}) \right\}$ completes the proof.
\end{proof}

\begin{lemma}[Optimal transport map between two Gaussians]\label{lemma:gaussianOT}
    For mean vectors $\mu_{\mathbf{X}}, \mu_{\mathbf{Y}} \in \mathbb{R}^{d}$ and covariance matrices $\Sigma_{\mathbf{X}}, \Sigma_{\mathbf{Y}} \in \mathbb{R}^{d \times d},$
    the OT map from $\mathcal{N}(\mu_{\mathbf{X}}, \Sigma_{\mathbf{X}})$ to $\mathcal{N}(\mu_{\mathbf{Y}}, \Sigma_{\mathbf{Y}})$ is given as
    $ \mathbf{T}^{\textsc{OT}} (\mathbf{x}) = \mathbf{W}^{\textsc{OT}} \mathbf{x} + \mathbf{b}^{\textsc{OT}} $
    where
    $ \mathbf{W}^{\textsc{OT}} = \Sigma_{\mathbf{X}}^{-\frac{1}{2}} \left( \Sigma_{\mathbf{X}}^{\frac{1}{2}} \Sigma_{\mathbf{Y}} \Sigma_{\mathbf{X}}^{\frac{1}{2}} \right)^{ \frac{1}{2} } \Sigma_{\mathbf{X}}^{-\frac{1}{2}} $ and $ \mathbf{b}^{\textsc{OT}} = \mu_{\mathbf{Y}} - \mathbf{W}^{\textsc{OT}} \mu_{\mathbf{X}}. $
\end{lemma}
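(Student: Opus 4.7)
The plan is to use Brenier's theorem, which characterizes the optimal transport map (under quadratic cost) between two absolutely continuous measures as the gradient of a convex function, and is unique almost surely. Since both source and target are non-degenerate Gaussians, Brenier's theorem applies, and I would further exploit the fact that affine maps $T(\mathbf{x}) = \mathbf{W}\mathbf{x} + \mathbf{b}$ with $\mathbf{W}$ symmetric positive definite are precisely the gradients of convex quadratic functions $\psi(\mathbf{x}) = \tfrac{1}{2}\mathbf{x}^{\top}\mathbf{W}\mathbf{x} + \mathbf{b}^{\top}\mathbf{x}$. So it suffices to exhibit an affine map of this form that pushes $\mathcal{N}(\mu_{\mathbf{X}}, \Sigma_{\mathbf{X}})$ to $\mathcal{N}(\mu_{\mathbf{Y}}, \Sigma_{\mathbf{Y}})$.

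First I would derive the defining equations. Affine maps send Gaussians to Gaussians, so $T_{\#}\mathcal{N}(\mu_{\mathbf{X}}, \Sigma_{\mathbf{X}}) = \mathcal{N}(\mathbf{W}\mu_{\mathbf{X}} + \mathbf{b}, \mathbf{W}\Sigma_{\mathbf{X}}\mathbf{W}^{\top})$. Matching means forces $\mathbf{b} = \mu_{\mathbf{Y}} - \mathbf{W}\mu_{\mathbf{X}}$, which is already the claimed form of $\mathbf{b}^{\textsc{OT}}$. Matching covariances with $\mathbf{W}$ symmetric yields the matrix equation $\mathbf{W}\Sigma_{\mathbf{X}}\mathbf{W} = \Sigma_{\mathbf{Y}}$.

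Next I would verify that the proposed $\mathbf{W}^{\textsc{OT}} = \Sigma_{\mathbf{X}}^{-1/2}\bigl(\Sigma_{\mathbf{X}}^{1/2}\Sigma_{\mathbf{Y}}\Sigma_{\mathbf{X}}^{1/2}\bigr)^{1/2}\Sigma_{\mathbf{X}}^{-1/2}$ solves this equation. Using that $\Sigma_{\mathbf{X}}^{1/2}\Sigma_{\mathbf{Y}}\Sigma_{\mathbf{X}}^{1/2}$ is symmetric positive definite (hence admits a unique symmetric positive definite square root), a direct substitution gives $\mathbf{W}^{\textsc{OT}}\Sigma_{\mathbf{X}}\mathbf{W}^{\textsc{OT}} = \Sigma_{\mathbf{X}}^{-1/2}\bigl(\Sigma_{\mathbf{X}}^{1/2}\Sigma_{\mathbf{Y}}\Sigma_{\mathbf{X}}^{1/2}\bigr)\Sigma_{\mathbf{X}}^{-1/2} = \Sigma_{\mathbf{Y}}$, where the cancellation $\Sigma_{\mathbf{X}}^{-1/2}\Sigma_{\mathbf{X}}\Sigma_{\mathbf{X}}^{-1/2} = \mathbf{I}_{d}$ is used in the middle. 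Symmetry of $\mathbf{W}^{\textsc{OT}}$ is immediate from its sandwich form, and positive definiteness follows since it is a conjugation of a positive definite matrix by the symmetric invertible $\Sigma_{\mathbf{X}}^{-1/2}$.

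Finally I would conclude by appealing to Brenier's uniqueness: the affine map $\mathbf{T}^{\textsc{OT}}(\mathbf{x}) = \mathbf{W}^{\textsc{OT}}\mathbf{x} + \mathbf{b}^{\textsc{OT}}$ is the gradient of the convex quadratic $\psi(\mathbf{x}) = \tfrac{1}{2}\mathbf{x}^{\top}\mathbf{W}^{\textsc{OT}}\mathbf{x} + (\mathbf{b}^{\textsc{OT}})^{\top}\mathbf{x}$ and pushes the source Gaussian to the target Gaussian, so it coincides (a.e.) with the unique Brenier/OT map. The main obstacle is really just the matrix-algebra verification of $\mathbf{W}\Sigma_{\mathbf{X}}\mathbf{W} = \Sigma_{\mathbf{Y}}$ together with the care needed when writing square roots of non-commuting matrices; the symmetrization trick of sandwiching by $\Sigma_{\mathbf{X}}^{\pm 1/2}$ is the key device that turns the otherwise nonlinear matrix equation into a tractable one.
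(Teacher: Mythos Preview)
Your proof is correct and takes a genuinely different route from the paper. The paper first treats the centered case by citing external references---Theorem~4 of Olkin (1982) for the $\mathcal{W}_2$ distance formula and Knott--Smith (1984) for the corresponding optimal map---and then handles nonzero means via the decomposition $\mathbb{E}\|\mathbf{X}-\mathbf{Y}\|^2 = \mathbb{E}\|(\mathbf{X}-\mu_{\mathbf{X}})-(\mathbf{Y}-\mu_{\mathbf{Y}})\|^2 + \|\mu_{\mathbf{X}}-\mu_{\mathbf{Y}}\|^2$. By contrast, you give a self-contained argument: invoke Brenier's theorem once, observe that gradients of convex quadratics are exactly affine maps with symmetric positive definite linear part, derive the two push-forward constraints, and verify the proposed $\mathbf{W}^{\textsc{OT}}$ by direct matrix algebra. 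Your approach is more elementary in that it avoids the detour through the Bures--Wasserstein distance and the Knott--Smith citation; the paper's approach has the side benefit of simultaneously recording the value of $\mathcal{W}_2$, which your argument does not produce (though the lemma as stated does not ask for it).
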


\begin{proof}
    Consider the centered Gaussians, i.e., $\mu_{\mathbf{X}} = \mu_{\mathbf{Y}}$ at first.
    Based on Theorem 4 of \citet{OLKIN1982257}, we have that $\mathcal{W}_{2} \left( \mathcal{N} (0, \Sigma_{\mathbf{X}}), \mathcal{N} (0, \Sigma_{\mathbf{Y}}) \right) = Tr ( \Sigma_{\mathbf{X}} + \Sigma_{\mathbf{Y}} - 2 \left( \Sigma_{\mathbf{X}}^{1/2} \Sigma_{\mathbf{Y}} \Sigma_{\mathbf{X}}^{1/2} \right)^{1/2} ) = \Vert \Sigma_{\mathbf{X}}^{1/2} - \Sigma_{\mathbf{Y}}^{1/2} \Vert_{F}^{2}$
    where $\Vert \cdot \Vert$ is the Frobenius norm.
    Correspondingly, \citet{Knott1984OnTO} derived the optimal transport map as $\mathbf{x} \mapsto \Sigma_{\mathbf{X}}^{-1/2} \left( \Sigma_{\mathbf{X}}^{-1/2} \Sigma_{\mathbf{Y}} \Sigma_{\mathbf{X}}^{1/2} \right)^{1/2} \Sigma_{\mathbf{X}}^{-1/2} \mathbf{x}.$
    
    Combining these results, we can extend the OT map formula of Gaussians with nonzero means as follows.
    Since
    $\mathbb{E} \Vert \mathbf{X} - \mathbf{Y} \Vert^{2} = \mathbb{E} \Vert \left( \mathbf{X} - \mu_{\mathbf{X}} \right) - \left( \mathbf{Y} - \mu_{\mathbf{Y}} \right) + \left( \mu_{\mathbf{X}} - \mu_{\mathbf{Y}} \right) \Vert^{2} = \mathbb{E} \Vert \left( \mathbf{X} - \mu_{\mathbf{X}} \right) - \left( \mathbf{Y} - \mu_{\mathbf{Y}} \right) \Vert^{2} + \Vert \mu_{\mathbf{X}} - \mu_{\mathbf{Y}} \Vert^{2},$
    the Wasserstein distance is given as $\mathbf{W}_{2} \left( \mathcal{N} (\mu_{\mathbf{X}}, \Sigma_{\mathbf{X}}), \mathcal{N} (\mu_{\mathbf{Y}}, \Sigma_{\mathbf{Y}}) \right) = \Vert \mu_{\mathbf{X}} - \mu_{\mathbf{Y}} \Vert^{2} + \Vert \Sigma_{\mathbf{X}}^{1/2} - \Sigma_{\mathbf{Y}}^{1/2} \Vert_{F}^{2},$
    and so the corresponding optimal transport map is also given as $\mathbf{x} \mapsto \Sigma_{\mathbf{X}}^{-1/2} \left( \Sigma_{\mathbf{X}}^{-1/2} \Sigma_{\mathbf{Y}} \Sigma_{\mathbf{X}}^{1/2} \right)^{1/2} \Sigma_{\mathbf{X}}^{-1/2} \mathbf{x} + \mu_{\mathbf{Y}} - \Sigma_{\mathbf{X}}^{-1/2} \left( \Sigma_{\mathbf{X}}^{-1/2} \Sigma_{\mathbf{Y}} \Sigma_{\mathbf{X}}^{1/2} \right)^{1/2} \Sigma_{\mathbf{X}}^{-1/2} \mu_{\mathbf{X}},$
    which completes the proof.
\end{proof}

\textbf{Proposition \ref{prop:cf}} (Counterfactual fairness and the OT map)
    For all $A$ having $(\mathbf{I}_{d}-A)^{-1},$
    $\mathbf{W}_s$ becomes $ \sigma_{s'} \sigma_{s}^{-1} \mathbf{I}_{d}.$
    That is, $\tilde{\mathbf{x}}_{s}^{\textsc{CF}} = \tilde{\mathbf{x}}_{s}^{\textsc{OT}}.$

\begin{proof}[Proof of Proposition \ref{prop:cf}]
    Once we observe $\mathbf{x}_{0},$ the randomness $\epsilon_{0}$ is observed as $\epsilon_{0} = B^{-1} \mathbf{x}_{0} - \mu_{0}.$
    By replacing the sensitive attribute on the randomness $\epsilon_{0},$ we obtain $ \sigma_{1}^{-1} (B^{-1}  \tilde{\mathbf{x}}_{0}^{\textsc{CF}} - \mu_{1} ) = \sigma_{0}^{-1} ( B^{-1} \mathbf{x}_{0} - \mu_{0} ). $
    Then, its counterfactual becomes $\tilde{\mathbf{x}}_{0}^{\textsc{CF}} = B \mu_{1} + \sigma_{1} \sigma_{0}^{-1} \mathbf{I}_{d} (\mathbf{x}_{0} - B \mu_{0}).$
    Then, we prove Proposition \ref{prop:cf} by showing the if and only if condition as follows.
    \begin{equation}
        \begin{split}
            \mathbf{W}_{0} & = ( \sigma_{0}^{2} B D B^{\top} )^{-1/2} \left( ( \sigma_{0}^{2} B D B^{\top} )^{1/2} \sigma_{1}^{2} B D B^{\top} ( \sigma_{0}^{2} B D B^{\top} )^{1/2} \right)^{1/2} ( \sigma_{0}^{2} B D B^{\top} )^{-1/2}
            \\
            & = \sigma_{1} \sigma_{0}^{-1} ( B D B^{\top} )^{-1/2} \left( ( B D B^{\top} )^{1/2} B D B^{\top} ( B D B^{\top} )^{1/2} \right)^{1/2} ( B D B^{\top} )^{-1/2} 
            \\
            & = \sigma_{1} \sigma_{0}^{-1} ( B D B^{\top} )^{-1/2} \left( ( B D B^{\top} )^{2} \right)^{1/2} ( B D B^{\top} )^{-1/2} 
            \\
            & = \sigma_{1} \sigma_{0}^{-1} \mathbf{I}_{d}.
        \end{split}
    \end{equation}
    The same result can be done for $\mathbf{x}_{1}.$
    Hence, we conclude $\mathbf{W}_{s} = \sigma_{s'} \sigma_{s}^{-1} \mathbf{I}_{d}.$
\end{proof}

\clearpage
\section{Disadvantage of high transport cost}\label{sec:highcost}

This section presents an example of two completely different group-fair models where one is unreasonable and the other is reasonable, particularly in terms of subset fairness.
This example suggests that \textit{not all group-fair models are acceptable}, thereby emphasizing the necessity of finding group-fair models corresponding to favorable implicit transport maps.

Suppose that the distribution of the input variable is given as
$\mathbf{X}|S=s \sim \textup{Unif}(0, 1),$ for $ s \in \{0, 1\}.$
Consider the following two classification models:
$\hat{f} (\mathbf{x},s) = \frac{\textup{sign} \left( 2 \mathbf{x} - 1 \right) \left( 1 - 2s \right) + 1}{2} $ and $\tilde{f} (\mathbf{x},s) = \frac{\textup{sign} \left( 2 \mathbf{x} - 1 \right) + 1}{2}.$

\begin{wrapfigure}{r}{0.53\linewidth}
    \vskip -0.2in
    \centering
    \includegraphics[width=0.5\textwidth]{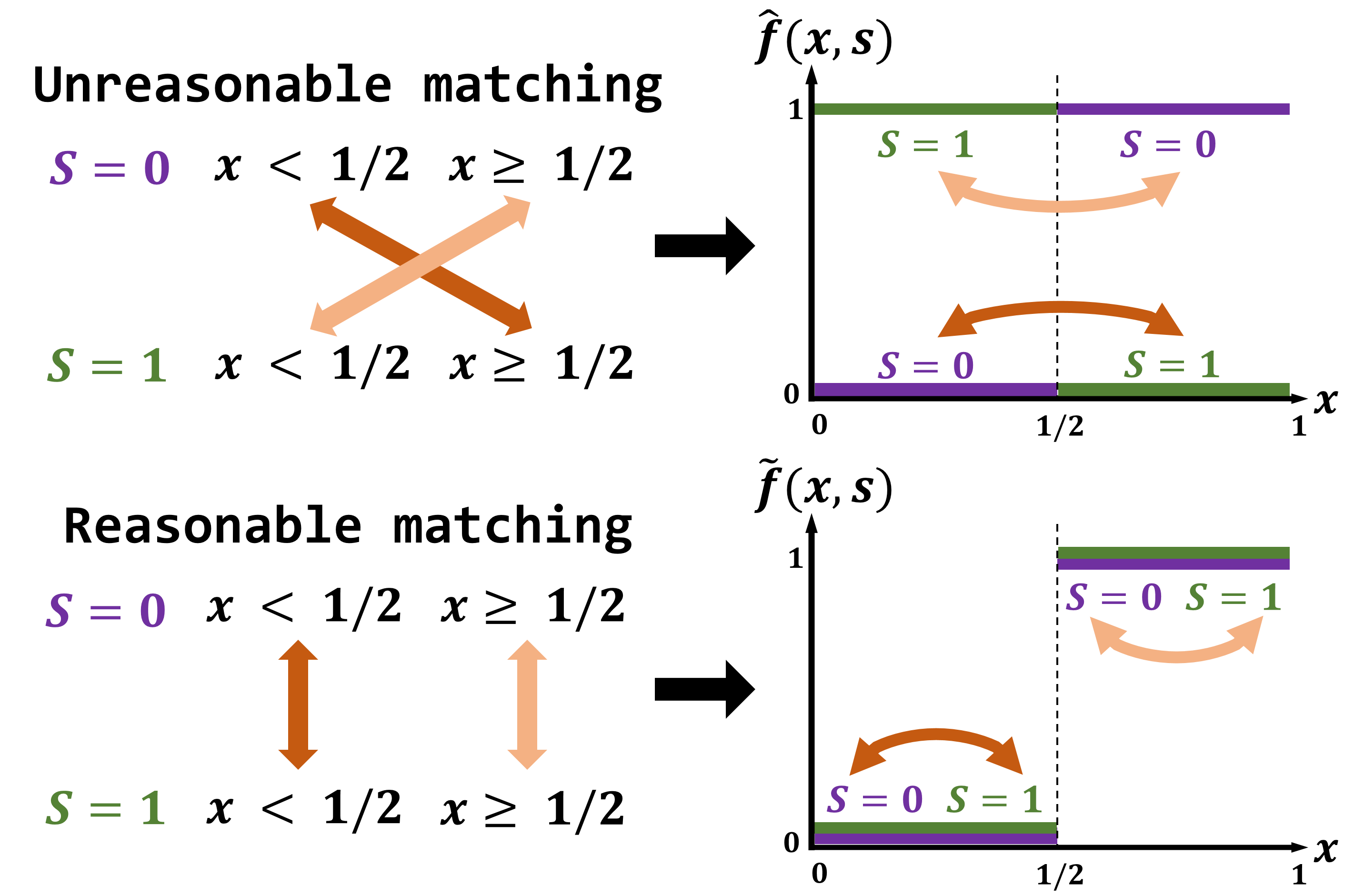}
    \vskip -0.1in
    \caption{
    (Top) A group-fair model with the risk of discrimination on subsets.
    (Bottom) A group-fair model without the risk of discrimination on subsets.
    }
    \label{fig:example}
    \vskip -0.2in
\end{wrapfigure}

It is clear that both $\hat{f}$ and $\tilde{f}$ are perfectly fair, i.e., $\mathcal{P}_{\hat{f}_0} = \mathcal{P}_{\hat{f}_1}$ and $\mathcal{P}_{\tilde{f}_0} = \mathcal{P}_{\tilde{f}_1}$.
However, $\hat{f}$ has a notable unfairness issue in its treatments of individuals within the subset $\{ \mathbf{x} \ge 1/2 \}$ (as well as $\{ \mathbf{x} < 1/2 \}$);
for when $\mathbf{x}>1/2,$ $\hat{f}$ assigns label $1$ for all individuals of $s = 0$ while it assigns label $0$ for all individuals of $s = 1.$ 
This indicates that $\hat{f}$ discriminates against individuals in the subset $\{ \mathbf{x}\ge 1/2 \}$ (and also $\{ \mathbf{x}< 1/2 \}$).
In contrast, $\tilde{f}$ does not exhibit such undesirable discrimination against the subsets.
Hence, we can say that $\tilde{f}$ has less discrimination on the subsets than $\hat{f}.$
Figure \ref{fig:example} provides a comparative illustration of $\hat{f}$ and $\tilde{f}.$

The observed discrimination of $\hat{f}$ on subsets can be attributed to the unreasonable {fair matching function} of $\hat{f}.$
It turns out that the {fair matching function} of $\hat{f}$ is $\mathbf{T}^{\hat{f}} ( \mathbf{x}) = \mathbf{x} - \frac{\textup{sign} \left( (2\mathbf{x} - 1) (1 - 2s) \right)}{2} .$
This function matches an individual in $\{\mathbf{x} < \frac{1}{2}, S = s\}$ with one in $\{\mathbf{x} \ge \frac{1}{2}, S = s'\},$ who are far apart from each other. 
In contrast, the {fair matching function} of $\tilde{f}$ is $\mathbf{T}^{\tilde{f}} ( \mathbf{x} ) = \mathbf{x}.$
This example emphasizes the need of group-fair models whose {fair matching function} have low transport costs.

\section{The Kantorovich problem}\label{appendix:kanto}

As shortly introduced in Section \ref{sec:related}, the Kantorovich problem is to find the optimal coupling (i.e., joint distribution) between two given distributions.
It can be mathematically expressed as the following.
For two given distributions $\mathcal{Q}_{1}$ and $\mathcal{Q}_{2},$
$
\inf_{\pi \in \Pi(\mathcal{Q}_{1}, \mathcal{Q}_{2})} \mathbb{E}_{\mathbf{X}, \mathbf{Y} \sim \pi} \left( c(\mathbf{X}, \mathbf{Y}) \right)
$ 
where $\Pi(\mathcal{Q}_{1}, \mathcal{Q}_{2})$ is the set of all joint measures with marginals $\mathcal{Q}_{1}$ and $\mathcal{Q}_{2}.$
Let $c$ be the $L_{2}$ cost function.

For given two empirical distributions on $\mathcal{D}_{0} = \{ \mathbf{x}_{i}^{(0)} \}_{i=1}^{n_{0}}$ and $\mathcal{D}_{1} = \{ \mathbf{x}_{j}^{(1)}\}_{j=1}^{n_{1}},$
we first define the cost matrix as
$\mathbf{C} := [c_{i, j}] \in \mathbb{R}_{+}^{n_{0} \times n_{1}},$ where $ c_{i, j} = \Vert \mathbf{x}_{i}^{(0)} - \mathbf{x}_{j}^{(1)} \Vert^{2}.$
Then, the solution of the Kantorovich problem, i.e., optimal joint distribution between the two distributions, is defined by the coupling matrix $\Gamma = [\gamma_{i, j}] \in \mathbb{R}_{+}^{n_{0} \times n_{1}},$ which is the minimizer of the following objective:
\begin{equation}
    \min_{\Gamma} \Vert \mathbf{C} \odot \Gamma \Vert_{1}
    = \min_{\gamma_{i, j}} c_{i, j} \gamma_{i, j}
    \textup{ s.t. }
    \gamma_{i, j} \ge 0,
    \sum_{i=1}^{n_{0}} \gamma_{i, j} = \frac{1}{n_{1}},
    \sum_{j=1}^{n_{1}} \gamma_{i, j} = \frac{1}{n_{0}}.
\end{equation}

In fact, the Kantorovich problem can be solved by linear programming \citep{Kantorovich2006OnTT, villani2008optimal}.
To solve the linear program, we use practical implementation such as \texttt{POT} library in \texttt{Python}.


\clearpage
\section{Implementation details}\label{appendix:D}

In this section, we provide detailed descriptions for the implementation of the experiments.

\subsection{Datasets}\label{sec:datasets-appendix}

First, the URLs of these datasets are provided.

\begin{itemize}

    \item 
    \textsc{Adult}:
    the Adult income dataset \citep{Dua:2019} can be downloaded from the UCI repository\footnote{https://archive.ics.uci.edu/ml/datasets/adult}.

    \item 
    \textsc{German}:
    the German credit dataset \citep{Dua:2019} can be downloaded from the UCI repository\footnote{https://archive.ics.uci.edu/ml/datasets/statlog+(german+credit+data)}.

    \item 
    \textsc{Dutch}:
    the Dutch census dataset can be downloaded from the public Github of \citet{Le_Quy_2022} \footnote{https://github.com/tailequy/fairness\_dataset/tree/main/experiments/data/dutch.csv}.

    \item 
    \textsc{Bank}:
    the Bank marketing dataset can be downloaded from the UCI repository\footnote{https://archive.ics.uci.edu/dataset/222/bank+marketing}.

\end{itemize}

The following Table \ref{table:datasets} describes the basic information of the four datasets.

\begin{table}[ht]
    \vskip -0.1in
    \footnotesize
    \centering
    \caption{The description of the real benchmark tabular datasets: \textsc{Adult}, \textsc{German}, \textsc{Bank}, and \textsc{Dutch}.
    $\mathbf{X}, S, Y$ and $d$ denote the input features, the sensitive attribute, the target label information, and the dimension of $\mathbf{X},$ respectively. 
    Train/Test sizes are the number of samples in the training and test datasets, respectively.}
    \label{table:datasets}
    \begin{tabular}{c||c|c|c||c|c}
        \toprule
         Dataset & Variable & Description & Dataset & Variable & Description \\
         \midrule
         \multirow{6}{*}{\textsc{Adult}} & $\mathbf{X}$ & Personal attributes & \multirow{6}{*}{\textsc{German}} & $\mathbf{X}$ & Personal attributes \\
         & $S$ & Gender & & $S$ & Gender \\
         & $Y$ & Outcome over $\$50k$ & & $Y$ & High credit score \\
         & $d$ & 101 & & $d$ & 60 \\
         & Train/Test size & 30,136/15,086 & & Train/Test data size & 800/200 \\ 
         \midrule
         \multirow{6}{*}{\textsc{Bank}} & $\mathbf{X}$ & Personal attributes & \multirow{6}{*}{\textsc{Dutch}} & $\mathbf{X}$ & Personal attributes \\
         & $S$ & Binarized age & & $S$ & Gender\\
         & $Y$ & Subscribing a term deposit & & $Y$ & High-level occupation \\
         & $d$ & 57 & & $d$ & 58 \\
         & Train/Test size & 24,390/6,098 & & Train/Test data size & 48,336/12,084 \\ 
         \bottomrule
    \end{tabular}
    \vskip -0.1in
\end{table}

Second, we describe pre-processing method of the datasets used.
For \textsc{Adult}, \textsc{German}, and \textsc{Bank} datasets, we follow the pre-processing of the implementation of IBM's AIF360 \citep{aif360-oct-2018} \footnote{https://aif360.readthedocs.io/en/stable/}.
For \textsc{Dutch} dataset, we follow the pre-processing of \citet{Le_Quy_2022}'s Github\footnote{https://github.com/tailequy/fairness\_dataset/tree/main/experiments/data/}.
Basically, continuous input variables are normalized by min-max scaling and categorical input variables are one-hot encoded.
We set batch size as 1024, 200, 1024, and 512 for \textsc{Adult}, \textsc{German}, \textsc{Dutch}, and \textsc{Bank} datasets, respectively.

\subsection{Algorithms}\label{sec:algorithms-appendix}

This section provides more detailed descriptions of the baseline algorithms used in our experiments.

\begin{itemize}
    \item Reduction \citep{pmlr-v80-agarwal18a}:
    This algorithm is an in-processing method that learns a fair classifier with the lowest empirical fairness level $\Delta \textup{DP}.$
    To implement this method for MLP model architecture, we employ FairTorch\footnote{\url{https://github.com/wbawakate/fairtorch}}.
    It minimizes cross-entropy + $\lambda \cdot \textup{Reduction regularizer}$ for a given $\lambda > 0.$
    
    \item Reg \citep{donini2018empirical, chuang2021fair}:
    This method is a regularizing approach that minimizes cross-entropy + $\lambda \cdot \Delta \overline{\textup{DP}}^{2}$ for a given $\lambda > 0.$
    In \citet{chuang2021fair}, they call this algorithm GapReg.
    This is also similar to the approach of \citet{donini2018empirical} in the sense that the model is learned with a constraint having a given level of $\Delta \overline{\textup{DP}}.$

    \item Adv \citep{10.1145/3278721.3278779}:
    This algorithm is an in-processing method that regularizes the model outputs with an adversarial network so that the adversarial network is learned to predict the sensitive attribute using the model outputs as the inputs.
    It minimizes cross-entropy + $\lambda \cdot \textup{Adversarial loss}$ for a given $\lambda > 0.$
\end{itemize}

Note that Reduction and Adv are ones of the most popular in-processing algorithms, as widely-used libraries AIF360 \citep{aif360-oct-2018} and Fairlearn \citep{bird2020fairlearn} provide the usage and implementation of the two algorithms.
Reg is a vanilla approach of adding the regularization term in the loss function to learn the most accurate model among models satisfying a given level of group fairness.

We basically train models with various fairness levels by controlling the Lagrangian multiplier $\lambda.$
The values are presented in the following table.

\begin{table*}[ht]
    \footnotesize
    \centering
    \caption{Hyper-parameters used for controlling fairness levels for each algorithm.}
    \begin{tabular}{c||c}
        \toprule
        Algorithm & $\lambda$ \\
        \midrule
        Reduction & 
        \footnotesize $\{0.5, 1.0, 2.0, 3.0, 4.0, 5.0, 6.0, 8.0, 10.0, 20.0, 30.0, 40.0, 50.0, 60.0, 80.0, 100.0, 150.0, 200.0, 300.0, 500.0\}$
        \\
        Reg & 
        \footnotesize $\{0.1, 0.2, 0.3, 0.4, 0.5, 0.6, 0.7, 0.8, 0.9, 1.0, 1.2, 1.5, 1.8, 2.0, 3.0, 5.0, 10.0, 20.0, 50.0, 100.0\}$
        \\
        Adv & 
        \footnotesize $\{0.1, 0.2, 0.3, 0.4, 0.5, 0.6, 0.7, 0.8, 0.9, 1.0, 1.1, 1.2, 1.3, 1.5, 2.0, 3.0, 5.0, 10.0, 15.0, 20.0, 30.0, 50.0, 100.0\}$
        \\
        FTM & 
        \footnotesize $\{0.1, 0.2, 0.3, 0.4, 0.5, 0.6, 0.7, 0.8, 0.9, 1.0, 1.1, 1.2, 1.3, 1.5, 2.0, 3.0, 5.0, 10.0\}$
        \\
        \bottomrule
    \end{tabular}
\end{table*}

The Adam optimizer \citep{https://doi.org/10.48550/arxiv.1412.6980} with an initial learning rate of $0.001$ is used, and the learning rate is scheduled by multiplying $0.95$ at each epoch.


\subsection{Pseudo-code}\label{sec:code-appendix}

\definecolor{commentcolor}{RGB}{110,154,155}   
\newcommand{\PyComment}[1]{\ttfamily\textcolor{commentcolor}{\# #1}}  
\newcommand{\PyCode}[1]{\ttfamily\textcolor{black}{#1}} 

Here, we provide a \texttt{Pytorch}-style psuedo code of calculating the MDP constraint in FTM.

\begin{algorithm}[h]
    \footnotesize
    \PyComment{xs, xt: input vectors from the source, target distribution, respectively.} \\
    \PyComment{model: a classifier to be trained} \\
    \PyCode{import ot} \\
    \PyComment{The matching constraint: matching with the OT map} \\
    \PyCode{weight\_s = torch.ones(size=(xs.size(0), )) / xs.size(0)} \\
    \PyCode{weight\_t = torch.ones(size=(xt.size(0), )) / xt.size(0)} 
    \PyComment{identical to weight\_s}
    \\
    \PyCode{M = ot.dist(xs, xt)} \\
    \PyCode{G = ot.emd(weight\_s, weight\_t, M)} \\
    \PyCode{matched\_xs = xt[torch.argmax(G, dim=1)]} \\
    \PyCode{output, matched\_output = model(xs), model(matched\_xs)} \\
    \PyCode{FTM\_REG = (output - matched\_output).abs().mean()}
    \caption{\texttt{PyTorch}-style pseudo-code of calculating the MDP constraint in FTM.}
    \label{alg:FTM}
\end{algorithm}

\clearpage

\section{Auxillary experimental results}\label{sec:results-appendix}

In this section, we provide auxillary experimental results that are not displayed in the main body.

\subsection{Fairness-prediction trade-off (Section \ref{sec:exp-tradeoff})}

Figure \ref{fig:tradeoff-appen} shows the trade-offs between the fairness levels with respect to $\Delta \textup{DP}, \Delta \overline{\textup{DP}}$ and classification accuracy.

\begin{figure*}[ht]
    \centering
    \includegraphics[width=0.24\textwidth]{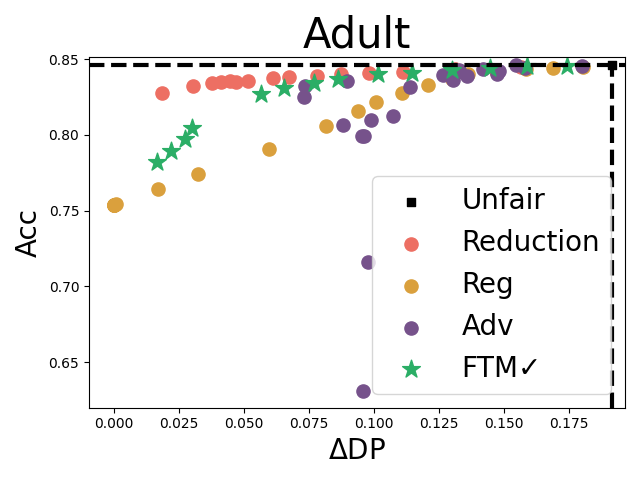}
    \includegraphics[width=0.24\textwidth]{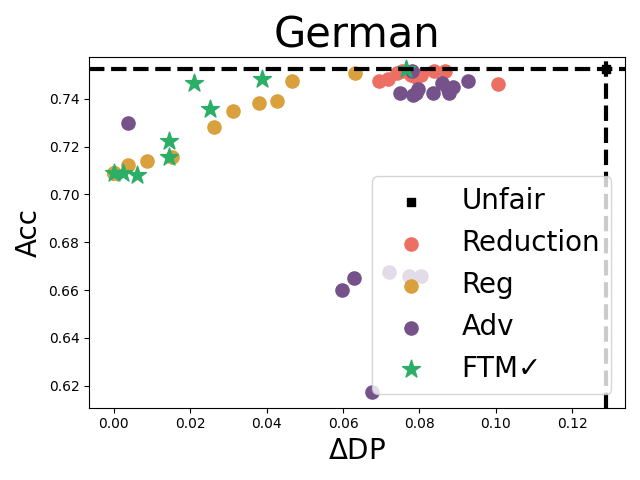}
    \includegraphics[width=0.24\textwidth]{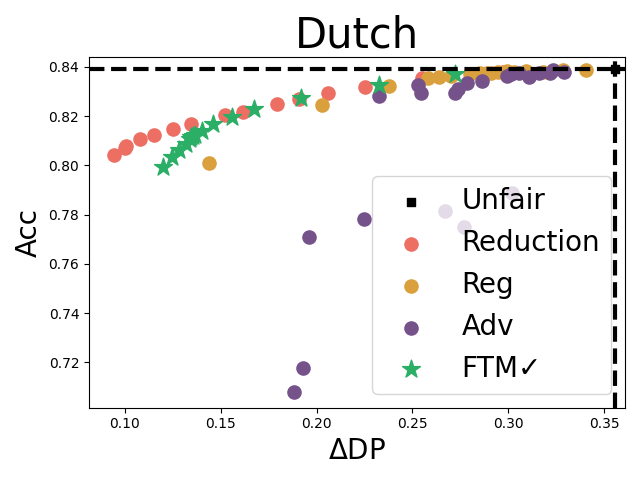}
    \includegraphics[width=0.24\textwidth]{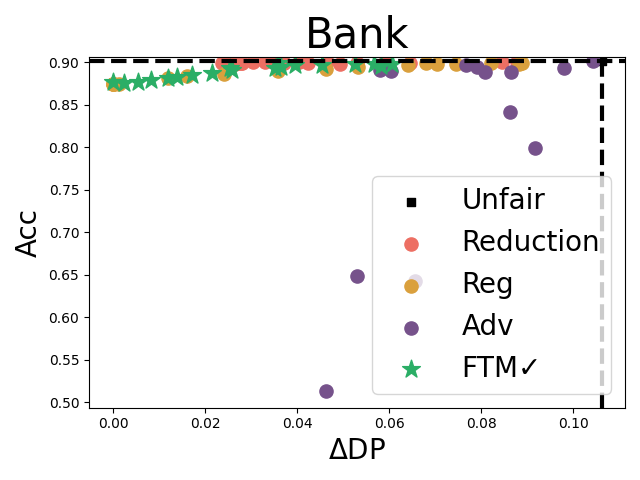}
    \\
    \includegraphics[width=0.24\textwidth]{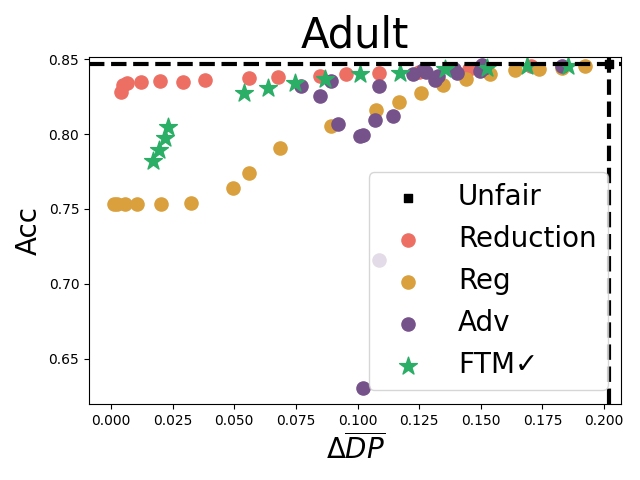}
    \includegraphics[width=0.24\textwidth]{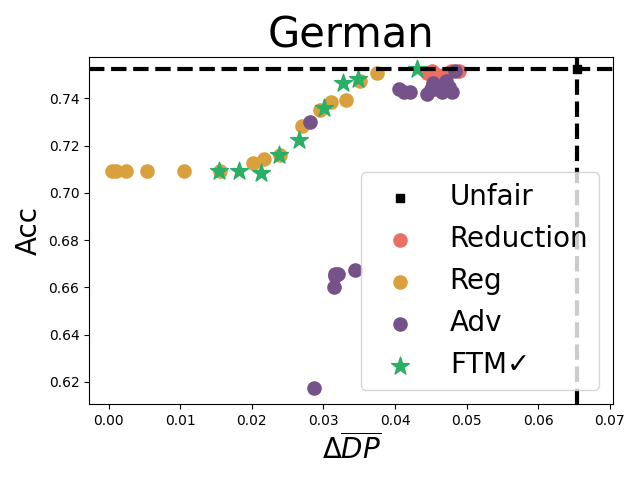}
    \includegraphics[width=0.24\textwidth]{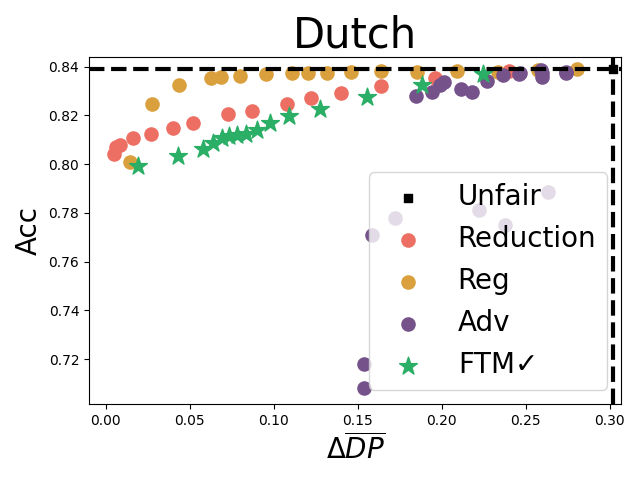}
    \includegraphics[width=0.24\textwidth]{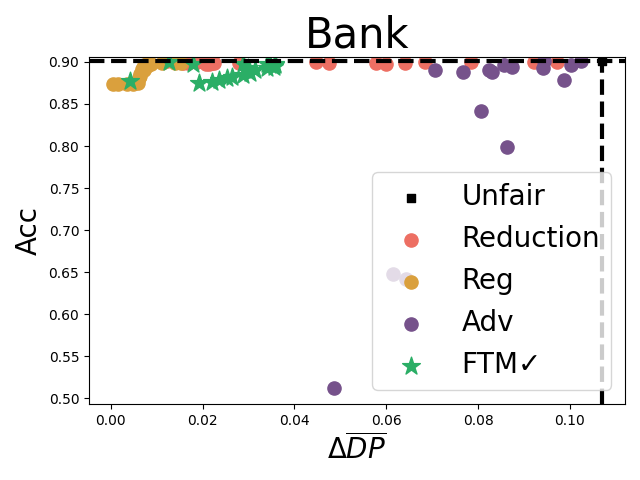}
    \caption{
    \textbf{Fairness-prediction trade-offs}:
    (Left to right) \textsc{Adult}, \textsc{German}, \textsc{Dutch}, \textsc{Bank}.
    (Top to bottom) 
    $\Delta \textup{DP}$ vs. \texttt{Acc}, $\Delta \overline{\textup{DP}}$ vs. \texttt{Acc}.
    }
    \label{fig:tradeoff-appen}
\end{figure*}

\subsection{Improvement in subset fairness (Section \ref{sec:exp-subsets})}

Here, we provide experimental results showing fairness levels on subsets defined by input variables.
Table \ref{table:subgroup-append} and \ref{table:subgroup-2-append} are copies of Table \ref{table:subgroup} with standard errors.

\begin{table}[h]
    \scriptsize
    \centering
    \caption{
    \textbf{Fairness on subsets defined by the input variable age}:
    Fairness levels on subsets defined by the input variable age on \textsc{German} dataset under a given $\Delta \overline{\textup{DP}} = 0.045$ with standard errors (s.e.).
    }
    \label{table:subgroup-append}
    \begin{tabular}{c|c||c|c|c|c}
        \toprule
        \multicolumn{2}{c||}{Algorithm} & Reduction & Reg & Adv & FTM $\checkmark$
        \\
        \midrule
        \midrule
        \multirow{3}{*}{High age} & $\Delta \textup{DP}$ (s.e.) & 0.073 (0.015) & 0.077 (0.013) & \underline{0.048} (0.020) & \textbf{0.045} (0.021)
        \\
        & $\Delta \overline{\textup{DP}}$ (s.e.) & 0.049 (0.006) & 0.029 (0.008) & \underline{0.028} (0.012) & \textbf{0.026} (0.006)
        \\
        & $\Delta \textup{WDP}$ (s.e.) & 0.053 (0.005) & \underline{0.039} (0.003) & 0.042 (0.008) & \textbf{0.038} (0.003)
        \\
        \midrule
        \multirow{3}{*}{Low age} & $\Delta \textup{DP}$ (s.e.) & 0.118 (0.035) & \underline{0.116} (0.037) & 0.122 (0.047) & \textbf{0.077} (0.032)
        \\
        & $\Delta \overline{\textup{DP}}$ (s.e.) & \textbf{0.047} (0.015) & \underline{0.050} (0.009) & 0.053 (0.017) & \textbf{0.047} (0.007)
        \\
        & $\Delta \textup{WDP}$ (s.e.) & \underline{0.058} (0.011) & 0.059 (0.007) & 0.061 (0.015) & \textbf{0.054} (0.006)
        \\
        \bottomrule
    \end{tabular}
\end{table}


\begin{table}[h]
    \scriptsize
    \centering
    \caption{
    \textbf{Fairness on subsets defined by the input variable marital status}:
    Fairness levels on subsets defined by the input variable marital status on \textsc{Dutch} dataset under a given $\Delta \overline{\textup{DP}} = 0.12$ with standard errors (s.e.).
    }
    \label{table:subgroup-2-append}
    \vskip 0.1in
    \begin{tabular}{c|c||c|c|c|c}
        \toprule
        \multicolumn{2}{c||}{Algorithm} & Reduction & Reg & Adv & FTM $\checkmark$
        \\
        \midrule
        \midrule
        \multirow{3}{*}{Married} & $\Delta \textup{DP}$ (s.e.) & 0.258 (0.005) & 0.372 (0.003) & \underline{0.237} (0.083) & \textbf{0.204} (0.003)
        \\
        & $\Delta \overline{\textup{DP}}$ (s.e.) & 0.182 (0.002) & \underline{0.164} (0.001) & 0.187 (0.073) & \textbf{0.152} (0.002)
        \\
        & $\Delta \textup{WDP}$ (s.e.) & 0.183 (0.002) & \underline{0.172} (0.001) & 0.193 (0.071) & \textbf{0.152} (0.002)
        \\
        \midrule
        \multirow{3}{*}{Not married} & $\Delta \textup{DP}$ (s.e.) & \textbf{0.061} (0.007) & 0.131 (0.006) & 0.095 (0.038) & \underline{0.068} (0.005)
        \\
        & $\Delta \overline{\textup{DP}}$ (s.e.) & \underline{0.045} (0.002) & 0.062 (0.003) & 0.098 (0.035) & \textbf{0.036} (0.003)
        \\
        & $\Delta \textup{WDP}$ (s.e.) & \textbf{0.045} (0.003) & 0.072 (0.002) & 0.098 (0.034) & \textbf{0.045} (0.005)
        \\
        \bottomrule
    \end{tabular}
\end{table}
\clearpage

\subsection{An additional advantage of using the marginal OT map: reducing the risk of self-fulfilling prophecy}

We compare the risks of discrimination in the context of self-fulfilling prophecy in \citet{10.1145/2090236.2090255}, a critical limitation that can arise when focusing solely on group fairness:
\textit{unqualified individuals with relatively low scores can be chosen to be qualified, while other individuals with relatively high scores are chosen to be unqualified.}
To quantify the risk of self-fulfilling prophecy, we assume that the unfair model is optimal for predicting the true score of each individual.
We consider the following two evaluation approaches under this assumption.
For the transport map used in MDP constraint, we choose the marginal OT map.

\textbf{(Evaluation 1)}
The first measure for the risk of self-fulfilling prophecy is the \textit{Spearman's rank correlation} between unfair and fair prediction scores at each protected group: a higher rank correlation implies a lower risk of self-fulfilling prophecy.
Table \ref{table:interpret} shows that FTM has lower risks of suffering from self-fulfilling prophecy, in most cases.

\begin{table}[h]
    \centering
    \scriptsize
    \caption{
    Spearman's correlation coefficients between the scores of the unfair model and group-fair models under fixed levels of $\Delta \overline{\textup{DP}}$ with standard errors (s.e.).
    \textbf{Bold} faces are the best ones, and \underline{underlined} ones are the second bests.
    }
    \label{table:interpret}
    \vskip 0.1in
    \begin{tabular}{c||c|c|c|c}
        \toprule
        Dataset & \multicolumn{2}{c|}{\textsc{Adult}} & \multicolumn{2}{c}{\textsc{German}} 
        \\
        \midrule
        $\Delta \overline{\textup{DP}}$  & \multicolumn{2}{c|}{$0.10$} & \multicolumn{2}{c}{$0.05$} 
        \\
        \midrule
        \midrule
        Sensitive attribute $S$ & 0 & 1 & 0 & 1 
        \\
        \midrule
        Reduction (s.e.) & \underline{0.935} (0.006) & \underline{0.987} (0.001) & \underline{0.996} (0.001) & \underline{0.997} (0.001) 
        \\
        Reg (s.e.) & 0.762 (0.087) & 0.806 (0.084) & \textbf{0.997} (0.000) & \textbf{0.998} (0.000) 
        \\
        Adv (s.e.) & 0.876 (0.003) & 0.979 (0.001) & 0.986 (0.009) & 0.986 (0.010) 
        \\
        FTM $\checkmark$ (s.e.) & \textbf{0.968} (0.003) & \textbf{0.989} (0.001) & 0.993 (0.002) & 0.995 (0.001) 
        \\
        \bottomrule
    \end{tabular}
    \begin{tabular}{c||c|c|c|c}
        \toprule
        Dataset & \multicolumn{2}{c|}{\textsc{Dutch}} & \multicolumn{2}{c}{\textsc{Bank}} 
        \\
        \midrule
        $\Delta \overline{\textup{DP}}$  & \multicolumn{2}{c|}{$0.01$} & \multicolumn{2}{c}{$0.02$} 
        \\
        \midrule
        \midrule
        Sensitive attribute $S$ & 0 & 1 & 0 & 1
        \\
        \midrule
        Reduction (s.e.) & \underline{0.940} (0.001) & 0.922 (0.001) & \underline{0.958} (0.010) & \underline{0.978} (0.005)
        \\
        Reg (s.e.) & 0.872 (0.003) & \underline{0.972} (0.003) & 0.784 (0.031) & 0.974 (0.003)
        \\
        Adv (s.e.) & 0.659 (0.171) & 0.693 (0.185) & 0.603 (0.207) & 0.505 (0.238)
        \\
        FTM $\checkmark$ (s.e.) & \textbf{0.973} (0.002) & \textbf{0.991} (0.000) & \textbf{0.964} (0.007) & \textbf{0.979} (0.004)
        \\
        \bottomrule
    \end{tabular}
\end{table}

\clearpage

\textbf{(Evaluation 2)}
For the second approach, we employ 2 × 2 confusion matrices to compare the predicted labels of the unfair and the group-fair models.
In specific, in the privileged group $S = 1,$ individuals predicted as $\hat{Y} = 0$ (i.e., unqualified) by the unfair model but ̂$\hat{Y} = 1$ (i.e., chosen to be qualified) by the group-fair model are considered as undesirable instances in the context of self-fulfilling prophecy.
Likewise, in the unprivileged group $S = 0,$ individuals predicted as $\hat{Y} = 1$ by the unfair model but ̂$\hat{Y} = 0$ by the group-fair model are similarly considered undesirable.

That is, for the risk of self-fulfilling prophecy, we count \textit{the number of individuals whose prediction is undesirably flipped} (i.e., \# of $\hat{Y} = 0 \textup{ (Unfair)} \rightarrow \hat{Y} = 1 \textup{ (Fair)}$ for $S = 1,$ and \# of $\hat{Y} = 1 \textup{ (Unfair)} \rightarrow \hat{Y} = 0 \textup{ (Fair)}$ for $S = 0$).
Table \ref{table:interpret3} shows that the undesirable treatments of FTM are less observed than those of baseline methods, in most cases.

\newcommand{\largecircled}[1]{%
    \tikz[baseline=(char.base)]{
        \node[shape=circle,draw,inner sep=1pt, minimum size=1em] (char) {#1};
    }
}

\begin{table*}[ht]
    \scriptsize
    \centering
    \caption{ 
    $2\times 2$ confusion matrices comparing the predicted labels of the unfair model and the group-fair models.
    The encircled numbers are the counts of undesirable instances.
    \textbf{Bold} faces are the best ones and \underline{underlined} ones are the second bests.
    }
    \label{table:interpret3}
    \vskip 0.2in
    \begin{tabular}{cc|cc|cc|cc|cc}
        \toprule
        \multicolumn{2}{c|}{Dataset ($\Delta \overline{\textup{DP}}$)} & \multicolumn{2}{c|}{\textsc{Adult} (0.05)} & \multicolumn{2}{c|}{\textsc{German} (0.05)} & \multicolumn{2}{c|}{\textsc{Dutch} (0.15)} & \multicolumn{2}{c}{\textsc{Bank} (0.04)}
        \\
        \midrule
        &  & \multicolumn{8}{c}{Unfair}
        \\
        \multicolumn{2}{c|}{$S = 1$} & $\hat{Y} = 0$ & $\hat{Y} = 1$ & $\hat{Y} = 0$ & $\hat{Y} = 1$ & $\hat{Y} = 0$ & $\hat{Y} = 1$ & $\hat{Y} = 0$ & $\hat{Y} = 1$ 
        \\
        \midrule
        \midrule
        \multirow{2}{*}{Reduction} & $\hat{Y} = 0$ & 6124 & 629 & 98 & 1 & 2170 & 662 & 5220 & 62
        \\
        & $\hat{Y} = 1$ & \largecircled{22} & 1701 & \largecircled{\underline{1}} & 32 & \largecircled{\underline{8}} & 3158 & \largecircled{62} & 579
        \\
        \midrule 
        \multirow{2}{*}{Reg} & $\hat{Y} = 0$ & 6144 & 2198 & 99 & 8 & 2164 & 311 & 5265 & 229
        \\
        & $\hat{Y} = 1$ & \largecircled{\underline{2}} & 132 & \largecircled{\textbf{0}} & 25 & \largecircled{14} & 3509 & \largecircled{\underline{17}} & 412
        \\
        \midrule 
        \multirow{2}{*}{Adv} & $\hat{Y} = 0$ & 6121 & 977 & 95 & 0 & 2152 & 1127 & 5255 & 516
        \\
        & $\hat{Y} = 1$ & \largecircled{25} & 1353 & \largecircled{4} & 33 & \largecircled{26} & 2693 & \largecircled{27} & 125
        \\
        \midrule
        \multirow{2}{*}{FTM $\checkmark$} & $\hat{Y} = 0$ & 6146 & 1364 & 99 & 1 & 2174 & 862 & 5279 & 397
        \\
        & $\hat{Y} = 1$ & \largecircled{\textbf{0}} & 966 & \largecircled{\textbf{0}} & 32 & \largecircled{\textbf{4}} & 2958 & \largecircled{\textbf{3}} & 244
        \\
        \midrule
        &  & \multicolumn{8}{c}{Unfair}
        \\
        \multicolumn{2}{c|}{$S = 0$} & $\hat{Y} = 0$ & $\hat{Y} = 1$ & $\hat{Y} = 0$ & $\hat{Y} = 1$ & $\hat{Y} = 0$ & $\hat{Y} = 1$ & $\hat{Y} = 0$ & $\hat{Y} = 1$ 
        \\
        \midrule
        \midrule
        \multirow{2}{*}{Reduction} & $\hat{Y} = 0$ & 3486 & \largecircled{\underline{13}} & 54 & \largecircled{\textbf{3}} & 4137 & \largecircled{\textbf{0}} & 129 & \largecircled{\underline{14}}
        \\
        & $\hat{Y} = 1$ & 262 & 341 & 0 & 11 & 226 & 1723 & 1 & 31
        \\
        \midrule 
        \multirow{2}{*}{Reg} & $\hat{Y} = 0$ & 3748 & \largecircled{104} & 54 & \largecircled{\underline{4}} & 4300 & \largecircled{13} & 128 & \largecircled{45}
        \\
        & $\hat{Y} = 1$ & 0 & 250 & 0 & 10 & 63 & 1710 & 2 & 0
        \\
        \midrule 
        \multirow{2}{*}{Adv} & $\hat{Y} = 0$ & 3655 & \largecircled{52} & 53 & \largecircled{\textbf{3}} & 3917 & \largecircled{85} & 125 & \largecircled{34}
        \\
        & $\hat{Y} = 1$ & 93 & 302 & 1 & 11 & 446 & 1638 & 5 & 11
        \\
        \midrule
        \multirow{2}{*}{FTM $\checkmark$} & $\hat{Y} = 0$ & 3719 & \largecircled{\textbf{11}} & 54 & \largecircled{\textbf{3}} & 4217 & \largecircled{\underline{6}} & 120 & \largecircled{\textbf{10}}
        \\
        & $\hat{Y} = 1$ & 29 & 343 & 0 & 11 & 146 & 1717 & 10 & 35
        \\
        \bottomrule
    \end{tabular}
\end{table*}


\end{document}